%%%%%%%% ICML 2025 EXAMPLE LATEX SUBMISSION FILE %%%%%%%%%%%%%%%%%

\documentclass{article}

% \makeatletter
% %%%%%%%% ToC fix %%%%%%%%
% \newcommand*\RestoreAddContentsLine{%
%   \def\addcontentsline##1##2##3{%
%     \addtocontents{##1}{\protect\contentsline{##2}{##3}{\thepage}{\Hy@currentHref}{}}}}
% %%%%%% END ToC fix %%%%%%
% \makeatother

% Recommended, but optional, packages for figures and better typesetting:
\usepackage{microtype}
\usepackage{graphicx}
\usepackage{booktabs} % for professional tables
\usepackage{caption}
\usepackage{subcaption}
\usepackage{changepage}
\usepackage{fancyvrb}
\usepackage{color}
\usepackage[svgnames]{xcolor}
\usepackage{soul}
\usepackage{xfrac}
\usepackage{listings}
\lstset{
    columns=flexible,
    % breaklines=true,
    basicstyle=\ttfamily\footnotesize,
    breakatwhitespace=false,
    fancyvrb=true,
    breaklines=true,
    keepspaces=false,
    showspaces=false,
    showstringspaces=false,
    showtabs=false,
    numbersep=0pt,
    breakautoindent=false,
    breakindent=0ex
}

% hyperref makes hyperlinks in the resulting PDF.
% If your build breaks (sometimes temporarily if a hyperlink spans a page)
% please comment out the following usepackage line and replace
% \usepackage{icml2025} with \usepackage[nohyperref]{icml2025} above.

% Attempt to make hyperref and algorithmic work together better:

% Use the following line for the initial blind version submitted for review:
% \usepackage{icml2025}

% If accepted, instead use the following line for the camera-ready submission:
\usepackage[accepted, nohyperref]{icml2025}
\usepackage{hyperref}

% For theorems and such
\usepackage{amsmath}
\usepackage{amssymb}
\usepackage{mathtools}
\usepackage{amsthm}

%%%%%% Antonio Box %%%%%%%

\usepackage[most]{tcolorbox}
\tcbset{
    boxsep=0mm,
    boxrule=0pt,
    % colframe=black,
    arc=0mm,
    outer arc=0pt,
    left=1pt,
    right=1pt,
    top=1pt,
    bottom=1pt
}

% Define light pink box
\definecolor{light}{RGB}{240, 240, 240} % RGB values 

\newtcolorbox{lightgraybox}{
    enhanced,
    frame hidden,
    sharp corners,
    colback=light,
    borderline={3pt}{-3pt}{light},
}
\newtcolorbox{floatinglightgraybox}{
    enhanced,
    frame hidden,
    sharp corners,
    colback=light,
    borderline={3pt}{-3pt}{light},
    float,
    floatplacement=t,
}

%%%%%%%%%%%%%%%%%%%%%%%%%%
% if you use cleveref..
\usepackage[capitalize,noabbrev]{cleveref}

%%%%%%%%%%%%%%%%%%%%%%%%%%%%%%%%
% THEOREMS
%%%%%%%%%%%%%%%%%%%%%%%%%%%%%%%%
\theoremstyle{plain}
\newtheorem{theorem}{Theorem}[section]
\newtheorem{proposition}[theorem]{Proposition}
\newtheorem{lemma}[theorem]{Lemma}
\newtheorem{corollary}[theorem]{Corollary}
\theoremstyle{definition}
\newtheorem{definition}[theorem]{Definition}

\theoremstyle{remark}
\newtheorem{remark}[theorem]{Remark}

% Todonotes is useful during development; simply uncomment the next line
%    and comment out the line below the next line to turn off comments
%\usepackage[disable,textsize=tiny]{todonotes}
\usepackage[textsize=tiny]{todonotes}

\newcommand{\Cat}{\mathrm{Cat}}
\newcommand{\vx}{\mathbf{x}}
\newcommand{\vm}{\mathbf{m}}
\newcommand{\vz}{\mathbf{z}}
\newcommand{\vone}{\mathbf{1}}
\newcommand{\vu}{\mathbf{u}}
\newcommand{\vpi}{\boldsymbol{\pi}}
\newcommand{\E}{\mathbb{E}}
\newcommand{\cU}{\mathcal{U}}

\newcommand{\cZ}{\mathcal{Z}}

\definecolor{hlred}{HTML}{ffcecb}
\definecolor{hlgreen}{HTML}{aceebb}
% \definecolor{hlred}{RGB}{255, 225, 223}
% \definecolor{hlgreen}{RGB}{213, 255, 220}

\DeclareRobustCommand{\hlred}[1]{{\sethlcolor{hlred}\hl{#1}}}
\DeclareRobustCommand{\hlgreen}[1]{{\sethlcolor{hlgreen}\hl{#1}}}

% The \icmltitle you define below is probably too long as a header.
% Therefore, a short form for the running title is supplied here:
\icmltitlerunning{Generalized Interpolating Discrete Diffusion}

\begin{document}

\twocolumn[
\icmltitle{Generalized Interpolating Discrete Diffusion}

% It is OKAY to include author information, even for blind
% submissions: the style file will automatically remove it for you
% unless you've provided the [accepted] option to the icml2025
% package.

% List of affiliations: The first argument should be a (short)
% identifier you will use later to specify author affiliations
% Academic affiliations should list Department, University, City, Region, Country
% Industry affiliations should list Company, City, Region, Country

% You can specify symbols, otherwise they are numbered in order.
% Ideally, you should not use this facility. Affiliations will be numbered
% in order of appearance and this is the preferred way.
\icmlsetsymbol{equal}{*}

\begin{icmlauthorlist}
\icmlauthor{Dimitri von R\"utte}{eth}
\icmlauthor{Janis Fluri}{eth}
\icmlauthor{Yuhui Ding}{eth}
\icmlauthor{Antonio Orvieto}{ellis,mpi}
\icmlauthor{Bernhard Sch\"olkopf}{eth,ellis,mpi}
\icmlauthor{Thomas Hofmann}{eth}
\end{icmlauthorlist}

\icmlaffiliation{eth}{Data Analytics Lab, Department of Computer Science, ETH Zurich}
\icmlaffiliation{ellis}{ELLIS Institute Tübingen}
\icmlaffiliation{mpi}{Max Planck Institute for Intelligent Systems, Tübingen}

\icmlcorrespondingauthor{Dimitri von R\"utte}{dvruette@ethz.ch}

% You may provide any keywords that you
% find helpful for describing your paper; these are used to populate
% the "keywords" metadata in the PDF but will not be shown in the document
\icmlkeywords{Diffusion Models, Discrete Diffusion Models, Language Modeling}

\vskip 0.3in
]

% this must go after the closing bracket ] following \twocolumn[ ...

% This command actually creates the footnote in the first column
% listing the affiliations and the copyright notice.
% The command takes one argument, which is text to display at the start of the footnote.
% The \icmlEqualContribution command is standard text for equal contribution.
% Remove it (just {}) if you do not need this facility.

\printAffiliationsAndNotice{}  % leave blank if no need to mention equal contribution
% \printAffiliationsAndNotice{\icmlEqualContribution} % otherwise use the standard text.

\begin{abstract}
While state-of-the-art language models achieve impressive results through next-token prediction, they have inherent limitations such as the inability to revise already generated tokens.
This has prompted exploration of alternative approaches such as discrete diffusion.
However, masked diffusion, which has emerged as a popular choice due to its simplicity and effectiveness, reintroduces this inability to revise words.
To overcome this, we generalize masked diffusion, deriving a new family of general interpolating discrete diffusion (GIDD) which offers greater flexibility in the design of the noising processes.
Leveraging a novel diffusion ELBO, we achieve compute-matched state-of-the-art performance in diffusion language modeling.
Exploiting GIDD's flexibility, we explore a hybrid approach combining masking and uniform noise, leading to improved sample quality and unlocking the ability for the model to correct its own mistakes, an area where autoregressive models notoriously have struggled.\\
Code: \href{https://github.com/dvruette/gidd/}{https://github.com/dvruette/gidd/}
\vspace{-1mm}
\end{abstract}

\section{Introduction}
\label{sec:introduction}
\vspace{-1mm}

% Gleaning the structure of our world from observations is a fundamental mechanism of intelligence.
% While biological organisms do so routinely, the ability of machines to do the same has been limited until recently.
% Advances in deep learning
% %and specifically deep learning 
% have since unlocked this capability through training larger models with more capacity on ever-increasing amounts of data \citep{krizhevsky2012alexnet, silver2017alphagozero, brown2020gpt3, openai2023gpt4}.
% % One branch of machine learning--generative modeling--has seen
% Deep generative models have seen
% remarkable improvements stemming from both scaling training compute and algorithmic breakthroughs \citep{kumar2023comprehensive}.
% As the name suggests, generative
% % Generative 
% models face the difficult task of generating new, realistic samples, where ``realistic'' usually refers to samples with high likelihood under some reference distribution.

\begin{figure}[t]
    \centering
    \vspace{-0.5mm}
    \includegraphics[width=0.9\linewidth]{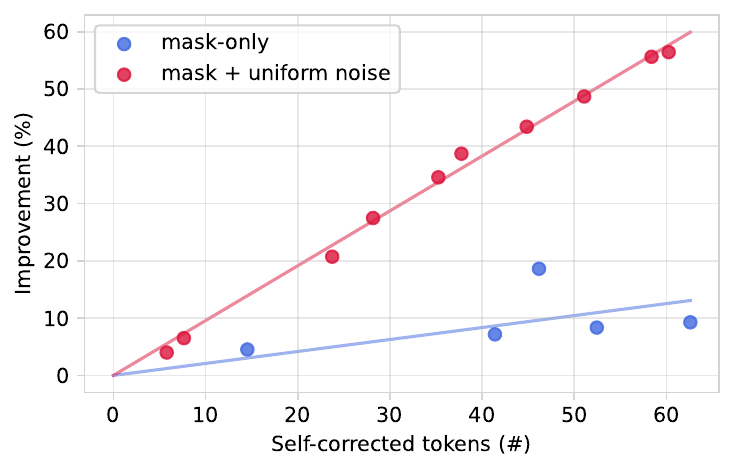}
    \hspace{1.5mm}
    \vspace{-3.5mm}
    \caption{Training a diffusion model using GIDD on a combination of masking and uniform noise teaches it to identify and correct its own mistakes. By iteratively replacing bad tokens with better ones (as determined by the model), sample quality (as per generative PPL via Gemma 2 9B) improves by up to 55\%.}
    \label{fig:teaser}
    \vspace{-3.5mm}
\end{figure}

\begin{table*}
    \centering
    \begin{tabular}{p{16cm}}
        \toprule
        Machine learning\hlred{ are}\hlgreen{ is} a field\hlred{ to study}\hlgreen{ of research} in artificial intelligence\hlred{  during which}\hlgreen{ and the} development [...] \\
        \midrule
        \hlred{Republic of Delta}\hlgreen{World of Warcraft} has made some significant improvements to\hlred{ game}\hlgreen{ it} in\hlred{ their}\hlgreen{ the} most recent\hlred{ improvement}\hlgreen{ update}, the \hlred{ ``Death in the Vengeance'' change}\hlgreen{ ``End of the World'' update}. \\
        \midrule
        Mexico City is the largest city in\hlred{ France}\hlgreen{ Mexico}. With an estimated population of\hlred{ 22,752,000}\hlgreen{ 22,000,000} [...] \\
        \midrule
        Suppose Alice has 5 apples. If Alice gives\hlred{ 2}\hlgreen{ all} of\hlred{ her}\hlgreen{ the} apples to Bob, she is left with zero apples. \\
        \bottomrule
    \end{tabular}
    \vspace{-1mm}
    \caption{Examples of self-correction (green replaces red) by our GIDD+ \textsc{base} model trained with 20\% uniform noise. The model is able to correct grammatical mistakes, improve word choices, and even improve factuality without being explicitly trained to do so.}
    \label{tab:self_correction_teaser}
    \vspace{-1.5mm}
\end{table*}

For certain data distributions such as natural images or natural language, the information content of any given sample can be  overwhelming, making the task of generating realistic samples through generative modeling difficult.
A common strategy to ease the burden on the generative model is to break up the task of generating an entire sample into multiple inference steps, each being simpler in isolation, but recovering the full distribution when recombined.
The most prevalent example of this, especially for natural language, is autoregressive modeling \citep{bengio2000neural}, where the task of generating a sentence (or sequence) is decomposed into generating one word (or token) at a time, with each new word serving as additional context for the next word.

While extraordinarily successful on a wide range of data modalities \citep{oord2016wavenet, oord2016pixelcnn, radford2018improving}, there are some inherent challenges to this approach.
First and most obviously, generating a sequence of length \(N\) necessarily requires \(N\) invocations of the model.
This is not a problem if \(N\) is small but can become expensive as \(N\) grows to large numbers.
Secondly, long-term dependencies and coherence can pose a challenge, for example if each step has a non-negligible error rate:
If a wrong token is sampled or a previous token becomes incompatible with newly sampled tokens, there is no way to correct it.
Considerable effort has gone into solving this limitation, most recently by post-training with reinforcement learning (RL) to teach sequential reasoning over multiple autoregressive steps \citep{bengio2015scheduled, ranzato2015sequence, bahdanau2016actor, openai2024openaio1, deepseekai2025deepseekr1}.

Denoising diffusion models \citep{sohldickstein2015deep} propose a different way of decomposing the generative task which can address both limitations.
Instead of splitting the sample into elements of a sequence, we gradually decrease the information content of the entire sample by degrading it through the addition of some form of noise until, eventually, the information content reaches zero. % and we arrive at some prior distribution.
The generative task then consists of reversing this degradation process, gradually adding information back in until the full sample is restored.
% This decouples the number of model invocations from the size of the sample, since we can freely choose the number of steps taken to fill in missing information.
% This solves the first limitation, since the number of steps we take to fill in the missing information is independent of the size of the sample.
This decouples the number of model invocations from the size of the sample since the number of steps we take to fill in the missing information can be chosen freely. %, solving the first limitation.
% How many steps we take to fill in the missing information is independent of the size of the sample, solving the first limitation of autoregressive models.
For natural images, an obvious and suitable degradation is the progressive addition of per-pixel Gaussian noise.
This choice yields a simple training objective that works well in practice and forms the basis of state-of-the-art image generation models \citep{ho2020ddpm, kingma2023variational}.
The success of image diffusion models has spurred interest in applications to other domains and modalities, including discrete data like text \citep{austin2023d3pm}. % or music \citep{plasser2023discrete}. %, or molecules \citep{vignac2023digress}.
% For a detailed discussion of related work, we refer the reader to Appendix~\ref{app:related_work}.
Unfortunately, Gaussian diffusion cannot be naively applied to discrete data as there is not necessarily a notion of distance or similarity, at least not one that is straightforward to measure.
Instead, discrete diffusion models have converged on a degradation process consisting of gradually removing (``masking'') tokens until none are left \citep{austin2023d3pm, shi2024simplified, sahoo2024simple}.
The task of the model then becomes to reconstruct the original sequence by ``filling in the blanks'' until all masked tokens have been filled in.
% This can also be thought of as autoregressive generation in a randomized order.
% Indeed, while conceptually simple, it reintroduces one of the limitations of autoregressive models:
% While conceptually simple, this reintroduces one of the limitations of autoregressive models:
% this noising process reintroduces one of the inherent limitations of autoregressive models:
This can also be thought of as autoregressive generation in a randomized order \citep{welleck2019nonmonotonic}, and indeed reintroduces one of its inherent limitations:
Once a token is filled in, it can no longer be changed, and any intermediate errors will necessarily propagate to the final sample.
However, by injecting a small amount of uniform token noise into the diffusion process, we can allow for any token to transition to any other token, therefore resolving this issue.
This elicits the realization that the type of noise is an integral part of a diffusion model with potentially fundamental implications on its strengths and limitations.
Drawing motivation from this, our work aims to illuminate the design space of discrete diffusion models by opening up the design space and exploring an alternative diffusion process that combines masking and uniform noise.
Our contributions are two-fold:

On the theoretical side, in Section~\ref{sec:gidd} we extend the framework of masked diffusion to general interpolating discrete diffusion (GIDD) processes.
GIDD offers great flexibility in the choice of noising process, encompassing any diffusion process that can be written as a linear combination between the data and some (time-varying) mixing distribution.
We derive closed-form solutions for the cumulative state transitions and the diffusion Evidence Lower Bound (ELBO) for this general family, which are needed for sampling and likelihood training respectively.
We also show that the derived ELBO has a global minimum that is reached when the model matches the true distribution.

On the practical side, in Sections~\ref{sec:mixing_schedule}~and~\ref{sec:experiments} we apply our theory to the special case of masking noise in combination with varying levels of uniform noise.
We conduct an ablation study, showing that our mask-only model achieves compute-matched state-of-the-art on diffusion language modeling thanks to a reweighted training objective (Sec.~\ref{sec:ablations}).
We also show that the addition of uniform noise leads to improved sample quality and unlocks self-correction abilities (Fig.~\ref{fig:teaser}, Tab.~\ref{tab:self_correction_teaser}) that allows the model to iteratively improve samples beyond what is possible by simply traversing the backward diffusion process (Sec.~\ref{sec:self_correction}).

% \vfill

\vspace{-1mm}
\section{Discrete Diffusion Models}
\label{sec:discrete_diffusion}
\vspace{-1mm}

As the name suggests, discrete diffusion models act on a discrete state space \(\cZ\).
% Let \(\cX\) denote the finite set of data points and let \(\cZ\) with \(\cX \subseteq \cZ\) denote the set of possible noisy states.
Given some initial state \(X \in \cZ\) sampled from the data distribution \(q_0(X)\), the sample is gradually degraded through a Markov chain \(Z_1, \dots, Z_T\) with \(Z_t \in \cZ\), \(Z_{t+1} \sim q_t(Z_{t+1} | Z_t)\), and \(Z_1 = X\) until reaching some (easy-to-sample) prior distribution \(p_T(Z_T)\).
The denoising task then becomes to learn the backward kernel of this Markov chain, such that we can (approximately) reverse the degradation process for any \(Z_T\) sampled from the prior distribution.
Oftentimes, the state space is structured as a sequence (of length \(L\)) of tokens from a vocabulary \(V\), i.e.~\(Z_t = (z_t^{(1)}, \dots, z_t^{(L)})\) with \(z_t^{(i)} \in V\).
In this case, it is common to add noise to each token independently such that it suffices to look at the forward and backward noising trajectory of any token \(z_t\) in isolation.
This is possible if the initial state \(X\) is known, which it is during training but not during inference.
The model must therefore learn to make predictions without this knowledge, inferring as much as possible about \(X\) from its noisy version, the sequence \(Z_t\).
% While the forward process degrades each token independent of the others, the backward process must take the entire sequence into account.
% Token-to-token dependencies are only added through the backward model, which gets the entire sequence \(Z_t\) as input.

\vspace{-1mm}
\subsection{Interpolating Masked Diffusion}
\vspace{-1mm}

Masked diffusion models (MDM) have seen widespread adoption by the community \citep{ou2024absorbing, shi2024simplified, sahoo2024simple, nie2024scaling, hu2024mask} due to their simplicity and good performance. % over noising processes such as uniform noise.
The core idea is to progressively replace tokens with a special \texttt{[MASK]} token until every token has been replaced. % and, as a result, all information about the original sequence has been removed. 
As such, the denoising task for the model to learn is to ``fill in the blanks'' given some context. 
This noising process results in a Markov chain with marginal transitions that can be written as a linear interpolation between mask and data:
\begin{equation}
    \label{eq:mdm_forward_process}
    q_t(z_t | x) = \Cat(z_t; \alpha_t\vx + \beta_t \vm),
\end{equation}
where \(\beta_t = 1 - \alpha_t\), \(\vx\) and \(\vm\) denote the one-hot encoding of the data \(x\) and the masking token \(m\) respectively,\footnote{Throughout, we will use bold letters to denote vectors, which, in reference to a token, denotes the one-hot encoding of the token.} and \(0 \leq \alpha_t \leq 1\) determines the signal-to-noise ratio (SNR) at the current time \(t\).
The Evidence Lower Bound (ELBO) of MDM takes the form of a simple weighted reconstruction loss of the missing tokens.
Specifically, with \(\vx_\theta\) denoting a neural network that predicts the distribution of \(x\) given a partially noised sequence \(Z_t = (z_t^{(1)}, \dots, z_t^{(L)})\), the negative ELBO is given by
\begin{multline}
    \label{eq:mdm_elbo}
    -\log p(x) \\ \leq \E_{t, z_t} \left[ \frac{\alpha'_t}{1 - \alpha_t} \delta_{z_t, m} \vx^\top \log \vx_\theta(Z_t, t) \right] + C,
\end{multline}
where \(t \sim \cU(0, 1)\) and \(z_t \sim q_t(z_t | x)\), with \(\delta\) denoting the Kronecker delta function.
Recall that the input to \(\vx_\theta\) is the entire noisy sequence \(Z_t\) whereas everything else happens for each token independently.

\vspace{-1mm}
\subsection{Limitations of Masked Diffusion}
Despite their popularity, MDMs have some fundamental limitations.
Most obviously: due to the way the underlying Markov chain is defined, a token can never be changed again once it has been filled in, which is analogous to autoregressive prediction.
This can lead to the accumulation of errors or some tokens becoming incompatible as more tokens are unmasked, and with no way to fix them, they inadvertently persist to the final result.
Another, less severe limitation is the fact that only masked tokens carry a loss signal, as unmasked tokens are always completely noise-free.
Like with BERT, this results in a smaller effective batch size which can lead to slower convergence compared to autoregressive models \citep{devlin2019bert, clark2020electra}.

\vspace{-1mm}
\section{Generalized Interpolating Diffusion}
\label{sec:gidd}
\vspace{-1mm}

To resolve these limitations, we would like to expand our horizon to a more diverse set of diffusion processes.
A natural solution drawing inspiration from BERT \citep{devlin2019bert} would be to use a combination of masking and uniform noise.
This would address both limitations described above:
Not only do we gain the ability to change already-unmasked tokens during sampling, but we also obtain a more informative training task, as every token in the sequence (whether masked or not) could potentially be corrupted and thus require correction.
With the model learning to distinguish between ``correct'' and ``incorrect'' tokens, it may also learn to correct its own mistakes, a notion that will be confirmed in Section \ref{sec:self_correction}.

However, there are some technical challenges to training a diffusion model on some specific, desirable diffusion trajectory.
The canonical training objective, the diffusion ELBO, cannot be derived without knowledge of the Markovian state transitions, but crafting a Markov chain with specific emerging properties (e.g.~``halfway in the diffusion process, 40\% of tokens should be masked, 40\% should be unperturbed, and 20\% should be random'') is generally a non-trivial inverse problem.
Instead of solving this inverse problem for a specific combination of masking and uniform noise, and to gain the necessary flexibility to design an effective model, we aim to generalize interpolating diffusion from mask-only to arbitrary (time-varying) interpolants. %, such that the diffusion ELBO necessary for training is easily obtained.
Specifically, we introduce the Generalized Interpolating Discrete Diffusion process (GIDD), a family of diffusion models with marginal forward transitions %of the form
\begin{equation}
    \label{eq:gidd_marginals}
    q_t(z_t | x) = \Cat(z_t; \alpha_t \vx + \beta_t \vpi_t),
\end{equation}
where \(\vpi_t\) can be any probability distribution that changes smoothly over time.
Notably, masked diffusion is a special case of GIDD for \(\vpi_t = \vm\).
We will show the existence of a Markov chain that results in these marginals for any suitable \(\alpha_t\) and \(\vpi_t\) and derive its conditional transitions as well as the associated ELBO necessary for likelihood training. % discrete diffusion models.

\vspace{-1mm}
\subsection{Forward Process}
\label{sec:forward_process}
\vspace{-1mm}

GIDD is designed to allow maximal flexibility over the type of noise added to the data at any point in time.
It consists of a mixing rate \(\alpha_t\), which defines the signal-to-noise ratio over time, and a mixing distribution \(\vpi_t\), which defines what distribution the data is noised towards at any given time.
We refer to the combination of these two functions as the ``mixing schedule'' of our diffusion process.%\footnote{The mixing schedule can be thought of as the ``noise schedule'' of our diffusion process. The major difference to convention being that it is vector-valued instead of scalar-valued.}

\begin{definition}[Mixing Rate]
    \label{def:mixing_rate}
    Let the (cumulative) mixing rate \(\alpha_t, \beta_t\) with \(\beta_t = 1 - \alpha_t\) be a time-differentiable decreasing function \(\alpha_t\colon[0, 1] \mapsto [0, 1]\) where the initial value \(\alpha_{0} = 1\) means no mixing and the final value \(\alpha_{1} = 0\) is complete mixing.
    This determines the SNR with \(\mathrm{SNR} = \alpha_t / \beta_t\).
\end{definition}

\begin{definition}[Mixing Distribution]
    \label{def:mixing_distribution}
    Let the mixing distribution \(\vpi_t\) be a time-dependent probability vector, i.e.~a time-differentiable function \(\vpi_t \colon [0, 1] \mapsto \Delta^{|V| - 1}\) where \(\Delta^{|V| - 1}\) denotes the \(|V|\)-dimensional simplex.\footnote{The \(d\)-dimensional simplex \(\Delta^{d-1}\) is defined as the set of all points \(x \in \mathbb{R}^{d}\) with \(x_i \geq 0\) and \(\sum_i^{d} x_i = 1\).}
    The distribution \(\vpi_t\) determines the type of noise that is added to the data at any time \(t\).
    As a consequence, \(\vpi_{1}\) represents the prior distribution of our diffusion process.
\end{definition}

\vspace{-2mm}
Ultimately, we want to find a diffusion Markov chain with marginals as postulated in Equation~(\ref{eq:gidd_marginals}),
but to arrive at this conclusion we will have to work our way up from the underlying discrete-time Markov chain to the continuous-time state transitions, to the closed-form cumulative transitions.

\begin{lightgraybox}
\begin{proposition}[GIDD Conditional Transitions]
    \label{prop:gidd_conditional}
    Let \(\alpha_t\), \(\beta_t = 1 - \alpha_t\) denote the mixing rate and let \(\vpi_t\) denote the mixing distribution.
    Then there exists a continuous-time Markov chain with transition probabilities from state \(z_s\) to \(z_t\) at times \(s \leq t\) given by
    \begin{equation}
        q_{t|s}(z_t | z_s) = \Cat(z_t; Q_{t|s}\vz_s), \;\, Q_{t|s} = \alpha_{t|s}I + \beta_{t|s}\vpi_{t|s}\vone^\top,
    \end{equation}
    where
    \(\alpha_{t|s} = \frac{\alpha_t}{\alpha_s}\), \(\beta_{t|s}\vpi_{t|s} = \beta_t\vpi_t - \frac{\alpha_t}{\alpha_s}\beta_s\vpi_s\), and \(\vone\) denotes the \(|V|\)-dim. vector of all ones.
\end{proposition}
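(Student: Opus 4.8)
The plan is to exhibit the transition family explicitly and then certify that it is a genuine Markov semigroup with the required marginals, instead of guessing a generator blindly. Writing probability vectors as columns, observe that the postulated marginal in Equation~(\ref{eq:gidd_marginals}) is exactly \(q_t(z_t\mid x) = \Cat\!\big(z_t; Q_{t|0}\vx\big)\) for the matrix \(Q_{t|0} := \alpha_t I + \beta_t \vpi_t \vone^\top\), and that \(Q_{0|0} = I\) because \(\alpha_0 = 1\), \(\beta_0 = 0\). The key structural fact, which makes everything closed-form, is that matrices of the shape \(aI + \vu\vone^\top\) form a two-parameter family that is closed under products and (when invertible) under inverses; moreover, for any probability vector \(\vu'\) one has \(\vone^\top\vu' = 1\), which is what collapses the cross terms below.

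First I would produce a candidate for \(Q_{t|s}\) by imposing the Chapman--Kolmogorov/semigroup identity \(Q_{t|0} = Q_{t|s}\,Q_{s|0}\) and ``dividing'' by \(Q_{s|0}\). Since \(\vone^\top\vpi_s = 1\), Sherman--Morrison gives \(Q_{s|0}^{-1} = \tfrac{1}{\alpha_s}\big(I - \beta_s\vpi_s\vone^\top\big)\), and multiplying \(Q_{t|0}Q_{s|0}^{-1}\) out (using \(\vone^\top\vpi_s = 1\) and \(\alpha_s+\beta_s=1\)) reduces to \(\tfrac{\alpha_t}{\alpha_s}I + \big(\beta_t\vpi_t - \tfrac{\alpha_t}{\alpha_s}\beta_s\vpi_s\big)\vone^\top\). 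Reading off \(\alpha_{t|s} = \alpha_t/\alpha_s\) and \(\beta_{t|s}\vpi_{t|s} = \beta_t\vpi_t - \tfrac{\alpha_t}{\alpha_s}\beta_s\vpi_s\) (so that \(\beta_{t|s} = \vone^\top(\beta_t\vpi_t - \tfrac{\alpha_t}{\alpha_s}\beta_s\vpi_s) = \beta_t - \tfrac{\alpha_t}{\alpha_s}\beta_s = 1 - \alpha_{t|s}\), consistent with the naming) reproduces precisely the claimed formula. I would then verify the properties that make this a bona fide chain: (i) column-stochasticity, \(\vone^\top Q_{t|s} = \alpha_{t|s}\vone^\top + \beta_{t|s}\vone^\top = \vone^\top\); (ii) the consistency relations \(Q_{s|s} = I\) and \(Q_{t|s}Q_{s|u} = Q_{t|u}\) for all \(u < s < t\), which is the same one-line product computation and, by the Kolmogorov extension theorem, yields the continuous-time Markov chain; and (iii) that composing conditionals recovers Equation~(\ref{eq:gidd_marginals}), i.e.\ \(\sum_{z_s} q_{t|s}(z_t\mid z_s)\,q_s(z_s\mid x) = \Cat\!\big(z_t; Q_{t|s}Q_{s|0}\vx\big) = \Cat\!\big(z_t; Q_{t|0}\vx\big)\). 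Differentiability of \(\alpha_t\) and \(\vpi_t\) then lets me recover the continuous-time picture promised in the roadmap by extracting the generator \(R_t = \partial_t Q_{t|s}\big|_{t=s} = \tfrac{\dot\alpha_t}{\alpha_t}I + \big(\beta_t\dot{\vpi}_t - \tfrac{\dot\alpha_t}{\alpha_t}\vpi_t\big)\vone^\top\), for which \(\vone^\top R_t = 0\) (using \(\vone^\top\dot{\vpi}_t = 0\)), so that the master equation \(\partial_t q_{t|s} = R_t\,q_{t|s}\) is solved by the closed form above.

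The main obstacle is not the algebra but positivity. For \(Q_{t|s}\) to be an honest stochastic matrix one needs \(\alpha_{t|s} \ge 0\) (automatic since \(\alpha_t\) is decreasing and nonnegative) \emph{and} \(\beta_t\vpi_t - \tfrac{\alpha_t}{\alpha_s}\beta_s\vpi_s \ge \mathbf 0\) coordinatewise; infinitesimally this is exactly the requirement that \(R_t\) have nonnegative off-diagonal entries, \(\beta_t\dot{\vpi}_t - \tfrac{\dot\alpha_t}{\alpha_t}\vpi_t \ge \mathbf 0\). This is a genuine compatibility constraint between how fast the SNR \(\alpha_t/\beta_t\) decays and how fast (and in which direction) the mixing distribution \(\vpi_t\) is allowed to move — one cannot drain mass off a coordinate of \(\vpi_t\) faster than the interpolation itself removes it — and it is precisely the ``suitability'' condition alluded to before the proposition. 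I would therefore prove the statement under the standing assumption that \((\alpha_t,\vpi_t)\) is chosen so that this coordinatewise inequality holds (to be checked for any concrete schedule used later); granted this, all the steps above go through and the conditional transitions are exactly as claimed.
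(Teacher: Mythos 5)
Your proposal is correct, but it proves the claim by a genuinely different route than the paper. The paper works \emph{constructively}: it discretizes time, defines instantaneous transition matrices \(\dot{Q}_{\Delta i}=\dot{\alpha}_{\Delta i} I+\dot{\beta}_{\Delta i}\dot{\vpi}_{\Delta i}\vone^\top\), shows by induction that products of such matrices stay in the family \(aI+\vu\vone^\top\), telescopes the resulting recursions to get \(\alpha_{t|s}=\alpha_t/\alpha_s\) and \(\beta_{t|s}\vpi_{t|s}=\beta_t\vpi_t-\tfrac{\alpha_t}{\alpha_s}\beta_s\vpi_s\), and then passes to the limit \(\Delta\to0\) using differentiability. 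You instead \emph{guess-and-verify}: define \(Q_{t|s}:=Q_{t|0}Q_{s|0}^{-1}\) via Sherman--Morrison, check that the product collapses to the claimed form, and certify the Chapman--Kolmogorov consistency, column-stochasticity, the recovery of the marginals in Equation~(\ref{eq:gidd_marginals}), and the generator \(R_t\) (your expression \(\beta_t\dot{\vpi}_t-\tfrac{\dot\alpha_t}{\alpha_t}\vpi_t\) is indeed equal to the paper's \((\beta_t\vpi_t)'-\tfrac{\alpha_t'}{\alpha_t}\beta_t\vpi_t\) using \(\alpha_t+\beta_t=1\), so this matches Lemma~\ref{lem:gidd_forward_rate}). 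Your route is shorter and purely algebraic, at the cost of requiring \(\alpha_s>0\) for the inversion (harmless, since \(\alpha_t/\alpha_s\) appears in the statement anyway); the paper's route exhibits the underlying discrete-time chain explicitly and derives rather than posits the closed form. The one substantive point you raise --- coordinatewise nonnegativity of \(\beta_t\vpi_t-\tfrac{\alpha_t}{\alpha_s}\beta_s\vpi_s\), without which \(Q_{t|s}\) is not a stochastic matrix --- is a genuine constraint that the paper handles only implicitly: its Lemma~\ref{lem:cond_mixing_schedule} asserts \(\vpi_{t|s}>0\) but only proves normalization, and the paper's instantaneous matrices \(\dot{Q}_t\) need the same sign condition, hidden in the word ``suitable.'' Making it an explicit standing assumption, as you do, is a reasonable (arguably cleaner) resolution, and one can check it holds for the concrete schedule of Section~\ref{sec:mixing_schedule}; it does make your statement formally conditional, whereas the paper asserts existence for any schedule satisfying Definitions~\ref{def:mixing_rate} and~\ref{def:mixing_distribution}, which is slightly too strong as written.
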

\end{lightgraybox}
\vspace{-2mm}
\begin{proof}[Proof]
    Let us discretize time into a \(\Delta\)-spaced mesh for some arbitrary \(\Delta > 0\), i.e.~assume that we can write \(t = \Delta i\) with \(i \in \mathbb{Z}\) for any \(t\).
    We then define the instantaneous mixing schedule\footnote{Note that while we use the dot-notation (\(\dot{\alpha}\)) for instantaneous changes, this is not to be confused with the time-derivative, which we denote by a prime (\(\alpha'\)).} \(\dot{\alpha}_t\) and \(\dot{\beta}_t\dot{\vpi}_t\) as
    \begin{gather}
        \dot{\alpha}_{\Delta i} = \frac{\alpha_{\Delta (i+1)}}{\alpha_{\Delta i}}, \\
        \dot{\beta}_{\Delta i} \dot{\vpi}_{\Delta i} = \beta_{\Delta (i+1)} \vpi_{\Delta (i+1)} - \frac{\alpha_{\Delta (i+1)}}{\alpha_{\Delta i}} \beta_{\Delta i} \vpi_{\Delta i}.
    \end{gather}
    The instantaneous transition probability is now defined as
    \begin{equation}
        \dot{q}_{t}(z_{t+\Delta} | z_t) = \Cat(z_{t+\Delta}; \dot{Q}_t \vz_t), \quad \dot{Q}_t = \dot{\alpha}_t I + \dot{\beta}_t \dot{\vpi}_t \vone^\top.
    \end{equation}
    The instantaneous transitions induce a discrete-time Markov chain with the desired mixing properties as defined by our mixing schedule.

    We now turn to our main objective: the cumulative transition matrix \(Q_{t|s}\) of this Markov chain, which is defined as \(Q_{t|s} =\prod_{i=s/\Delta}^{t/\Delta-1} \dot{Q}_{\Delta i}\).
    We need to show that \(Q_{t|s} = \alpha_{t|s}I + \beta_{t|s}\vpi_{t|s}\vone^\top\).
    To this end, we are going to inductively unroll a single step to find recursive formulas for \(\alpha_{t|s}\) and \(\beta_{t|s} \vpi_{t|s}\).
    First, note that the base case \(t = s\) is simply \(Q_{s|s} = I\) with \(\alpha_{s|s} = 1\) and \(\beta_{s|s}\vpi_{s|s} = 0\) as we must remain in the same state.
    Next, assume that the induction hypothesis holds for \(Q_{t|s}\).
    We then have
    \vspace{-1mm}
    \begin{subequations}
        \begin{align}
            &Q_{t+\Delta|s} = \dot{Q}_{t}Q_{t|s} \hfill\\
            &=[\dot{\alpha}_{t} I + \dot{\beta}_{t} \dot{\vpi}_{t} \vone^\top]\cdot[\alpha_{t|s}I + \beta_{t|s}\vpi_{t|s}\vone^\top]\\
            \!\begin{split}
                &=\dot{\alpha}_{t}\alpha_{t|s} I + \dot{\beta}_{t}(\alpha_{t|s} \dot{\vpi}_{t} \vone^\top I + \beta_{t|s}\dot{\vpi}_{t} (\vone^\top \vpi_{t|s})\vone^\top) \\
                &\hfill \qquad + \dot{\alpha}_{t} \beta_{t|s} \vpi_{t|s} \vone^\top
            \end{split}\\
            &= \dot{\alpha}_{t}\alpha_{t|s} I + \dot{\beta}_{t}(\alpha_{t|s} + \beta_{t|s}) \dot{\vpi}_{t} \vone^\top + \dot{\alpha}_{t} \beta_{t|s} \vpi_{t|s} \vone^\top\\
            &= \underbrace{\dot{\alpha}_{t}\alpha_{t|s}}_{=\alpha_{t+\Delta | s}} I + (\underbrace{\dot{\beta}_{t} \dot{\vpi}_{t} + \dot{\alpha}_{t} \beta_{t|s} \vpi_{t|s}}_{=\beta_{t+\Delta|s} \vpi_{t+\Delta | s}}) \vone^\top,
            \vspace{-1.5mm}
        \end{align}
    \end{subequations}
    where we use the fact that \(\vone^\top\vpi_{t|s} = 1\) and \(\alpha_{t|s} + \beta_{t|s} = 1\) (as per Lemma~\ref{lem:cond_mixing_schedule}, App.~\ref{app:cond_mixing_schedule}).
    Having found recursive formulas for \(\alpha_{t|s}\) and \(\beta_{t|s}\vpi_{t|s}\) we can now apply telescoping to find the desired closed-form solutions (see App.~\ref{app:gidd_conditional} for details), proving the original claim for any \(\Delta > 0\).
    % Note that the proof holds for an arbitrary \(\Delta > 0\).
    In particular, the proof also holds in the limit of \(\Delta \to 0\) as long as the limits \(\lim_{\Delta \to 0} \dot{\alpha}_{\Delta i}\) and \(\lim_{\Delta \to 0} \dot{\beta}_{\Delta i} \dot{\vpi}_{\Delta i}\) exist.
    Differentiability of \(\alpha_t\) and \(\vpi_t\), as required by Definitions~\ref{def:mixing_rate}~and~\ref{def:mixing_distribution}, is sufficient for this.
\end{proof}
\vspace{-2mm}

\begin{corollary}
    The cumulative transition probabilities of the Markov chain from Proposition~\ref{prop:gidd_conditional} are given by
    \begin{equation}
        q_t(z_t | x) = \Cat(z_t; Q_t \vx), \quad Q_t = \alpha_tI + \beta_t\vpi_t\vone^\top.
        \vspace{-1.5mm}
    \end{equation}
\end{corollary}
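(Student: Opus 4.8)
The plan is to obtain this statement as the \(s=0\) specialization of Proposition~\ref{prop:gidd_conditional}, using the boundary condition imposed by Definition~\ref{def:mixing_rate}. First I would recall that the forward chain is anchored at the data, so the state at time \(0\) is the data point itself, i.e.\ \(\vz_0 = \vx\); hence conditioning on \(z_0\) is the same as conditioning on \(x\). Then I would instantiate the conditional kernel \(q_{t|s}(z_t \mid z_s) = \Cat(z_t; Q_{t|s}\vz_s)\) at \(s=0\), which immediately gives \(q_t(z_t \mid x) = \Cat(z_t; Q_{t|0}\vx)\) with \(Q_{t|0} = \alpha_{t|0}I + \beta_{t|0}\vpi_{t|0}\vone^\top\).

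Next I would evaluate the coefficients from the formulas in Proposition~\ref{prop:gidd_conditional} using \(\alpha_0 = 1\) (and therefore \(\beta_0 = 1-\alpha_0 = 0\)):
\[
\alpha_{t|0} = \frac{\alpha_t}{\alpha_0} = \alpha_t, \qquad
\beta_{t|0}\vpi_{t|0} = \beta_t\vpi_t - \frac{\alpha_t}{\alpha_0}\beta_0\vpi_0 = \beta_t\vpi_t,
\]
so that \(Q_{t|0} = \alpha_t I + \beta_t\vpi_t\vone^\top = Q_t\), which is exactly the claimed matrix, yielding \(q_t(z_t \mid x) = \Cat(z_t; Q_t\vx)\).

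Finally, as a consistency check connecting back to Equation~(\ref{eq:gidd_marginals}), I would expand the matrix--vector product for a one-hot \(\vx\): since \(\vone^\top\vx = 1\), we get \(Q_t\vx = \alpha_t\vx + \beta_t\vpi_t(\vone^\top\vx) = \alpha_t\vx + \beta_t\vpi_t\), a convex combination of the probability vectors \(\vx\) and \(\vpi_t\) because \(\alpha_t,\beta_t \ge 0\) and \(\alpha_t + \beta_t = 1\) on \([0,1]\); hence \(Q_t\vx \in \Delta^{|V|-1}\) and the categorical distribution is well defined. I do not expect any real obstacle: all the content lives in Proposition~\ref{prop:gidd_conditional}, and this corollary is just its \(s=0\) instance combined with the normalization \(\alpha_0 = 1\). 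The only point needing a word of care is making explicit that the chain starts at the data, \(\vz_0 = \vx\), which is what turns the conditional transitions into the cumulative (marginal) transitions from the data.
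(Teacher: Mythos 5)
Your proposal is correct and follows essentially the same route as the paper: specialize Proposition~\ref{prop:gidd_conditional} to \(s=0\) and use \(\alpha_0 = 1\), \(\beta_0 = 0\) to get \(Q_{t|0} = \alpha_t I + \beta_t\vpi_t\vone^\top\). Your additional remarks (anchoring \(\vz_0 = \vx\) and the convexity check against Equation~(\ref{eq:gidd_marginals})) are fine elaborations of details the paper leaves implicit.
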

\begin{proof}
    The claim follows directly from Proposition~\ref{prop:gidd_conditional} with \(Q_t = Q_{t|0}\) and using the fact that \(\alpha_{0} = 1\), \(\beta_{0} = 0\).
\end{proof}
\vspace{-2mm}

% \begin{lemma}
%     The instantaneous mixing rate and distribution are the time-derivative of the cumulative mixing rate and distribution, i.e.
%     \begin{equation}
%         \dot{\alpha}_t = \frac{d}{dt}\alpha_t, \quad 
%     \end{equation}
% \end{lemma}

With this, we have successfully constructed a Markov chain with the desired marginals outlined in Equation~\ref{eq:gidd_marginals}.
For deriving the ELBO later on, we also need the transition rates of the corresponding Continuous-Time Markov Chain (CTMC), which is defined as follows.

\begin{definition}[CTMC Forward Transition]
    For some start time \(s\) and end time \(t = s + \Delta\) with \(\Delta \to 0\), we have
    \begin{equation}
        q_{t|s}(z_t | z_s) = \delta_{z_s, z_t} + R_t(z_s, z_t) \Delta + o(\Delta),
    \end{equation}
    where \(R_t\) is called the forward transition rate.
    Little-\(o\) notation is used for denoting asymptotically sub-linear terms.
\end{definition}
\vspace{-0.5mm}
% In our case, based on Def.~\ref{def:gidd} this suggests a transition rate of
% \begin{equation}
%     R_t(z_s, z_t) %= \vz_t^\top \left(-(1 - \dot{\alpha}_t)I + \dot{\beta}_t \dot{\vpi}_t\vone^\top\right)\vz_s \\
%     = (\dot{\alpha}_t - 1)\delta_{z_s, z_t} + \dot{\beta}_t \vz_t^\top \dot{\vpi}_t.
% \end{equation}
% In practice, we would rather have the rate matrix expressed in terms of the cumulative mixing schedule.

We now characterize the CTMC forward rate of GIDD.

\begin{lightgraybox}
\begin{lemma}[GIDD Forward Rate]
    \label{lem:gidd_forward_rate}
    The CTMC forward rate matrix \(R_t\) of GIDD is given by
    \begin{equation}
        R_t(z_s, z_t) = \frac{\alpha_t'}{\alpha_t}\delta_{z_s, z_t} + \vz_t^\top \left( \beta_t \vpi_t' - \frac{\alpha'_t}{\alpha_t}\vpi_t \right),
    \end{equation}
    where \(\alpha_t'\) and \(\vpi_t'\) denote the time-derivative of the respective mixing function.
\end{lemma}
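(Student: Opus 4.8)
The plan is to linearize the closed-form cumulative transitions of Proposition~\ref{prop:gidd_conditional} to first order in the step size and read off the transition rate. Hold the end time $t$ fixed, write the start time as $s = t - \Delta$, and use $\vone^\top \vz_s = 1$ to obtain
\begin{equation*}
    q_{t|s}(z_t | z_s) = \vz_t^\top Q_{t|s}\vz_s = \alpha_{t|s}\,\delta_{z_s, z_t} + \vz_t^\top\left(\beta_t\vpi_t - \alpha_{t|s}\,\beta_s\vpi_s\right),
\end{equation*}
with $\alpha_{t|s} = \alpha_t/\alpha_s$ and $\beta_{t|s}\vpi_{t|s} = \beta_t\vpi_t - \alpha_{t|s}\beta_s\vpi_s$ taken directly from the proposition. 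The goal is then to match this against the CTMC forward transition definition, $q_{t|s}(z_t|z_s) = \delta_{z_s,z_t} + R_t(z_s, z_t)\Delta + o(\Delta)$, by expanding the right-hand side in powers of $\Delta$.

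First I would Taylor expand each mixing quantity around time $t$. By differentiability of $\alpha_t$ (Definition~\ref{def:mixing_rate}) and $\alpha_s > 0$ for $s < 1$, we have $\alpha_s = \alpha_t - \alpha_t'\Delta + o(\Delta)$ and hence $\alpha_{t|s} = \alpha_t/\alpha_s = 1 + \frac{\alpha_t'}{\alpha_t}\Delta + o(\Delta)$, so the $\delta_{z_s,z_t}$ term contributes $\delta_{z_s,z_t} + \frac{\alpha_t'}{\alpha_t}\delta_{z_s,z_t}\Delta + o(\Delta)$. By differentiability of $\vpi_t$ (Definition~\ref{def:mixing_distribution}), $\beta_s\vpi_s = \beta_t\vpi_t - (\beta_t\vpi_t)'\Delta + o(\Delta)$, and multiplying by the expansion of $\alpha_{t|s}$ gives
\begin{equation*}
    \beta_t\vpi_t - \alpha_{t|s}\,\beta_s\vpi_s = \left((\beta_t\vpi_t)' - \frac{\alpha_t'}{\alpha_t}\beta_t\vpi_t\right)\Delta + o(\Delta).
\end{equation*}
Substituting both expansions back, the zeroth-order term is exactly $\delta_{z_s,z_t}$ as required, and the coefficient of $\Delta$ is $\frac{\alpha_t'}{\alpha_t}\delta_{z_s,z_t} + \vz_t^\top\big((\beta_t\vpi_t)' - \frac{\alpha_t'}{\alpha_t}\beta_t\vpi_t\big)$, which is precisely the claimed $R_t(z_s, z_t)$.

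I do not expect any single step to be a genuine obstacle — the lemma is essentially a careful linearization of Proposition~\ref{prop:gidd_conditional} — but two points need care. First, one must check that the $o(\Delta)$ remainders are genuinely sub-linear, and this is exactly where the smoothness assumptions on $\alpha_t$ and $\vpi_t$ from Definitions~\ref{def:mixing_rate} and~\ref{def:mixing_distribution} enter (differentiability, or continuous differentiability if one wants locally uniform control near $t$). Second, one should note that the bookkeeping is insensitive to whether the function values and derivatives are evaluated at the start time $s$ or the end time $t$, since $s \to t$ as $\Delta \to 0$ and all quantities are continuous; expanding around $t$ as above makes the derivatives appear directly at $t$ and avoids a separate limiting argument. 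The boundary case $\alpha_t = 0$ (i.e.\ $t = 1$) lies outside the interior of the process and, if needed, can be covered by continuity of $R_t$.
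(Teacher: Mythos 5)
Your proposal is correct and follows essentially the same route as the paper's proof: a first-order Taylor expansion of the closed-form conditional transitions from Proposition~\ref{prop:gidd_conditional}, matched against the CTMC definition to read off the rate. The only (inessential) difference is that you expand around the end time $t$ while the paper expands around the start time $s = t - \Delta$, which, as you note, coincide in the limit $\Delta \to 0$.
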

\end{lightgraybox}
\vspace{-2mm}
\begin{proof}
    % The proof follows the idea of Proof 2 in \citet{campbell2022continuous} (App.~B.2).
    By performing a first-order Taylor expansion on \(q_{t|s}(z_t | z_s)\) and rearranging the result, we arrive at the desired expression.
    See Appendix~\ref{app:gidd_forward_rate} for details.
\end{proof}

\vspace{-4mm}
\subsection{Backward Process}
\label{sec:backward_process}
\vspace{-1mm}

We choose the same parameterization of the backward process as prior work \citep{sohldickstein2015deep, austin2023d3pm}.
This canonical form of the model distribution \(p_\theta(z_s | z_t)\) is given by
\begin{equation}
    \label{eq:model_backward_transition}
    p_\theta(z_s | z_t)  = q_{t|s}(z_t | z_s) \frac{q_s(z_s | \vx_\theta)}{q_t(z_t | \vx_\theta)},
\end{equation}
with shorthand notation \(q_t(z_t | \vx_\theta) := \Cat(z_t; Q_t \vx_\theta(Z_t, t))\), where \(\vx_\theta(Z_t, t)\) is a neural network that predicts the distribution of \(x\) given the noised sequence \(Z_t\).
We refer to Appendix~\ref{app:gidd_backward_rate} for details on the CTMC backward rate \(\Hat{R}_t(z_t, z_s)\), which is also required for the ELBO derivation.

\vspace{-1mm}
\subsection{ELBO}
\label{sec:elbo}
\vspace{-1mm}

In order to train a GIDD model, we need a differentiable way to estimate its likelihood.
The Evidence Lower Bound (ELBO) serves this purpose: By maximizing a lower bound, we implicitly also maximize the (worst-case) likelihood of our model.
For the ELBO, we need the forward and backward rate of GIDD, which we have already derived.
Then,  starting with a slightly modified version of the ELBO from \citet{campbell2022continuous}, we plug in our forward and backward rates \(R_t(z_s, z_t)\) and \(\Hat{R}_t(z_t, z_s)\) and simplify to obtain Theorem~\ref{thm:gidd_elbo} (complete proof in App.~\ref{app:gidd_elbo}).

\begin{lightgraybox}
\begin{theorem}[GIDD ELBO]
    \label{thm:gidd_elbo}
    Let \(\alpha_t\), \(\beta_t\) and \(\vpi_t\) be a mixing schedule as defined in Definitions~\ref{def:mixing_rate},~and~\ref{def:mixing_distribution} with marginal forward distribution \(q_t(z_t | x)\) as defined in Equation~\ref{eq:gidd_marginals}.
    Let further \(w_t(z_t, x)\) be a weighting function defined as
    \begin{equation}
        w_t(z_t, x) = \frac{1}{q_t(z_t | x)} \vz_t^\top \left( \beta_t \vpi_t' - \frac{\alpha'_t}{\alpha_t} \vpi_t\right).
    \end{equation}
    Then, the continuous-time negative ELBO (CT-NELBO) of the corresponding diffusion model is given by
    \begin{multline}
        \hspace{-8pt} -\log p(x) \leq
        \E_{t, z_t} [ w_t(z_t, x) (D_{KL}(q_t(\cdot | x) \| q_t(\cdot | \vx_\theta)) \\
        + D_{IS}(q_t(z_t | x) \| q_t(z_t | \vx_\theta)) )]
        + C, \hspace{-6pt}
    \end{multline}
    where \(D_{IS}\) is the (pointwise) Itakura-Saito divergence defined as \(D_{IS}(p \| q) = p/q - \log p/q - 1\), \(t \sim \cU(0, 1)\), and \(z_t \sim q_t(\cdot | x)\), and with \(C\) denoting the ELBO constant
    \begin{equation}
        C = \E_{q_{0}(z_0 | x)} [\log p(x | z_0)] - D_{KL}(q_{1}(z_1 | x) \| p_{1}(x_1)).
    \end{equation}
\end{theorem}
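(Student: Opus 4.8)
The plan is to start from the continuous-time ELBO for CTMC denoising models of \citet{campbell2022continuous}, specialize it to GIDD using the forward rate of Lemma~\ref{lem:gidd_forward_rate} together with the induced backward rates, and then simplify. After a cosmetic rewriting that merely adds and subtracts $\theta$-independent terms (turning each summand into a generalized divergence), Campbell's bound takes the form
\[
-\log p(x) \;\le\; \E_{t, z_t}\!\left[\,\sum_{z_s \ne z_t}\Big(\Hat{R}^\theta_t(z_t, z_s) - \Hat{R}_t(z_t, z_s) + \Hat{R}_t(z_t, z_s)\log\tfrac{\Hat{R}_t(z_t, z_s)}{\Hat{R}^\theta_t(z_t, z_s)}\Big)\right] + C,
\]
with $t \sim \cU(0,1)$ and $z_t \sim q_t(\cdot|x)$, where $\Hat{R}_t$ and $\Hat{R}^\theta_t$ are the true and modelled CTMC backward rates (Appendix~\ref{app:gidd_backward_rate}) and $C$ is exactly the stated boundary constant; the reduction of the path-space expectation to $\E_t\E_{q_t(z_t|x)}[\cdot]$ uses that the integrand depends on the forward trajectory only through $Z_t \sim q_t(\cdot|x)$.

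The decisive structural fact is that, by Lemma~\ref{lem:gidd_forward_rate}, the off-diagonal forward rate $R_t(z_s, z_t)$ for $z_s \ne z_t$ equals $c_t(z_t) := \vz_t^\top\big((\beta_t\vpi_t)' - \tfrac{\alpha'_t}{\alpha_t}\beta_t\vpi_t\big)$, i.e.\ it does not depend on the source state $z_s$. Substituting this into the time-reversal identity $\Hat{R}_t(z_t, z_s) = \tfrac{q_t(z_s|x)}{q_t(z_t|x)}R_t(z_s, z_t)$ (and its $\vx_\theta$-analogue for the model, which is what the parameterization in Equation~(\ref{eq:model_backward_transition}) amounts to at the level of rates) makes the backward rates factorize, for $z_s \ne z_t$, as
\[
\Hat{R}_t(z_t, z_s) = w_t(z_t, x)\,q_t(z_s | x),\qquad \Hat{R}^\theta_t(z_t, z_s) = w_t(z_t, \vx_\theta)\,q_t(z_s | \vx_\theta),
\]
where $w_t(z_t, \cdot) = c_t(z_t)/q_t(z_t|\cdot)$ is precisely the weighting function of the statement ($w_t(z_t, \vx_\theta)$ being its obvious analogue), and in particular $w_t(z_t, x)/w_t(z_t, \vx_\theta) = q_t(z_t|\vx_\theta)/q_t(z_t|x)$.

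Plugging these factorized rates into the integrand and summing over $z_s \ne z_t$ is then a short, mechanical computation: each logarithm splits into a $z_t$-part and a $z_s$-part; one uses $\sum_{z_s \ne z_t}q_t(z_s|\cdot) = 1 - q_t(z_t|\cdot)$ and $\sum_{z_s \ne z_t}q_t(z_s|x)\log\tfrac{q_t(z_s|x)}{q_t(z_s|\vx_\theta)} = D_{KL}(q_t(z_s|x)\|q_t(z_s|\vx_\theta)) - q_t(z_t|x)\log\tfrac{q_t(z_t|x)}{q_t(z_t|\vx_\theta)}$; the two $\log\tfrac{q_t(z_t|x)}{q_t(z_t|\vx_\theta)}$ contributions recombine into a single copy, and $-w_t(z_t,x)\,(1-q_t(z_t|x)) + w_t(z_t,\vx_\theta)\,(1-q_t(z_t|\vx_\theta))$ collapses to $w_t(z_t,x)\big(\tfrac{q_t(z_t|x)}{q_t(z_t|\vx_\theta)}-1\big)$ via $w_t(z_t,\cdot)\,q_t(z_t|\cdot) = c_t(z_t)$. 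The outcome is that the integrand equals $w_t(z_t,x)\big(D_{KL}(q_t(z_s|x)\|q_t(z_s|\vx_\theta)) + r_\theta(z_t,x) - 1\big)$, i.e.\ the claimed expression up to the extra $-1$. Taking $\E_{t,z_t}$, the $-1$ contributes $-\E_t\E_{z_t\sim q_t(\cdot|x)}[w_t(z_t,x)]$, and since $\E_{z_t\sim q_t(\cdot|x)}[w_t(z_t,x)] = \sum_{z_t}c_t(z_t) = \vone^\top\big((\beta_t\vpi_t)' - \tfrac{\alpha'_t}{\alpha_t}\beta_t\vpi_t\big) = \beta'_t - \tfrac{\alpha'_t}{\alpha_t}\beta_t = -\alpha'_t/\alpha_t$ (using $\beta_t = 1-\alpha_t$ and $\vone^\top\vpi_t = 1$), this equals exactly $+\E_t[\alpha'_t/\alpha_t]$, the remaining term in the statement; adding back $C$ completes the proof.

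I expect the genuinely delicate part to be the very first step: writing Campbell's path-space ELBO in the divergence form above with the correct boundary constant and time-reversal convention, justifying the reduction of the path expectation to $\E_t\E_{q_t(z_t|x)}[\cdot]$, and checking that the rate-level model backward $\Hat{R}^\theta_t$ used here is indeed the one induced by the distribution-level parameterization $p_\theta(z_s|z_t)$ of Equation~(\ref{eq:model_backward_transition}). Everything downstream is elementary algebra plus the single identity $\sum_{z_t}c_t(z_t) = -\alpha'_t/\alpha_t$. It is also worth remarking --- though not needed for the bound itself, which is just Campbell's bound rewritten --- that the per-summand divergences are non-negative only when $w_t(z_t, x) \ge 0$, a mild condition on admissible mixing schedules; states with $q_t(z_t|x) = 0$ (as occur e.g.\ for mask-only $\vpi_t$) also satisfy $c_t(z_t) = 0$ and are handled by the usual $0/0 \mapsto 0$ convention.
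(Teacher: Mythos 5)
Your proposal is correct and follows the same overall route as the paper: start from the continuous-time ELBO of \citet{campbell2022continuous}, plug in the GIDD forward rate of Lemma~\ref{lem:gidd_forward_rate} and the model backward rate induced by Eq.~(\ref{eq:model_backward_transition}) (Lemma~\ref{lem:gidd_backward_rate}), and simplify. Your core algebra checks out: off the diagonal one indeed has \(\Hat{R}_t(z_t,z_s)=w_t(z_t,x)\,q_t(z_s|x)\) and \(\Hat{R}^\theta_t(z_t,z_s)=w_t(z_t,\vx_\theta)\,q_t(z_s|\vx_\theta)\), the summand collapses to \(w_t(z_t,x)\bigl(D_{KL}(q_t(\cdot|x)\|q_t(\cdot|\vx_\theta))+r_\theta(z_t,x)-1\bigr)\), and \(\E_{z_t\sim q_t(\cdot|x)}[w_t(z_t,x)]=\vone^\top\bigl((\beta_t\vpi_t)'-\tfrac{\alpha'_t}{\alpha_t}\beta_t\vpi_t\bigr)=-\alpha'_t/\alpha_t\), which turns the extra \(-1\) into the \(\E_t[\alpha'_t/\alpha_t]\) term (you also inherit the paper's own loose sign convention for \(C\), so no discrepancy there). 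The differences are in bookkeeping: the paper does not quote Campbell's bound in your per-state divergence form but re-derives its own starting inequality (Proposition~\ref{prop:initial_elbo}, App.~\ref{app:gidd_elbo}) via a \(\Delta\to 0\) limit of the discrete-time ELBO, keeping the diagonal terms \(\Hat{R}^\theta_t(z_t,z_t)-R_t(z_t,z_t)\) explicit, and the \(\E_t[\alpha'_t/\alpha_t]\) term there emerges from the diagonal of \(\sum_{z'}R_t(z',z_t)\,q_t(z'|\vx_\theta)/q_t(z_t|\vx_\theta)\) rather than from \(-\E[w_t]\). Your route buys a cleaner Bregman-type structure per summand; the paper's buys a self-contained continuous-time derivation.

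The one place where your justification is thinner than you suggest is exactly the step you flag. The passage between the paper's form and your divergence form is not a pointwise ``adding and subtracting of \(\theta\)-independent terms'': the two integrands differ by \(R_t(z_t,z_t)+\sum_{z_s\neq z_t}\Hat{R}_t(z_t,z_s)\), which is \(\theta\)-independent but vanishes only in expectation over \(z_t\sim q_t(\cdot|x)\) (swap the sums and use that the rows of the forward rate matrix sum to zero). That identity is elementary, but your proof is complete only once you either verify the divergence form of the CT-ELBO with the stated boundary constant directly, or carry out this in-expectation equivalence explicitly --- alternatively, starting from the paper's Proposition~\ref{prop:initial_elbo} and running your same factorization argument works without any such adjustment.
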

\end{lightgraybox}

Since GIDD is a strictly more general form of the widely used masked diffusion paradigm \citep{ou2024absorbing, shi2024simplified, sahoo2024simple}, we expect the canonical MDM ELBO to emerge from the GIDD ELBO by choosing an appropriate mixing schedule, which is indeed what we find by setting \(\vpi_t = \vm\) (proof in App.~\ref{app:equivalence_to_mdm}).

\begin{lightgraybox}
    \begin{corollary}[Equivalence to MDM]
        If \(\vpi_t = \vm\), then, for any valid noise schedule \(\alpha_t\), the GIDD ELBO reduces to the MDM ELBO (Eq.~\ref{eq:mdm_elbo}).
    \end{corollary}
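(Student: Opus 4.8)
The plan is to substitute $\vpi_t = \vm$ into the GIDD CT-NELBO of Theorem~\ref{thm:gidd_elbo} and simplify its three ingredients — the weighted KL term, the $r_\theta$ term, and the additive constant $\E_t[\alpha'_t/\alpha_t] + C$ — until all that remains is the MDM reconstruction loss of Equation~\eqref{eq:mdm_elbo}. Every simplification is powered by one fact: the masking distribution is constant in time, so $\vpi_t' = \mathbf{0}$. Hence $(\beta_t\vpi_t)' = \beta_t'\vm = -\alpha'_t\vm$, and using $\alpha_t + \beta_t = 1$,
\[
(\beta_t\vpi_t)' - \frac{\alpha'_t}{\alpha_t}\beta_t\vpi_t
= -\Bigl(\alpha'_t + \frac{\alpha'_t}{\alpha_t}\beta_t\Bigr)\vm
= -\frac{\alpha'_t}{\alpha_t}\vm .
\]
(This is exactly what Lemma~\ref{lem:gidd_forward_rate} gives for masked diffusion.) Consequently the weighting function becomes $w_t(z_t, x) = -\frac{\alpha'_t}{\alpha_t}\,\delta_{z_t, m}/q_t(z_t \mid x)$, which vanishes unless $z_t = m$; and since data tokens are never the mask token, $q_t(m \mid x) = \beta_t$, so $w_t(m, x) = -\alpha'_t/(\alpha_t\beta_t)$.

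Next I would dispose of the $r_\theta$ term and the $\E_t[\alpha'_t/\alpha_t]$ constant together. Under the standard MDM parameterization, in which $\vx_\theta$ places no probability on the mask token, $q_t(m \mid \vx_\theta) = \beta_t = q_t(m \mid x)$, so $r_\theta(m, x) = 1 - \log 1 = 1$. Since $w_t$ is supported on $z_t = m$,
\[
\E_{z_t}\bigl[ q_t(z_t \mid x)\, w_t(z_t, x)\, r_\theta(z_t, x) \bigr]
= q_t(m \mid x)\, w_t(m, x)
= \beta_t \cdot \Bigl(-\frac{\alpha'_t}{\alpha_t\beta_t}\Bigr)
= -\frac{\alpha'_t}{\alpha_t},
\]
which cancels the $\E_t[\alpha'_t/\alpha_t]$ term exactly. (If the model is allowed mass $(\vx_\theta)_m$ on the mask token, an extra $\beta_t \log\bigl(\beta_t/(\beta_t + \alpha_t (\vx_\theta)_m)\bigr)$ appears; I would simply note that Equation~\eqref{eq:mdm_elbo} already adopts the no-mask-prediction convention, so this case does not arise.)

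The crux is the weighted KL term. With $\vpi_t = \vm$, the two categoricals $q_t(z_s \mid x) = \Cat(z_s; \alpha_t\vx + \beta_t\vm)$ and $q_t(z_s \mid \vx_\theta) = \Cat(z_s; \alpha_t\vx_\theta + \beta_t\vm)$ agree on the mask coordinate (again using $(\vx_\theta)_m = 0$) and on every coordinate outside $\{x, m\}$, so $D_{KL}\bigl(q_t(z_s \mid x)\,\|\,q_t(z_s \mid \vx_\theta)\bigr)$ collapses to $-\alpha_t\, \vx^\top \log \vx_\theta(Z_t, t)$. Again only the $z_t = m$ summand survives the $w_t$ factor, giving
\[
q_t(m \mid x)\, w_t(m, x) \cdot \bigl(-\alpha_t\, \vx^\top \log \vx_\theta\bigr)
= \Bigl(-\frac{\alpha'_t}{\alpha_t}\Bigr)\bigl(-\alpha_t\, \vx^\top \log \vx_\theta\bigr)
= \alpha'_t\, \vx^\top \log \vx_\theta ,
\]
which, once the $q_t(m \mid x) = 1 - \alpha_t$ mass implicit in the $\E_{z_t}$ of Equation~\eqref{eq:mdm_elbo} is written back out, is precisely the MDM term $\frac{\alpha'_t}{1-\alpha_t}\delta_{z_t, m}\,\vx^\top \log \vx_\theta(Z_t, t)$. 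Finally, the constant $C$ needs no work: for $\vpi_t = \vm$ the prior $q_1 = \Cat(\vm)$ coincides with $p_1$, and since $\alpha_0 = 1$ the distribution $q_0(z_0 \mid x)$ is the point mass at $x$, so $C$ is literally the MDM ELBO constant. I expect the main obstacle to be treating $D_{KL}\bigl(q_t(z_s \mid x)\,\|\,q_t(z_s \mid \vx_\theta)\bigr)$ rigorously — pinning down the continuous-time ($s \to t$) limit and the exact meaning of the $z_s$-distribution so that this KL is finite and equals $-\alpha_t\, \vx^\top \log \vx_\theta$ — and then verifying that the sign and normalization conventions of Equation~\eqref{eq:mdm_elbo} line up with the simplified GIDD expression with no spurious residual constant.
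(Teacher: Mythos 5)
Your proposal is correct and follows essentially the same route as the paper's Appendix~\ref{app:equivalence_to_mdm}: substitute \(\vpi_t = \vm\) to get \(w_t(z_t,x) = -\frac{\alpha'_t}{\alpha_t}\delta_{z_t,m}/q_t(z_t\mid x)\), reduce the KL to \(-\alpha_t\,\vx^\top\log\vx_\theta\), and cancel the \(r_\theta\)-induced \(-\alpha'_t/\alpha_t\) against \(\E_t[\alpha'_t/\alpha_t]\), leaving exactly the MDM term. Your explicit remark about the no-mask-prediction convention \((\vx_\theta)_m = 0\) is an assumption the paper also uses (implicitly, via \(q_t(m\mid\vx_\theta) = 1-\alpha_t\)), so there is no substantive difference.
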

\end{lightgraybox}

\vspace{-2mm}
\paragraph{Interpretation.}
Taking a closer look at the GIDD ELBO, we notice that it consists of solving two tasks jointly:

\vspace{-0.5mm}
1) The first task is to match the model to the marginal forward distribution of some sample \(x\) given its noised version \(z_t\) by minimizing the KL-divergence between the two distributions at the current noise level.

\vspace{-0.5mm}
2) The second task is to minimize the pointwise IS-divergence at between the model and the true marginal distribution \emph{at the sampled \(z_t\)}.

Since both tasks consist of minimizing a divergence between the model and the true distribution, they are both minimal if and only if \(q_t(\cdot | x) = q_t(\cdot | \vx_\theta)\), implying that the ELBO is minimal there.\footnote{It is worth noting that both the KL and the IS-divergence are Bregman divergences, implying that they can be linearly combined into a single Bregman divergence \(D_F\) with \(F(p|z_t) = \sum_{z} p_{z} \log p_{z} - \log p_{z_t}\).}
Indeed, it can be shown that the global minimum of the ELBO is reached only at that point.

\begin{proposition}
    \label{prop:gidd_global_minimum}
    For any mixing schedule \(\alpha_t\) and \(\vpi_t\), the GIDD CT-NELBO has a global minimum of zero (up to the ELBO constant \(C\)),
    which is reached if and only if \(q_t(z_t | x)\) and \(q_t(z_t | \vx_\theta)\) are the same everywhere.% for all \(x\), \(t\), and \(z_t\).
\end{proposition}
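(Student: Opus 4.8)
The plan is to subtract the $\theta$-independent constant $C$ from the bound of Theorem~\ref{thm:gidd_elbo} and recognize the remainder as an expectation of a pointwise non-negative quantity whose zero set is exactly the set of $\theta$ that reproduce the forward marginals. Writing the $\theta$-dependent part as $\cL(\theta) = \E_{t, z_t}\!\left[ w_t(z_t, x)\big(D_{KL}(q_t(z_s | x)\,\|\,q_t(z_s | \vx_\theta)) + r_\theta(z_t, x)\big)\right]$, the bound reads $-\log p(x) \le \cL(\theta) + \E_t[\alpha_t'/\alpha_t] + C$, so it suffices to prove that $\cL(\theta) + \E_t[\alpha_t'/\alpha_t] \ge 0$, with equality exactly when $q_t(z_t | x) = q_t(z_t | \vx_\theta)$ for ($\cU(0,1)$-a.e.) $t$ and all $z_t$.

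First I would establish the sign of each ingredient. Comparing $w_t$ with Lemma~\ref{lem:gidd_forward_rate} gives $w_t(z_t, x) = R_t(z_s, z_t)/q_t(z_t | x)$ for any $z_s \ne z_t$; since off-diagonal CTMC rates are non-negative and $q_t(z_t | x) > 0$ on the support of the forward marginal, $w_t(z_t, x) \ge 0$. For the bracket, Gibbs' inequality gives $D_{KL}(q_t(z_s | x)\,\|\,q_t(z_s | \vx_\theta)) \ge 0$ with equality iff $q_t(z_s | x) = q_t(z_s | \vx_\theta)$; and setting $u = q_t(z_t | x)/q_t(z_t | \vx_\theta) > 0$, the strictly convex map $u \mapsto u - \log u$ attains its minimum $1$ at $u = 1$, so $r_\theta(z_t, x) \ge 1$ with equality iff $q_t(z_t | x) = q_t(z_t | \vx_\theta)$. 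Hence the integrand is at least $w_t(z_t, x)$, so $\cL(\theta) \ge \E_{t, z_t}[w_t(z_t, x)]$.

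The technical heart of the argument is the identity $\E_{z_t \sim q_t(\cdot | x)}[w_t(z_t, x)] = -\alpha_t'/\alpha_t$. Averaging $w_t$ against $q_t(z_t | x)$ cancels the $1/q_t(z_t | x)$ prefactor and leaves $\sum_{z_t}\vz_t^\top\!\big((\beta_t\vpi_t)' - \tfrac{\alpha_t'}{\alpha_t}\beta_t\vpi_t\big) = \vone^\top\!\big((\beta_t\vpi_t)' - \tfrac{\alpha_t'}{\alpha_t}\beta_t\vpi_t\big) = \beta_t' - \tfrac{\alpha_t'}{\alpha_t}\beta_t$, which equals $-\alpha_t'/\alpha_t$ after substituting $\beta_t = 1 - \alpha_t$ and $\vone^\top\vpi_t = 1$ (states with $q_t(z_t | x) = 0$ contribute nothing to either side). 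This is precisely the cancellation that kills the residual term: $\cL(\theta) + \E_t[\alpha_t'/\alpha_t] \ge \E_{t, z_t}[w_t(z_t, x)] + \E_t[\alpha_t'/\alpha_t] = 0$. Attainability of this infimum is clear, since $Q_t$ is invertible for $t < 1$, so an idealized predictor returning $\vx_\theta(Z_t, t) = x$ on inputs consistent with $x$ makes all marginals coincide and sends the expression to $0$.

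For the equality characterization I would rewrite the left-hand side, using the same identity, as $\E_{t, z_t}\!\big[w_t(z_t, x)\big(D_{KL}(q_t(z_s | x)\,\|\,q_t(z_s | \vx_\theta)) + r_\theta(z_t, x) - 1\big)\big]$, which is now a sum of manifestly non-negative terms and therefore vanishes iff, for a.e.\ $t$ and every $z_t$ in the support of $q_t(\cdot | x)$, either $w_t(z_t, x) = 0$ or both $D_{KL} = 0$ and $r_\theta(z_t, x) = 1$; by the equality cases above, the latter is precisely $q_t(z_t | x) = q_t(z_t | \vx_\theta)$. The main obstacle is the null set $\{w_t(z_t, x) = 0\}$, on which no matching is forced: it is empty whenever the process genuinely mixes into every state---e.g.\ for a constant full-support $\vpi_t \equiv \vpi$ one computes $w_t(z_t, x) = -\alpha_t'(\vpi)_{z_t}/(\alpha_t\, q_t(z_t | x)) > 0$, which covers the uniform-noise setting of interest---whereas masked diffusion is exactly the degenerate boundary case in which $w_t$ vanishes off the mask token. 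Under this (mild) non-degeneracy the stated equivalence for all $z_t$ follows, and quantifying over $x$ gives the claim.
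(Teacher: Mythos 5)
Your proof is correct and follows essentially the same route as the paper's (App.~\ref{app:gidd_global_minimum}): the key step in both is that averaging \(w_t(z_t,x)\) over \(z_t \sim q_t(\cdot\,|\,x)\) cancels the \(1/q_t(z_t|x)\) factor and, via \(\vone^\top\vpi_t = 1\) (hence \(\vone^\top\vpi_t' = 0\)) and \(\beta_t = 1-\alpha_t\), yields exactly \(-\alpha_t'/\alpha_t\), killing the residual term; the paper simply plugs the joint minimizer \(D_{KL}=0\), \(r_\theta=1\) into the bound, whereas you rearrange into the manifestly non-negative integrand \(w_t(D_{KL}+r_\theta-1)\), which is the same cancellation. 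Your explicit check that \(w_t \ge 0\) (via the off-diagonal forward rate) and your caveat that the ``only if'' direction requires \(w_t>0\) on the support --- which fails in the degenerate mask-only case where \(w_t\) vanishes off the mask token --- is a genuine sharpening: the paper's proof asserts the equality characterization without addressing this boundary case.
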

\vspace{-4mm}
\begin{proof}
    See Appendix~\ref{app:gidd_global_minimum}.
\end{proof}
\vspace{-3mm}
This is good news, since it tells us that the mixing schedule theoretically does not limit the best-possible model.

In conclusion, the GIDD ELBO is a straight-forward and flexible training objective that can be applied out-of-the-box to any interpolating diffusion model.

\vspace{-2.5mm}
\subsection{Sampling}
\label{sec:sampling}
\vspace{-1.5mm}

Given some sampling schedule \(0 \approx t_0 < t_1 < \dots < t_T \approx 1\) and some neural network \(\vx_\theta\), we employ ancestral sampling by discretizing time along the chosen mesh.
Specifically, starting with a sequence of all mask tokens, i.e.~\(z_{t_T} = m\) for all \(z_{t_T}\), we iteratively sample \(p_\theta(z_{t_{i-1}} | z_{t_i})\) for \(i = T, \dots, 1\):
\vspace{-2mm}
\begin{equation}
    z_{t_{i-1}} \sim q_{t_i|t_{i-1}}(z_{t_i} | z_{t_{i-1}}) \frac{q_{t_{i-1}}(z_{t_{i-1}} | \vx_\theta(Z_{t_i}, t_i))}{q_{t_{i}}(z_{t_{i}} | \vx_\theta(Z_{t_i}, t_i))}
    \vspace{-2mm}
\end{equation}

\vspace{-2.5mm}
\paragraph{Self-Correction Step.}
In addition, we propose a fixed-point iteration to improve generated samples by resampling some tokens according to the model's judgement.
More precisely, we give the fully denoised sample \(Z_{t_0}\) to the model and sample the resulting distribution with some temperature \(\tau\). % to control the randomness.
Then, of all sampled tokens different from \(Z_{t_0}\), we select the one with the highest model likelihood and commit it.
This is repeated until convergence (details in App.~\ref{app:self_correction}).

\vspace{-2.5mm}
\section{Mixing Schedule}
\label{sec:mixing_schedule}
\vspace{-1.5mm}

While GIDD can be used for masked diffusion, our original motivation for introducing a generalized framework was to explore the combination of masking and uniform noise.
To this end, we design a mixing schedule that keeps the masked prior distribution but allows for configurable amounts of uniform noise in between.
We use \(p_u\) to denote the amount of uniform noise:
For the sake of interpretability, the expected fraction of uniform tokens should reach a maximum of \(p_u\) at the midpoint between data and noise (\(t=1/2\)).
With these desiderata in mind, we define our mixing rate and mixing distribution (Def.~\ref{def:mixing_rate}~and~\ref{def:mixing_distribution}):
\begin{gather}
    \alpha_t = \frac{1 - t}{C_t}, \quad
    \beta_t\vpi_t = \frac{t}{C_t}\vm + \frac{c_t}{C_t} \vu,
\end{gather}
where \(\vu = \frac{1}{N-1} (\vone - \vm)\) denotes the uniform probability vector, \(c_t = B t^{\frac{\gamma}{2}}(1 -t)^\frac{\gamma}{2}\), \(C_t = 1 + c_t\), \(N\) is the vocabulary size, and \(B\) is a constant chosen such that the desired uniform token ratio is reached.
The marginal forward distribution then becomes
\vspace{-1mm}
\begin{equation}
    \label{eq:hybrid_marginal}
    q_t(z_t | x) = \frac{1}{C_t}((1 - t)\vx + t \vm + c_t \vu).
    \vspace{-2mm}
\end{equation}
To reach the desired uniform noise level \(p_u\) at \(t = 1/2\), we need to set \(B = 2^\gamma \frac{p_u}{1-p_u}\) (proof in App.~\ref{app:uniform_noise_level}).
The GIDD ELBO weights \(w_t(z_t, x)\) can also be derived in closed form (see App.~\ref{app:elbo_weights} for details).
Note that setting \(p_u=0.0\) again recovers masked diffusion.
Finally, we set \(\gamma = 1\), but there may be other valid choices for this and other variables introduced in this section.

\vspace{-1.5mm}
\subsection{Training Objective}
\label{sec:weighting_function}
\vspace{-1.5mm}

\begin{figure}
    \centering
    \includegraphics[width=\linewidth]{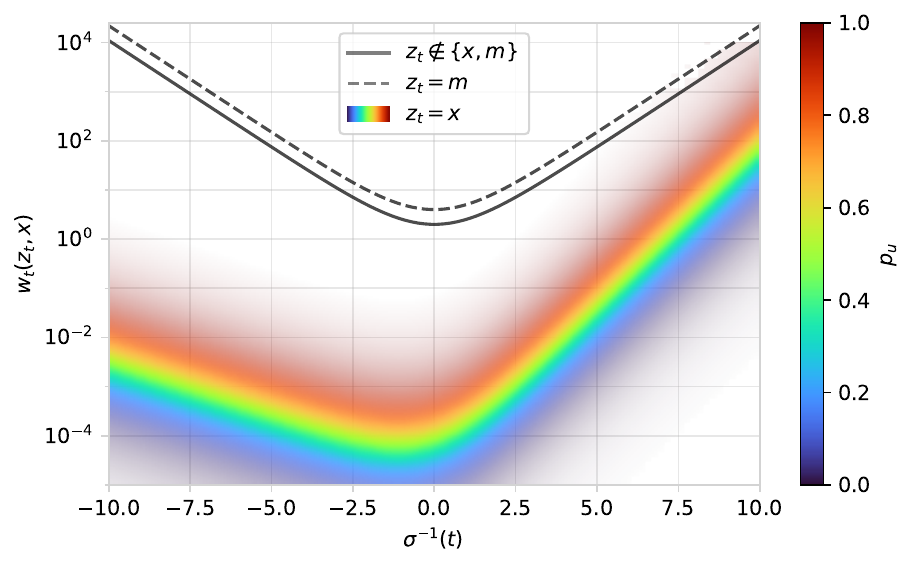}
    \vspace{-6mm}
    \caption{ELBO weights grow exponentially for very low/high noise levels, causing poor optimization if not handled carefully.
    While masked and uniform token weights are almost constant, noise-free token weights vary heavily depending on \(p_u\).}
    \label{fig:elbo_weights}
\end{figure}

Before starting our experiments, we need to solve one last issue, which will yield great performance gains.
% Instead of the raw ELBO, we find it to be beneficial to clamp the loss weights to \([0, 1]\).
Taking a closer look at the ELBO weights \(w_t(z_t, x)\), we find that their behavior for \(t \to 0\) and \(t \to 1\) is quite extreme.
Consider the three possible cases \(z_t = x\), \(z_t = m\), and \(z_t \not\in \{x, m\}\).
Plotting the weights \(w_t(z_t, x)\) over time\footnote{\(\sigma^{-1}(t)\) denotes the inverse sigmoid function and can be thought of as the (negative) log-SNR when ignoring the effect of \(C_t\), which tends to be negligible for small \(p_u\).} reveals that the weight grows exponentially for very low/high noise levels in all three cases  (Figure~\ref{fig:elbo_weights}).
This can be problematic since these low/high noise samples provide little to no training signal as the model's job of denoising becomes either trivial or impossible, yet can easily drown out all other samples in the batch.
To counteract this issue, we propose two weighting schemes that reduce the influence of extreme samples, hence emphasizing intermediate noise levels where the training task is informative.

The simple and obvious solution is to clamp the weights to some maximal value \(w_\mathrm{max}\), so we define
\begin{equation}
    \widetilde{w}^\mathrm{clamp}_t(z_t, x) = \min(w_\mathrm{max}, w_t(z_t, x)).
\end{equation}
Through preliminary experiments, we find \(w_\mathrm{max} = 1\) to be best, so this is used throughout.
Note that clamping mostly affects the weights of mask and uniform tokens. %, except for very low SNR.
A more principled approach may aim to keep the maximum loss weight constant while preserving the relative weights between masked, uniform, and noise-free tokens.
We call this the dynamic weighting function and define it as
\begin{equation}
    \widetilde{w}^\mathrm{dyn}_t(z_t, x) = w_\mathrm{max}(1 + \delta_{z_t, m} + (\tfrac{B}{N} e^{-\frac{\lambda_t}{2}} - 1)\delta_{z_t, x}),
\end{equation}
where \(\lambda_t = \log \frac{\alpha_t}{1 - \alpha_t}\) is the log-SNR.
The relative weights (\(2\) / \(1\) / \(\tfrac{B}{N} e^{-\frac{\lambda_t}{2}}\)) are determined empirically.
Note that re-weighting the ELBO like this is equivalent to sampling \(t\) from a non-uniform distribution or choosing a different noise schedule during training \citep{kingma2023understand}.

\vspace{-2mm}
\section{Experiments}
\label{sec:experiments}
\vspace{-1mm}

\begin{table}[t]
    \centering
    \begin{tabular}{lcc}
        \toprule
        Model (\textsc{small}) & Train. toks. & PPL (\(\downarrow\)) \\ \midrule
        \emph{Autoregressive} \\
        GPT2 \citep{radford2019language} & unk. & 23.40 \\
        Llama 110M (retrain.) & 262B & 16.11 \\ \midrule
        \emph{Diffusion} \\
        MD4\textsuperscript{*} \citep{shi2024simplified} & 524B & 21.80 \\
        MDLM\textsuperscript{*} \citep{sahoo2024simple} & 262B & 23.21 \\
        MDM (reimpl.) & 262B & 23.36\\
        GIDD+ (ours; \(p_u=0.0\)) & 262B & 22.29 \\
        \bottomrule
    \end{tabular}
    \caption{Our best GIDD model outperforms the compute-matched MDM (reimpl.) baseline, which in turn closely matches results from the MDM literature in terms of validation PPL on OWT.
    % For diffusion models, PPL is based on the ELBO and hence an upper bound.
    \textsuperscript{*}Numbers reported by the original paper.}
    \label{tab:baselines}
\end{table}

\begin{table}[t]
    \centering
    \begin{tabular}{lccc}
        \toprule
        Model (\textsc{small}) & \multicolumn{3}{c}{PPL (\(\downarrow\))} \\
        & \(p_u = 0.0\) & \(p_u = 0.1\) & \(p_u = 0.2\) \\ 
        \midrule
        MDM (reimpl.) & 24.37 & - & - \\
        GIDD (ours) & 24.36 & 26.88 & 28.22 \\
        + weight clipping & 23.23 & 25.09 & 26.40 \\
        + dynamic weights & 23.24 & 23.90 & 24.64 \\
        + weight decay & \textbf{23.05} & \textbf{23.67} & \textbf{24.38} \\
        \bottomrule
    \end{tabular}
    \caption{PPL of GIDD (\(p_u = 0.0\)) and MDM match closely, as expected from their theoretical equivalence.
    Significant gains come from choosing the right weighting function, especially in the \(p_u > 0\) regime.
    % Ablation results for different settings and hyperparameters. 
    The final best setting includes dynamic loss weights \(\Tilde{w}^\mathrm{dyn}_t\) and weight decay and is also referred to as \emph{GIDD+}.}
    \vspace{-2mm}
    \label{tab:ablations}
\end{table}

\begin{figure*}[t]
    \centering
    \begin{subfigure}[t]{0.33\textwidth}
        \centering
        \includegraphics[height=1.5in]{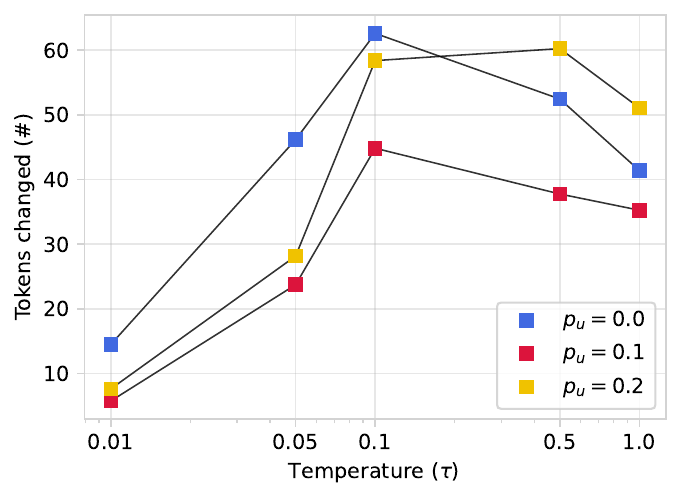}
        % \vspace{-2mm}
        % \caption{}
        % \label{fig:self_correction_a}
    \end{subfigure}
    \hfill
    \begin{subfigure}[t]{0.33\textwidth}
        \centering
        \includegraphics[height=1.5in]{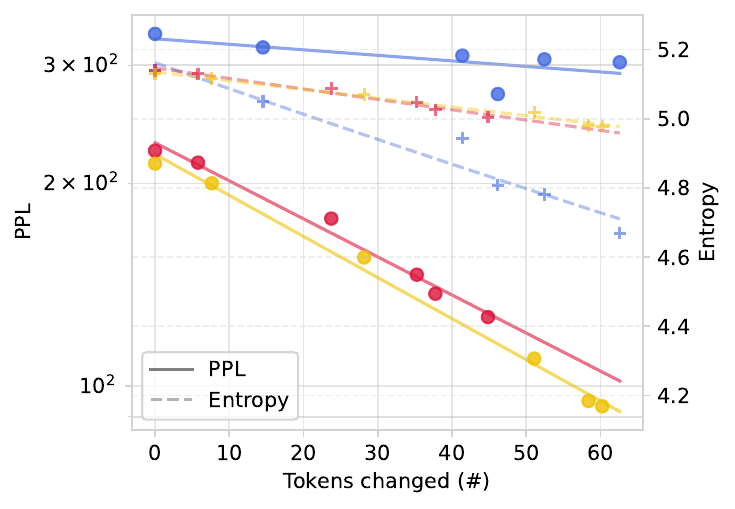}
        % \vspace{-2mm}
        % \caption{}
        % \label{fig:self_correction_b}
    \end{subfigure}
    \hfill
    \begin{subfigure}[t]{0.33\textwidth}
        \centering
        \includegraphics[height=1.5in]{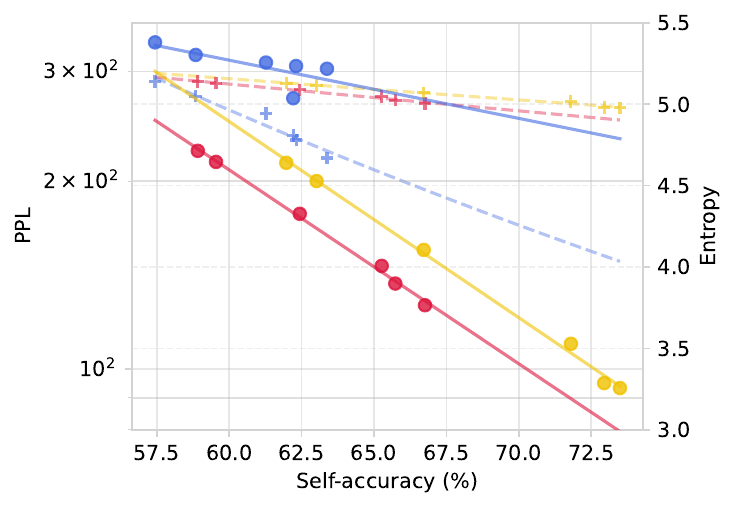}
        % \vspace{-2mm}
        % \caption{}
        % \label{fig:self_correction_c}
    \end{subfigure}
    \vspace{-8mm}
    \caption{From left to right:
    (a) Self-correction using GIDD+ (\textsc{base}) models resamples up to 10\% of tokens independent of the uniform noise level. A temperature of \(\tau \in [0.1, 0.5]\) is found to be most effective.
    (b) For models trained on hybrid noise, sample quality (PPL) improves significantly as more tokens are changed. The mask-only model, though, is unable to improve quality despite resampling as many tokens. Sample diversity (entropy) drops noticeably for mask-only models, but only slightly for hybrid models.
    (c) The correlation between self-accuracy and generative PPL reveals that hybrid models are significantly better at judging the quality of their own samples.}
    \label{fig:self_correction}
\end{figure*}

\newcommand{\green}{\color{SeaGreen}}
\newcommand{\red}{\color{Crimson}}
\newcommand{\gray}{\color{DimGray}}

\begin{table*}[]
    \centering
    \begin{tabular}{llllll}
        \toprule
        Model & Clarity & Grammaticality & Factuality & Writing style & Creativity \\
        \midrule
        GIDD ($p_u = 0.0$) & 2.51 & 2.96 & 3.61 & 2.84 & \textbf{4.48} \\
        + self-corr. ($\tau = 0.1$) & 1.99 {\footnotesize({\red-20.9\%})**} & 2.39 {\footnotesize({\red-19.3\%})**} & 3.02 {\footnotesize({\red-16.2\%})**} & 2.24 {\footnotesize({\red-21.1\%})**} & 3.60 {\footnotesize({\red-19.5\%})**} \\
        \midrule
        GIDD ($p_u = 0.1$) & 2.51 & 2.85 & 3.66 & 2.78 & 4.26 \\
        + self-corr. ($\tau = 0.1$) & 2.69 {\footnotesize({\green+7.2\%})**} & 3.05 {\footnotesize({\green+6.9\%})**} & 3.88 {\footnotesize({\green+6.0\%})**} & 2.98 {\footnotesize({\green+7.1\%})**} & 4.35 {\footnotesize({\gray+2.1\%})*} \\
        \midrule
        GIDD ($p_u = 0.2$) & 2.49 & 2.82 & 3.70 & 2.79 & 4.25 \\
        + self-corr. ($\tau = 0.5$) & \textbf{2.90} {\footnotesize({\green+16.5\%})**} & \textbf{3.29} {\footnotesize({\green+16.6\%})**} & \textbf{4.01} {\footnotesize({\green+8.5\%})**} & \textbf{3.16} {\footnotesize({\green+13.4\%})**} & \textbf{4.48} {\footnotesize({\green+5.5\%})**} \\
        \bottomrule
    \end{tabular}
    \vspace{-1mm}
    \caption{Self-correction significantly improves various quality aspects as judged by GPT-4o on a scale from 1--10, but only for models trained with hybrid uniform noise. Applying self-correction in the mask-only setting is detrimental across the board. The highest level of uniform noise has the biggest improvements and highest scores across all categories. *$>2\sigma$ difference, **$>5\sigma$ difference.}
    \label{tab:llm_judge}
    \vspace{-2mm}
\end{table*}

\subsection{Experimental Setup}
\vspace{-1mm}
While discrete diffusion models are a natural fit for any discrete data, we focus our attention specifically on language modeling as it is one of the most prevalent tasks in modern machine learning.
To this end, we adopt the OpenWebText (OWT) dataset \citep{gokaslan2019openweb} since there exists a rich literature for both autoregressive and diffusion models trained on this dataset.
% Following \citet{sahoo2024simple}, w
We follow prior work \citep{sahoo2024simple, shi2024simplified} in terms of architecture and training scale and use the DiT architecture \citep{peebles2023scalable} with the GPT2 tokenizer \cite{radford2019language} and train \textsc{small} (110M) and \textsc{base} (320M) models on 131B or 262B tokens, depending on the experiment (details in App.~\ref{app:training_details}).

\vspace{-1.5mm}
\subsection{Ablation Study}
\label{sec:ablations}
\vspace{-1.5mm}

The goal of our ablation study is to answer three main questions:
1) Does GIDD with our mixing schedule and \(p_u=0.0\) recover MDM as theory predicts?
2) How does the addition of uniform noise affect performance?
And 3) what is the importance of the weighting function (Sec.~\ref{sec:weighting_function})?

\vspace{-0.5mm}
To this end, we train \textsc{small} GIDD models on OWT with varying levels of uniform noise \(p_u \in \{0.0, 0.1, 0.2\}\).
We also train our reimplementation of MDM on the same setup.
The final validation perplexity (PPL) of these runs is reported in Table~\ref{tab:ablations}.
We find that the training trajectories as well as the final performance of MDM and GIDD (\(p_u = 0.0\)) match almost perfectly with a respective validation PPL of \(24.37\) and \(24.36\).
Our MDM reimplementation also closely matches the compute-matched MDLM \citep{sahoo2024simple} baseline (Tab.~\ref{tab:baselines}) considering the slight differences in hyperparameters.

\vspace{-0.5mm}
However, adding uniform noise to the diffusion process, we find that perplexity degrades slightly, yet benefiting expressivity as we will see later (Sec.~\ref{sec:unconditional_generation} and \ref{sec:self_correction}).
This difference likely stems from an increase in task complexity: The combination of masking and uniform noise requires solving multiple tasks jointly, which is strictly more difficult and likely requires more capacity.
This is supported by the observation that all noise levels scale consistently with model size, with the highest noise setting even showing some signs of improved scaling behavior (App.~\ref{app:scaling}).

\vspace{-0.5mm}
Our custom weighting schemes bring non-trivial performance gains to both the mask-only and hybrid noise settings, and particularly the dynamic weighting scheme \(\widetilde{w}_t^\mathrm{dyn}\) closes the gap significantly.
% Especially for \(p_u=0.1\) and \(p_u = 0.2\), the dynamic weighting function is significantly better than naive weight clipping, whereas for \(p_u = 0.0\) both perform similarly.
We hypothesize that the difference between \(\widetilde{w}^\mathrm{clamp}_t\) and \(\widetilde{w}^\mathrm{dyn}_t\) is due to the importance of noise-free tokens, which have zero weight if \(p_u = 0.0\) but cannot be ignored otherwise.
Therefore, keeping the true relative weights between different token types seems to be beneficial if \(p_u > 0\).
Finally, a moderate amount of weight decay (0.02) improves both training and validation loss, as suggested by \citet{dangelo2024weightdecay}.
The best configuration uses dynamic loss weights \(\widetilde{w}_t^\mathrm{dyn}\) and 0.02 weight decay, which we refer to as GIDD+ from hereon out.

\begin{table*}[t]
    \centering
    \begin{tabular}{llcccccccccc}
        \toprule
        Size & Model & \hspace{-4pt}Train. toks.\hspace{-4pt} & ARC-e & ARC-c & BoolQ & \hspace{-4pt}Hellaswag\hspace{-4pt} & PIQA & OBQA & \hspace{-3pt}WinoG.\hspace{-3pt} & Avg. \\ 
        \midrule
        % ----------------- Group 2: small (part 1) -----------------
        \textsc{small}\hspace{-4pt} 
        & GPT2 & unk.
           & \textbf{43.81} & 19.03 & 48.72 & 28.92 & \textbf{62.89} & 16.40 & 51.62 & 38.77\\
        & Llama (retrain.) & 262B 
            & 40.53 & \textbf{25.51} & 46.21 & \textbf{33.14} & 62.73 & \textbf{28.40} & 50.75 & \textbf{41.04}\\
        & MDM (reimpl.) & 262B 
           & 30.98 & 23.63 & 50.52 & 31.11 & 54.13 & \underline{28.00} & 49.41 & 38.25\\
        & GIDD+ (\(p_u=0.0\))\hspace{-4pt} & 262B 
           & 30.98 & 23.55 & 50.43 & \underline{31.87} & \underline{56.42} & 26.60 & 51.70 & 38.79\\
        & GIDD+ (\(p_u=0.0\))\hspace{-4pt} & 131B 
           & \underline{31.57} & \underline{24.57} & \textbf{\underline{50.92}} & 31.36 & 56.31 & 27.80 & \textbf{\underline{52.57}} & \underline{39.30}\\
        \midrule
        \textsc{base} 
        & GIDD+ (\(p_u=0.0\))\hspace{-4pt} & 131B 
            & 32.58 & 24.40 & 50.86 & 36.62 & 58.05 & 29.2 & 51.54 & 40.46 \\
        \bottomrule
    \end{tabular}
    \vspace{-1mm}
    \caption{Our best GIDD+ model in terms of zero-shot benchmark accuracy outperforms MDM (reimpl.) and even surpasses GPT2-small, although it still lags behind our Llama-based autoregressive baseline.
    Best scores among \textsc{small} models and diffusion models are bolded and underlined respectively.
    }
    \label{tab:downstream}
    \vspace{-3mm}
\end{table*}

\vspace{-1.5mm}
\subsection{Unconditional Generation}
\label{sec:unconditional_generation}
\vspace{-1.5mm}

% So far, we have seen a pattern of models trained with uniform noise being outperformed by their mask-only counterpart, but we have yet to consider the main motivation for its addition:
While models trained with uniform noise consistently exhibit a higher loss, we have yet to test the main motivation for its addition:
By teaching the model to distinguish between ``correct'' and ``incorrect'' tokens, we hope to unlock the ability for the model to correct its own mistakes at generation time, stabilizing the denoising process and yielding improved sample quality.
In order to quantify sample quality, we use ``generative perplexity'', a metric that computes the likelihood of generated samples under a more capable model (in our case Gemma 2 9B, \citet{gemma_2024}), where a high likelihood under the reference model is considered to be a sign of high quality.
While this metric has its flaws (see App.~\ref{app:gen_ppl}), it is common in the literature \citep{lou2024sedd, sahoo2024simple}.
We find that, though absolute numbers are difficult to interpret in isolation, it is still useful for comparing models in relative terms, especially if controlling for diversity.
To that end, we also consider the unigram entropy of generated samples as a diversity signal, which should stay close to the entropy of the data (\(4.98\)).

% Notably, the sample quality of models trained on uniform noise is already better before self-correction,
Notably, the generative PPL of models trained on uniform noise is significantly better than that of mask-only models, with entropy hovering around \(5.15\) for all models and settings (App.~\ref{app:num_denoising_steps}).
We observe especially big improvements over mask-only models for low inference-compute settings, with a generative PPL of \(387\) for GIDD+ (\textsc{small}; \(p_u=0.1\)) at 32 denoising steps compared to \(904\) for \(p_u=0.0\) and \(1302\) for MDM (App.~\ref{app:num_denoising_steps}).
Training on uniform noise therefore seems to stabilize the generation process when the model gets its own outputs as subsequent inputs, resulting in better sample quality despite having a slightly worse validation PPL.
This suggests that some amount of self-correction may already be happening during the denoising process. %, prompting the question of whether improvements during self-correction can be explained simply by the increased iteration count.
However, while more denoising steps monotonically improve sample quality, this plateaus at a PPL of around \(200\) for \textsc{base} models (App.~\ref{app:num_denoising_steps}).
Next, we show that it is possible to decrease generative PPL well below this plateau by further exploting the models' capabilities.
% Using self-correction, though, we can go below a PPL of 100, showing additional, non-trivial improvements.

\vspace{-2mm}
\subsection{Self-Correction}
\label{sec:self_correction}
\vspace{-1.5mm}

In order to directly evaluate the model's self-correction abilities, we iteratively apply the self-correction step from Section~\ref{sec:sampling} to unconditionally generated samples.
If the model indeed has learned to identify and correct mistakes, including its own, we expect that this repeated invocation can iteratively improve samples until a stable point is reached where the model is either happy with the sample or sees no way to improve it.
To measure the degree of convergence, or how ``happy'' the model is with a sample, we use its self-accuracy on the given sample, i.e.~the percentage of tokens that have maximal likelihood under the model.

\vspace{-0.5mm}
Focusing on \textsc{base} models, we find that both generative PPL and self-accuracy improve consistently in the number of replaced tokens (Fig.~\ref{fig:self_correction}), with a gen.~PPL of \(93.3\) and self-acc.~of \(73.5\)\% for the \(p_u=0.2\) model (up from \(214\) and \(62.0\)\% respectively).
Qualitative evaluation also confirms this (see examples in Tab.~\ref{tab:self_correction_teaser} and \ref{tab:self_correction_examples}).
For the mask-only model, while the self-correction step still resamples the same number of tokens, this does not translate to improved gen.~PPL or self-accuracy, showing that the ability to self-correct is only acquired if some amount of uniform noise is present during training.
Despite this, the mask-only model does appear to improve slightly, which is likely due to numerical limitations: For numerical stability, we actually set \(p_u\) to a very small value instead of exactly zero, empirically resulting in \(\sim10\) (out of 262'144) random tokens per batch.
Indeed, the MDM (reimpl.) baseline does not exhibit any self-correction abilities at all and in fact makes samples worse during the self-correction step (App.~\ref{app:self_correction}).

To bolster the point of qualitative improvement, we do LLM-based grading of samples before and after self-correction in terms of clarity, grammaticality, factuality, writing style, and creativity. Significant improvements are observed after self-correction for \(p_u > 0\) models in all categories, with the mask-only model showing significant deterioration (Tab.~\ref{tab:llm_judge}).
Clarity and grammaticality experience particularly large boosts, which is not surprising given the size and training scale of the model.
See Appendix \ref{app:llm_judge} for prompt and setup details.

% There also seems to be at least some amount of self-correction happening during denoising already, with the \(p_u = 0.2\) having the best scores especially for low numbers of denoising steps (32-64) and with the \(p_u = 0.0\) having approximately \(2\times\) the generative PPL for any given denoising budget.

\vspace{-1.5mm}
\subsection{Benchmark Performance}
\label{sec:benchmarks}
\vspace{-1.5mm}

Finally, we evaluate our models' language understanding capabilities on a range of benchmarks. % aimed at measuring general reading comprehension and commonsense reasoning.
Based on the increased difficulty of the hybrid noise setting, we do not expect \(p_u > 0\) models to outperform the mask-only case, which is indeed what we find (App.~\ref{app:gidd_downstream}).
Instead, we focus on comparing the best \textsc{small} GIDD+ model to MDM and autoregressive baselines, namely GPT2 \citep{radford2019language} and a retrained Llama \citep{touvron2023llama}.
Our benchmark suite consists of ARC-e and ARC-c \citep{clark2018arc}, BoolQ \citep{clark2019boolq}, Hellaswag \citep{zellers2019hellaswag}, PIQA \citep{bisk2019piqa}, OpenBookQA \citep{mihaylov2018openbookqa}, and WinoGrande \citep{sakaguchi2019winogrande}.
We find that average accuracy correlates well with validation PPL (Tab.~\ref{tab:downstream}).
Among diffusion models, the best performing model is GIDD+ (\(p_u=0.0\)) trained for only 131B tokens, surpassing models trained for twice as long.\footnote{While the difference is rather small and can be explained by run-to-run variance, it is possible that the model is overfitting to shorter sequences due to the way we handle long sequences. We use random cropping instead of chunking (App.~\ref{app:training_details}), which may over-emphasize short documents in the training corpus.}
While the best diffusion model, GIDD+ (\(p_u=0.0\)), outperforms the autoregressive GPT2, the best autoregressive model,  Llama (retrain.), still performs best overall.
GIDD+ models trained with uniform noise improve with scale but lag behind their mask-only counterparts, which is consistent with their respective valiation PPLs (App.~\ref{app:gidd_downstream}).
This highlights an important difference between \emph{likelihood estimation} (i.e.~recognizing realistic samples) and \emph{sample generation} (i.e.~creating realistic samples), which do not always correlate perfectly in practice:
Despite mask-only models outperforming in likelihood estimation, the picture is flipped when considering their sample quality, indicating that likelihood-based multiple-choice benchmarks may not be enough to holistically evaluate diffusion language models.
% Nevertheless, performance improves consistently with increased model size, and tentative signs suggest that the gap may shrink further with increased scale (App.~\ref{app:scaling}).

\section{Related Work}
\label{app:related_work}

Our work builds on a line of discrete diffusion research, with \citet{austin2023d3pm} first introducing the diffusion ELBO to discrete Markov chains, \citet{campbell2022continuous} extending it to continous time, \citet{lou2024sedd} deriving an alternative ELBO based on concrete score matching, and concurrent work by \citet{shi2024simplified}, \citet{sahoo2024simple}, and \citet{ou2024absorbing} proposing a simplified objective for mask-only diffusion.
With the exception of \citet{austin2023d3pm} (App.~A.2.6), the combination of masking and uniform noise is left unexplored by this line of work.
\citet{gu2022vectorquantizeddiffusionmodel} use this hybrid noise for vector-quantized image generation, but conduct no investigation on the benefits of combining the two noise types.
\citet{he2022diffusionbert} propose a noise schedule that degrades different tokens at different rates, depending on their ``difficulty'' as estimated using BERT, therefore trying to avoid intermediate mistakes, but stick to mask-only diffusion.
Concurrent work has also looked into adaptive denoising orders \citep{kim2025trainworstplanbest} and adaptive loss weights \citep{ye2025mgdm} as ways to combat the limitations of mask-only diffusion models.

Continuous diffusion has also been adapted to discrete data by doing Gaussian diffusion in an embedding space \citep{li2022diffusionlm, gulrajani2023plaid}.
Diffusion-like approaches have also been extended to discrete data, with discrete flow matching \citep{gat2024discreteflowmatching} adapting the flow-matching paradigm \citep{liu2022rectifiedflow, lipman2022flowmatching} and Bayesian flow networks \citep{graves2024bayesianflownetworks} adopting the perspective of denoising directly in probability space rather than collapsing the distribution after each step.

Finally, the idea of denoising a combination of masking and uniform noise was popularized by BERT \citep{devlin2019bert}, where it was proposed in the context of representation learning.

\vspace{-2mm}
\section{Conclusion}
\label{sec:conclusion}
\vspace{-1mm}

We have introduced a new family of generalized interpolating diffusion processes (dubbed GIDD) and successfully applied it in practice.
While the extreme scale required to train overall state-of-the-art language models is out of scope for this work, we see great potential in the methods and results described here, but also in diffusion language models more broadly:
Self-correction is an area where next-token prediction notoriously has struggled, but as we discovered, this capability comes naturally to diffusion models given the right type of noise.
Our work also presents a step towards closing the gap in pure language modeling performance between diffusion and autoregressive models, achieving state-of-the-art perplexity for compute-matched diffusion models thanks to a re-weighted version of our newly proposed GIDD ELBO.
Beyond our work, discrete diffusion models respond well to scaling training-time compute like their next-token prediction counterpart, but also provide a natural way to scale test-time compute.
By choosing the number of denoising steps, and now also the number of self-correction iterations, one can trade off speed and accuracy depending on the setting.
All in all, and given that GIDD opens a design space yet to be explored fully, this may render diffusion language models a promising competitor to autoregressive models in the future.

\section*{Impact Statement}

This paper presents work whose goal is to advance the technical state-of-the-art in an area of Machine Learning. It shares potential societal consequences with much of the work in the general area of language modeling and foundation models.

\section*{Acknowledgment}
Thank you to Bobby He, Gregor Bachmann, and Tiago Pimentel for their helpful feedback on the writing.
Antonio Orvieto and Bernhard Sch\"olkopf acknowledge the financial support of the Hector Foundation.

\bibliography{main}

\begin{thebibliography}{58}
\providecommand{\natexlab}[1]{#1}
\providecommand{\url}[1]{\texttt{#1}}
\expandafter\ifx\csname urlstyle\endcsname\relax
  \providecommand{\doi}[1]{doi: #1}\else
  \providecommand{\doi}{doi: \begingroup \urlstyle{rm}\Url}\fi

\bibitem[Austin et~al.(2023)Austin, Johnson, Ho, Tarlow, and van~den Berg]{austin2023d3pm}
Austin, J., Johnson, D.~D., Ho, J., Tarlow, D., and van~den Berg, R.
\newblock Structured denoising diffusion models in discrete state-spaces, 2023.
\newblock URL \url{https://arxiv.org/abs/2107.03006}.

\bibitem[Bahdanau et~al.(2016)Bahdanau, Brakel, Xu, Goyal, Lowe, Pineau, Courville, and Bengio]{bahdanau2016actor}
Bahdanau, D., Brakel, P., Xu, K., Goyal, A., Lowe, R., Pineau, J., Courville, A., and Bengio, Y.
\newblock An actor-critic algorithm for sequence prediction, 2016.
\newblock URL \url{https://arxiv.org/abs/1607.07086}.

\bibitem[Bengio et~al.(2015)Bengio, Vinyals, Jaitly, and Shazeer]{bengio2015scheduled}
Bengio, S., Vinyals, O., Jaitly, N., and Shazeer, N.
\newblock Scheduled sampling for sequence prediction with recurrent neural networks, 2015.
\newblock URL \url{https://arxiv.org/abs/1506.03099}.

\bibitem[Bengio et~al.(2000)Bengio, Ducharme, and Vincent]{bengio2000neural}
Bengio, Y., Ducharme, R., and Vincent, P.
\newblock A neural probabilistic language model.
\newblock \emph{Advances in neural information processing systems}, 13, 2000.

\bibitem[Bisk et~al.(2019)Bisk, Zellers, Bras, Gao, and Choi]{bisk2019piqa}
Bisk, Y., Zellers, R., Bras, R.~L., Gao, J., and Choi, Y.
\newblock {PIQA}: Reasoning about physical commonsense in natural language, 2019.
\newblock URL \url{https://arxiv.org/abs/1911.11641}.

\bibitem[Brown et~al.(2020)Brown, Mann, Ryder, Subbiah, Kaplan, Dhariwal, Neelakantan, Shyam, Sastry, Askell, Agarwal, Herbert-Voss, Krueger, Henighan, Child, Ramesh, Ziegler, Wu, Winter, Hesse, Chen, Sigler, Litwin, Gray, Chess, Clark, Berner, McCandlish, Radford, Sutskever, and Amodei]{brown2020gpt3}
Brown, T.~B., Mann, B., Ryder, N., Subbiah, M., Kaplan, J., Dhariwal, P., Neelakantan, A., Shyam, P., Sastry, G., Askell, A., Agarwal, S., Herbert-Voss, A., Krueger, G., Henighan, T., Child, R., Ramesh, A., Ziegler, D.~M., Wu, J., Winter, C., Hesse, C., Chen, M., Sigler, E., Litwin, M., Gray, S., Chess, B., Clark, J., Berner, C., McCandlish, S., Radford, A., Sutskever, I., and Amodei, D.
\newblock Language models are few-shot learners, 2020.
\newblock URL \url{https://arxiv.org/abs/2005.14165}.

\bibitem[Campbell et~al.(2022)Campbell, Benton, Bortoli, Rainforth, Deligiannidis, and Doucet]{campbell2022continuous}
Campbell, A., Benton, J., Bortoli, V.~D., Rainforth, T., Deligiannidis, G., and Doucet, A.
\newblock A continuous time framework for discrete denoising models, 2022.
\newblock URL \url{https://arxiv.org/abs/2205.14987}.

\bibitem[Clark et~al.(2019)Clark, Lee, Chang, Kwiatkowski, Collins, and Toutanova]{clark2019boolq}
Clark, C., Lee, K., Chang, M.-W., Kwiatkowski, T., Collins, M., and Toutanova, K.
\newblock {BoolQ}: Exploring the surprising difficulty of natural yes/no questions, 2019.
\newblock URL \url{https://arxiv.org/abs/1905.10044}.

\bibitem[Clark et~al.(2020)Clark, Luong, Le, and Manning]{clark2020electra}
Clark, K., Luong, M.-T., Le, Q.~V., and Manning, C.~D.
\newblock {ELECTRA}: Pre-training text encoders as discriminators rather than generators, 2020.
\newblock URL \url{https://arxiv.org/abs/2003.10555}.

\bibitem[Clark et~al.(2018)Clark, Cowhey, Etzioni, Khot, Sabharwal, Schoenick, and Tafjord]{clark2018arc}
Clark, P., Cowhey, I., Etzioni, O., Khot, T., Sabharwal, A., Schoenick, C., and Tafjord, O.
\newblock Think you have solved question answering? {T}ry {ARC}, the {AI2} reasoning challenge, 2018.
\newblock URL \url{https://arxiv.org/abs/1803.05457}.

\bibitem[D'Angelo et~al.(2024)D'Angelo, Andriushchenko, Varre, and Flammarion]{dangelo2024weightdecay}
D'Angelo, F., Andriushchenko, M., Varre, A., and Flammarion, N.
\newblock Why do we need weight decay in modern deep learning?, 2024.
\newblock URL \url{https://arxiv.org/abs/2310.04415}.

\bibitem[DeepSeek-AI et~al.(2024)]{deepseekai2024deepseekv3}
DeepSeek-AI et~al.
\newblock {D}eep{S}eek-{V}3 technical report, 2024.
\newblock URL \url{https://arxiv.org/abs/2412.19437}.

\bibitem[DeepSeek-AI et~al.(2025)]{deepseekai2025deepseekr1}
DeepSeek-AI et~al.
\newblock {DeepSeek-R1}: Incentivizing reasoning capability in {LLM}s via reinforcement learning, 2025.
\newblock URL \url{https://arxiv.org/abs/2501.12948}.

\bibitem[Devlin et~al.(2019)Devlin, Chang, Lee, and Toutanova]{devlin2019bert}
Devlin, J., Chang, M.-W., Lee, K., and Toutanova, K.
\newblock {BERT}: Pre-training of deep bidirectional transformers for language understanding, 2019.
\newblock URL \url{https://arxiv.org/abs/1810.04805}.

\bibitem[Gao et~al.(2024)Gao, Tow, Abbasi, Biderman, Black, DiPofi, Foster, Golding, Hsu, Le~Noac'h, Li, McDonell, Muennighoff, Ociepa, Phang, Reynolds, Schoelkopf, Skowron, Sutawika, Tang, Thite, Wang, Wang, and Zou]{eval-harness}
Gao, L., Tow, J., Abbasi, B., Biderman, S., Black, S., DiPofi, A., Foster, C., Golding, L., Hsu, J., Le~Noac'h, A., Li, H., McDonell, K., Muennighoff, N., Ociepa, C., Phang, J., Reynolds, L., Schoelkopf, H., Skowron, A., Sutawika, L., Tang, E., Thite, A., Wang, B., Wang, K., and Zou, A.
\newblock A framework for few-shot language model evaluation, 07 2024.
\newblock URL \url{https://zenodo.org/records/12608602}.

\bibitem[Gat et~al.(2024)Gat, Remez, Shaul, Kreuk, Chen, Synnaeve, Adi, and Lipman]{gat2024discreteflowmatching}
Gat, I., Remez, T., Shaul, N., Kreuk, F., Chen, R. T.~Q., Synnaeve, G., Adi, Y., and Lipman, Y.
\newblock Discrete flow matching, 2024.
\newblock URL \url{https://arxiv.org/abs/2407.15595}.

\bibitem[{Gemma Team}(2024)]{gemma_2024}
{Gemma Team}.
\newblock {G}emma.
\newblock 2024.
\newblock \doi{10.34740/KAGGLE/M/3301}.
\newblock URL \url{https://www.kaggle.com/m/3301}.

\bibitem[Gokaslan et~al.(2019)Gokaslan, Cohen, Pavlick, and Tellex]{gokaslan2019openweb}
Gokaslan, A., Cohen, V., Pavlick, E., and Tellex, S.
\newblock {O}pen{W}eb{T}ext corpus.
\newblock \url{http://Skylion007.github.io/OpenWebTextCorpus}, 2019.

\bibitem[Grattafiori et~al.(2024)]{grattafiori2024llama3}
Grattafiori, A. et~al.
\newblock The {L}lama 3 herd of models, 2024.
\newblock URL \url{https://arxiv.org/abs/2407.21783}.

\bibitem[Graves et~al.(2024)Graves, Srivastava, Atkinson, and Gomez]{graves2024bayesianflownetworks}
Graves, A., Srivastava, R.~K., Atkinson, T., and Gomez, F.
\newblock Bayesian flow networks, 2024.
\newblock URL \url{https://arxiv.org/abs/2308.07037}.

\bibitem[Gu et~al.(2022)Gu, Chen, Bao, Wen, Zhang, Chen, Yuan, and Guo]{gu2022vectorquantizeddiffusionmodel}
Gu, S., Chen, D., Bao, J., Wen, F., Zhang, B., Chen, D., Yuan, L., and Guo, B.
\newblock Vector quantized diffusion model for text-to-image synthesis, 2022.
\newblock URL \url{https://arxiv.org/abs/2111.14822}.

\bibitem[Gulrajani \& Hashimoto(2023)Gulrajani and Hashimoto]{gulrajani2023plaid}
Gulrajani, I. and Hashimoto, T.~B.
\newblock Likelihood-based diffusion language models, 2023.
\newblock URL \url{https://arxiv.org/abs/2305.18619}.

\bibitem[Gumbel(1954)]{gumbel1954statistical}
Gumbel, E.~J.
\newblock \emph{Statistical theory of extreme values and some practical applications: a series of lectures}, volume~33.
\newblock US Government Printing Office, 1954.

\bibitem[He et~al.(2022)He, Sun, Wang, Huang, and Qiu]{he2022diffusionbert}
He, Z., Sun, T., Wang, K., Huang, X., and Qiu, X.
\newblock {DiffusionBERT}: Improving generative masked language models with diffusion models, 2022.
\newblock URL \url{https://arxiv.org/abs/2211.15029}.

\bibitem[Ho et~al.(2020)Ho, Jain, and Abbeel]{ho2020ddpm}
Ho, J., Jain, A., and Abbeel, P.
\newblock Denoising diffusion probabilistic models, 2020.
\newblock URL \url{https://arxiv.org/abs/2006.11239}.

\bibitem[Hoffmann et~al.(2022)Hoffmann, Borgeaud, Mensch, Buchatskaya, Cai, Rutherford, de~Las~Casas, Hendricks, Welbl, Clark, Hennigan, Noland, Millican, van~den Driessche, Damoc, Guy, Osindero, Simonyan, Elsen, Rae, Vinyals, and Sifre]{hoffmann2022chinchilla}
Hoffmann, J., Borgeaud, S., Mensch, A., Buchatskaya, E., Cai, T., Rutherford, E., de~Las~Casas, D., Hendricks, L.~A., Welbl, J., Clark, A., Hennigan, T., Noland, E., Millican, K., van~den Driessche, G., Damoc, B., Guy, A., Osindero, S., Simonyan, K., Elsen, E., Rae, J.~W., Vinyals, O., and Sifre, L.
\newblock Training compute-optimal large language models, 2022.
\newblock URL \url{https://arxiv.org/abs/2203.15556}.

\bibitem[Hu \& Ommer(2024)Hu and Ommer]{hu2024mask}
Hu, V.~T. and Ommer, B.
\newblock [{MASK}] is all you need, 2024.
\newblock URL \url{https://arxiv.org/abs/2412.06787}.

\bibitem[Kim et~al.(2025)Kim, Shah, Kontonis, Kakade, and Chen]{kim2025trainworstplanbest}
Kim, J., Shah, K., Kontonis, V., Kakade, S., and Chen, S.
\newblock Train for the worst, plan for the best: Understanding token ordering in masked diffusions, 2025.
\newblock URL \url{https://arxiv.org/abs/2502.06768}.

\bibitem[Kingma \& Ba(2017)Kingma and Ba]{kingma2017adam}
Kingma, D.~P. and Ba, J.
\newblock {A}dam: A method for stochastic optimization, 2017.
\newblock URL \url{https://arxiv.org/abs/1412.6980}.

\bibitem[Kingma \& Gao(2023)Kingma and Gao]{kingma2023understand}
Kingma, D.~P. and Gao, R.
\newblock Understanding diffusion objectives as the elbo with simple data augmentation, 2023.
\newblock URL \url{https://arxiv.org/abs/2303.00848}.

\bibitem[Kingma et~al.(2023)Kingma, Salimans, Poole, and Ho]{kingma2023variational}
Kingma, D.~P., Salimans, T., Poole, B., and Ho, J.
\newblock Variational diffusion models, 2023.
\newblock URL \url{https://arxiv.org/abs/2107.00630}.

\bibitem[Li et~al.(2022)Li, Thickstun, Gulrajani, Liang, and Hashimoto]{li2022diffusionlm}
Li, X.~L., Thickstun, J., Gulrajani, I., Liang, P., and Hashimoto, T.~B.
\newblock Diffusion-{LM} improves controllable text generation, 2022.
\newblock URL \url{https://arxiv.org/abs/2205.14217}.

\bibitem[Lipman et~al.(2022)Lipman, Chen, Ben-Hamu, Nickel, and Le]{lipman2022flowmatching}
Lipman, Y., Chen, R. T.~Q., Ben-Hamu, H., Nickel, M., and Le, M.
\newblock Flow matching for generative modeling, 2022.
\newblock URL \url{https://arxiv.org/abs/2210.02747}.

\bibitem[Liu et~al.(2022)Liu, Gong, and Liu]{liu2022rectifiedflow}
Liu, X., Gong, C., and Liu, Q.
\newblock Flow straight and fast: Learning to generate and transfer data with rectified flow, 2022.
\newblock URL \url{https://arxiv.org/abs/2209.03003}.

\bibitem[Lou et~al.(2024)Lou, Meng, and Ermon]{lou2024sedd}
Lou, A., Meng, C., and Ermon, S.
\newblock Discrete diffusion modeling by estimating the ratios of the data distribution, 2024.
\newblock URL \url{https://arxiv.org/abs/2310.16834}.

\bibitem[Mihaylov et~al.(2018)Mihaylov, Clark, Khot, and Sabharwal]{mihaylov2018openbookqa}
Mihaylov, T., Clark, P., Khot, T., and Sabharwal, A.
\newblock Can a suit of armor conduct electricity? {A} new dataset for open book question answering, 2018.
\newblock URL \url{https://arxiv.org/abs/1809.02789}.

\bibitem[Nguyen et~al.(2024)Nguyen, Baker, Neo, Roush, Kirsch, and Shwartz-Ziv]{nguyen2024minp}
Nguyen, M., Baker, A., Neo, C., Roush, A., Kirsch, A., and Shwartz-Ziv, R.
\newblock Turning up the heat: Min-p sampling for creative and coherent {LLM} outputs, 2024.
\newblock URL \url{https://arxiv.org/abs/2407.01082}.

\bibitem[Nie et~al.(2024)Nie, Zhu, Du, Pang, Liu, Zeng, Lin, and Li]{nie2024scaling}
Nie, S., Zhu, F., Du, C., Pang, T., Liu, Q., Zeng, G., Lin, M., and Li, C.
\newblock Scaling up masked diffusion models on text, 2024.
\newblock URL \url{https://arxiv.org/abs/2410.18514}.

\bibitem[OpenAI et~al.(2024)]{openai2024openaio1}
OpenAI et~al.
\newblock {OpenAI} o1 system card, 2024.
\newblock URL \url{https://arxiv.org/abs/2412.16720}.

\bibitem[Ou et~al.(2024)Ou, Nie, Xue, Zhu, Sun, Li, and Li]{ou2024absorbing}
Ou, J., Nie, S., Xue, K., Zhu, F., Sun, J., Li, Z., and Li, C.
\newblock Your absorbing discrete diffusion secretly models the conditional distributions of clean data, 2024.
\newblock URL \url{https://arxiv.org/abs/2406.03736}.

\bibitem[Peebles \& Xie(2023)Peebles and Xie]{peebles2023scalable}
Peebles, W. and Xie, S.
\newblock Scalable diffusion models with transformers, 2023.
\newblock URL \url{https://arxiv.org/abs/2212.09748}.

\bibitem[Radford et~al.(2018)Radford, Narasimhan, Salimans, and Sutskever]{radford2018improving}
Radford, A., Narasimhan, K., Salimans, T., and Sutskever, I.
\newblock Improving language understanding by generative pre-training.
\newblock 2018.
\newblock URL \url{https://cdn.openai.com/research-covers/language-unsupervised/language_understanding_paper.pdf}.

\bibitem[Radford et~al.(2019)Radford, Wu, Child, Luan, Amodei, and Sutskever]{radford2019language}
Radford, A., Wu, J., Child, R., Luan, D., Amodei, D., and Sutskever, I.
\newblock Language models are unsupervised multitask learners.
\newblock 2019.

\bibitem[Ranzato et~al.(2015)Ranzato, Chopra, Auli, and Zaremba]{ranzato2015sequence}
Ranzato, M., Chopra, S., Auli, M., and Zaremba, W.
\newblock Sequence level training with recurrent neural networks, 2015.
\newblock URL \url{https://arxiv.org/abs/1511.06732}.

\bibitem[Sahoo et~al.(2024)Sahoo, Arriola, Schiff, Gokaslan, Marroquin, Chiu, Rush, and Kuleshov]{sahoo2024simple}
Sahoo, S.~S., Arriola, M., Schiff, Y., Gokaslan, A., Marroquin, E., Chiu, J.~T., Rush, A., and Kuleshov, V.
\newblock Simple and effective masked diffusion language models, 2024.
\newblock URL \url{https://arxiv.org/abs/2406.07524}.

\bibitem[Sakaguchi et~al.(2019)Sakaguchi, Bras, Bhagavatula, and Choi]{sakaguchi2019winogrande}
Sakaguchi, K., Bras, R.~L., Bhagavatula, C., and Choi, Y.
\newblock {WinoGrande}: An adversarial {W}inograd schema challenge at scale, 2019.
\newblock URL \url{https://arxiv.org/abs/1907.10641}.

\bibitem[Shi et~al.(2024)Shi, Han, Wang, Doucet, and Titsias]{shi2024simplified}
Shi, J., Han, K., Wang, Z., Doucet, A., and Titsias, M.~K.
\newblock Simplified and generalized masked diffusion for discrete data, 2024.
\newblock URL \url{https://arxiv.org/abs/2406.04329}.

\bibitem[Sohl-Dickstein et~al.(2015)Sohl-Dickstein, Weiss, Maheswaranathan, and Ganguli]{sohldickstein2015deep}
Sohl-Dickstein, J., Weiss, E.~A., Maheswaranathan, N., and Ganguli, S.
\newblock Deep unsupervised learning using nonequilibrium thermodynamics, 2015.
\newblock URL \url{https://arxiv.org/abs/1503.03585}.

\bibitem[Tange(2024)]{tange2024parallel}
Tange, O.
\newblock Gnu {P}arallel 20241222 ('{B}ashar'), December 2024.
\newblock URL \url{https://doi.org/10.5281/zenodo.14550073}.
\newblock {GNU Parallel is a general parallelizer to run multiple serial command line programs in parallel without changing them.}

\bibitem[Touvron et~al.(2023{\natexlab{a}})Touvron, Lavril, Izacard, Martinet, Lachaux, Lacroix, Rozière, Goyal, Hambro, Azhar, Rodriguez, Joulin, Grave, and Lample]{touvron2023llama}
Touvron, H., Lavril, T., Izacard, G., Martinet, X., Lachaux, M.-A., Lacroix, T., Rozière, B., Goyal, N., Hambro, E., Azhar, F., Rodriguez, A., Joulin, A., Grave, E., and Lample, G.
\newblock {LLaMA}: Open and efficient foundation language models, 2023{\natexlab{a}}.
\newblock URL \url{https://arxiv.org/abs/2302.13971}.

\bibitem[Touvron et~al.(2023{\natexlab{b}})Touvron, Martin, Stone, et~al.]{touvron2023llama2}
Touvron, H., Martin, L., Stone, K., et~al.
\newblock {L}lama 2: Open foundation and fine-tuned chat models, 2023{\natexlab{b}}.
\newblock URL \url{https://arxiv.org/abs/2307.09288}.

\bibitem[van~den Oord et~al.(2016{\natexlab{a}})van~den Oord, Dieleman, Zen, Simonyan, Vinyals, Graves, Kalchbrenner, Senior, and Kavukcuoglu]{oord2016wavenet}
van~den Oord, A., Dieleman, S., Zen, H., Simonyan, K., Vinyals, O., Graves, A., Kalchbrenner, N., Senior, A., and Kavukcuoglu, K.
\newblock {W}ave{N}et: A generative model for raw audio, 2016{\natexlab{a}}.
\newblock URL \url{https://arxiv.org/abs/1609.03499}.

\bibitem[van~den Oord et~al.(2016{\natexlab{b}})van~den Oord, Kalchbrenner, Vinyals, Espeholt, Graves, and Kavukcuoglu]{oord2016pixelcnn}
van~den Oord, A., Kalchbrenner, N., Vinyals, O., Espeholt, L., Graves, A., and Kavukcuoglu, K.
\newblock Conditional image generation with {P}ixel{CNN} decoders, 2016{\natexlab{b}}.
\newblock URL \url{https://arxiv.org/abs/1606.05328}.

\bibitem[Wei et~al.(2022)Wei, Tay, Bommasani, Raffel, Zoph, Borgeaud, Yogatama, Bosma, Zhou, Metzler, Chi, Hashimoto, Vinyals, Liang, Dean, and Fedus]{wei2022emergent}
Wei, J., Tay, Y., Bommasani, R., Raffel, C., Zoph, B., Borgeaud, S., Yogatama, D., Bosma, M., Zhou, D., Metzler, D., Chi, E.~H., Hashimoto, T., Vinyals, O., Liang, P., Dean, J., and Fedus, W.
\newblock Emergent abilities of large language models, 2022.
\newblock URL \url{https://arxiv.org/abs/2206.07682}.

\bibitem[Welleck et~al.(2019)Welleck, Brantley, III, and Cho]{welleck2019nonmonotonic}
Welleck, S., Brantley, K., III, H.~D., and Cho, K.
\newblock Non-monotonic sequential text generation, 2019.
\newblock URL \url{https://arxiv.org/abs/1902.02192}.

\bibitem[Ye et~al.(2025)Ye, Gao, Gong, Zheng, Jiang, Li, and Kong]{ye2025mgdm}
Ye, J., Gao, J., Gong, S., Zheng, L., Jiang, X., Li, Z., and Kong, L.
\newblock Beyond autoregression: Discrete diffusion for complex reasoning and planning, 2025.
\newblock URL \url{https://arxiv.org/abs/2410.14157}.

\bibitem[Zellers et~al.(2019)Zellers, Holtzman, Bisk, Farhadi, and Choi]{zellers2019hellaswag}
Zellers, R., Holtzman, A., Bisk, Y., Farhadi, A., and Choi, Y.
\newblock {HellaSwag}: Can a machine really finish your sentence?, 2019.
\newblock URL \url{https://arxiv.org/abs/1905.07830}.

\bibitem[Zheng et~al.(2025)Zheng, Chen, Mao, Liu, Zhu, and Zhang]{zheng2025masked}
Zheng, K., Chen, Y., Mao, H., Liu, M.-Y., Zhu, J., and Zhang, Q.
\newblock Masked diffusion models are secretly time-agnostic masked models and exploit inaccurate categorical sampling, 2025.
\newblock URL \url{https://arxiv.org/abs/2409.02908}.

\end{thebibliography}
\bibliographystyle{icml2025}

%%%%%%%%%%%%%%%%%%%%%%%%%%%%%%%%%%%%%%%%%%%%%%%%%%%%%%%%%%%%%%%%%%%%%%%%%%%%%%%
%%%%%%%%%%%%%%%%%%%%%%%%%%%%%%%%%%%%%%%%%%%%%%%%%%%%%%%%%%%%%%%%%%%%%%%%%%%%%%%
% APPENDIX
%%%%%%%%%%%%%%%%%%%%%%%%%%%%%%%%%%%%%%%%%%%%%%%%%%%%%%%%%%%%%%%%%%%%%%%%%%%%%%%
%%%%%%%%%%%%%%%%%%%%%%%%%%%%%%%%%%%%%%%%%%%%%%%%%%%%%%%%%%%%%%%%%%%%%%%%%%%%%%%
\newpage

\appendix
\onecolumn

% \RestoreAddContentsLine

\tableofcontents

\newpage

\section{Uniform Noise and Model Capacity}
\label{app:scaling}

\begin{figure*}[t]
    \centering
    \begin{subfigure}[t]{0.35\textwidth}
        \centering
        \includegraphics[height=1.6in]{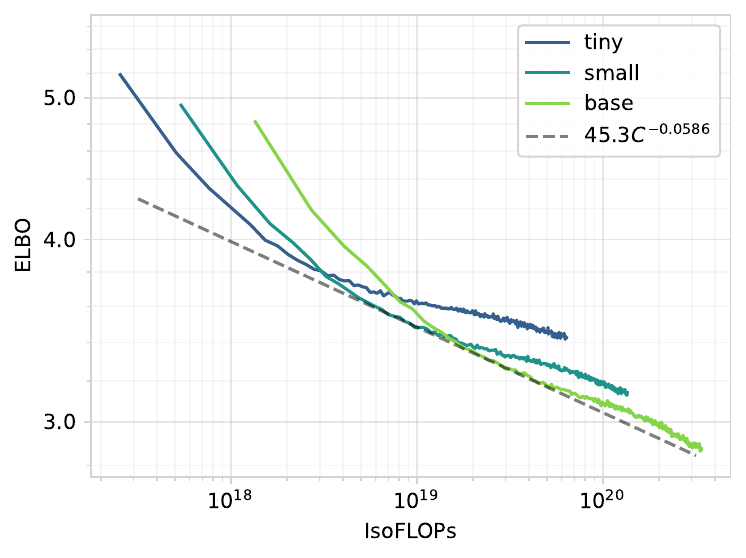}
        \vspace{-2mm}
        \caption{\(p_u=0.0\)}
        \label{fig:scaling_pu00}
    \end{subfigure}
    \hfill
    \begin{subfigure}[t]{0.32\textwidth}
        \centering
        \includegraphics[height=1.6in]{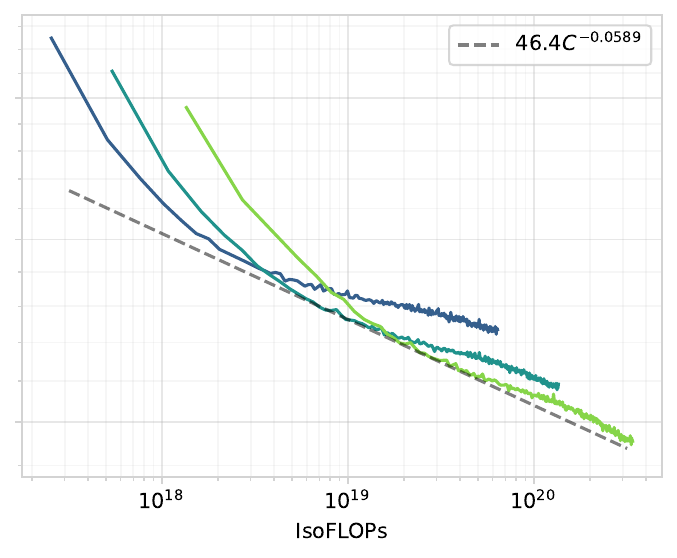}
        \vspace{-2mm}
        \caption{\(p_u=0.1\)}
        \label{fig:scaling_pu01}
    \end{subfigure}
    \hfill
    \begin{subfigure}[t]{0.32\textwidth}
        \centering
        \includegraphics[height=1.6in]{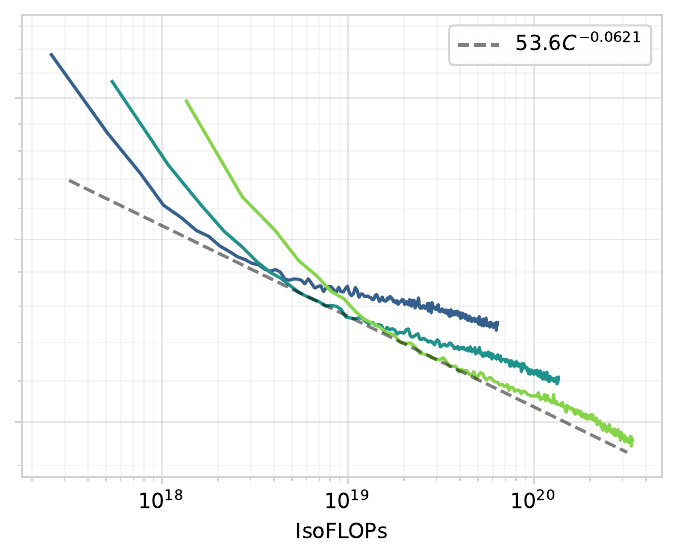}
        \vspace{-2mm}
        \caption{\(p_u=0.2\)}
        \label{fig:scaling_pu02}
    \end{subfigure}
    \vspace{-3mm}
    \caption{Plotting the compute-efficient frontier reveals different scaling behaviors for different uniform noise levels, revealing that training with uniform noise benefits slightly more from scaling compute compared to the mask-only setting.}
    \label{fig:scaling}
    \vspace{-3mm}
\end{figure*}

In Section~\ref{sec:experiments}, we have observed that the addition of uniform noise can pose a challenge, and even with improvements to the weighting function, the likelihood of trained models decreases as the proportion of uniform noise increases.
Intuitively speaking, this is not entirely surprising since the addition of uniform noise makes the training task strictly more difficult: No longer can the model take for granted that every unmasked token is correct.
Instead, it has to consider every token in the context and, if necessary, replace it with the correct one.
This intuitive explanation suggests that the reason for the observed discrepancy in performance may be a lack of model capacity, in which case we would expect larger models to be less affected by the addition of uniform noise.

To test this hypothesis, we scale the number of parameters while keeping the training horizon constant and train models of sizes \textsc{tiny}, \textsc{small}, and \textsc{base} on different uniform noise levels \(p_u \in \{0.0, 0.1, 0.2\}\).
We then plot the compute-efficient frontier as an exponential fit to the pareto-optimal validation ELBO (Figure~\ref{fig:scaling}).
For computing IsoFLOPs of our models, we follow the method from \citet{hoffmann2022chinchilla}, Appendix~F.
Due to resource constraints, our setup is somewhat limited: The sample size is limited to three different compute budgets for each noise level, the largest of which is still comparatively small at \(3.3\times 10^{20}\) FLOPs.
As a point of reference, many signature capabilities of modern LLMs only start emerging around \(10^{22}\) FLOPs \citep{wei2022emergent}, which is still 2 orders of magnitude higher than our largest compute budget.
With this being said, we do indeed observe a consistent albeit small trend of higher levels of uniform noise scaling better with more compute.
While the mask-only setting (\(p_u = 0.0\)) has a scaling exponent of \(-0.0586\), adding uniform noise increases the scaling exponent to \(-0.0589\) and \(-0.0621\) for \(p_u=0.1\) and \(p_u=0.2\) respectively.
Extrapolating this trend predicts that the \(p_u = 0.2\) setting will overtake \(p_u=0.0\) around \(10^{21}\) FLOPs, a compute budget that is routinely reached by mid- to large-scale training runs \cite{brown2020gpt3, touvron2023llama2, grattafiori2024llama3, deepseekai2024deepseekv3}.
However, it has to be stressed that the limitations of our setup make such a prediction highly unreliable. %, and further investigation will be required before drawing any reliable conclusions.
For example, the optimal amount of uniform noise may change with model size and/or compute budget, or certain hyperparameters like the learning rate may have different optimal values depending on \(p_u\).
Nevertheless, the observed scaling behavior is promising and warrants further investigation.

\section{GIDD Downstream Performance}
\label{app:gidd_downstream}

\begin{table*}[t]
    \centering
    \begin{tabular}{llcccccccccc}
        \toprule
        Size & Model & \hspace{-4pt}Train. toks.\hspace{-4pt} & ARC-e & ARC-c & BoolQ & \hspace{-4pt}Hellaswag\hspace{-4pt} & PIQA & OBQA & \hspace{-3pt}WinoG.\hspace{-3pt} & Avg. \\ 
        \midrule
        % ----------------- Group 1: tiny -----------------
        \textsc{tiny} 
        & GIDD+ (\(p_u=0.0\))\hspace{-4pt} & 131B 
           & \textbf{28.28} & \textbf{24.49} & 49.97 & \textbf{27.78} & 54.62 & 26.20 & 51.30 & \textbf{37.52}\\
        & GIDD+ (\(p_u=0.1\))\hspace{-4pt} & 131B 
           & 27.69 & 23.21 & \textbf{50.89} & 26.75 & \textbf{55.28} & 24.60 & \textbf{52.25} & 37.24\\
        & GIDD+ (\(p_u=0.2\))\hspace{-4pt} & 131B 
           & 26.73 & 23.12 & 50.18 & 25.61 & 51.52 & \textbf{27.40} & 49.33 & 36.27\\ 
        \midrule
        % ----------------- Group 3: small (part 2) -----------------
        \textsc{small} 
        & GIDD+ (\(p_u=0.0\))\hspace{-4pt} & 131B 
           & \textbf{31.57} & \textbf{24.57} & \textbf{50.92} & \textbf{31.36} & \textbf{56.31} & 27.80 & \textbf{52.57} & \textbf{39.30}\\
        & GIDD+ (\(p_u=0.1\))\hspace{-4pt} & 131B 
           & 28.45 & 21.93 & 50.73 & 28.37 & 55.82 & \textbf{29.20} & 52.17 & 38.10\\
        & GIDD+ (\(p_u=0.2\))\hspace{-4pt} & 131B 
           & 27.99 & 22.87 & 50.46 & 26.92 & 52.94 & 26.40 & 50.04 & 36.80\\ 
        \midrule
        % ----------------- Group 4: base -----------------
        \textsc{base} 
        & GIDD+ (\(p_u=0.0\))\hspace{-4pt} & 131B 
            % & \textbf{32.74} & \textbf{25.26} & 50.52 & \textbf{36.65} & \textbf{55.17} & 27.60 & \textbf{49.88} & \textbf{39.69} \\
            & \textbf{32.58} & \textbf{24.40} & 50.86 & \textbf{36.62} & \textbf{58.05} & \textbf{29.2} & 51.54 & \textbf{40.46} \\
        & GIDD+ (\(p_u=0.1\))\hspace{-4pt} & 131B 
            % & 30.60 & 24.32 & 50.03 & 32.65 & 54.24 & 26.80 & 49.17 & 38.26 \\
            & 30.13 & 23.04 & \textbf{51.10} & 31.91 & 56.15 & 27.6 & \textbf{52.33} & 38.89 \\
        & GIDD+ (\(p_u=0.2\))\hspace{-4pt} & 131B 
            % & 29.63 & 24.15 & \textbf{51.80} & 30.51 & 52.29 & \textbf{27.80} & 48.15 & 37.76 \\
            & 28.75 & 24.15 & 50.95 & 29.82 & 53.81 & 26.8 & 49.25 & 37.65 \\
        \bottomrule
    \end{tabular}
    \caption{Downstream performance of GIDD increases consistently with model size, but hybrid noise models lag behind their mask-only counterparts across scales.}
    \label{tab:gidd_downstream}
\end{table*}

Benchmark accuracies for GIDD+ models of all three sizes (\textsc{tiny}, \textsc{small}, \textsc{base}) and all uniform noise levels (\(p_u \in \{0.0, 0.1, 0.2\}\)) are given in Table~\ref{tab:gidd_downstream}.
We find that performance improves consistently with model size, regardless of uniform noise level.
However, the models trained with uniform noise slightly but consistently lag behind the mask-only model.

\section{Self-Correction Step}
\label{app:self_correction}

\paragraph{Self-Correction Algorithm.}
Our self-correction algorithm is a fixed-point iteration that can be applied to any generated sample that is fully (or partially) denoised.
The high-level idea is to query the model to identify tokens that it thinks are wrong and should be replaced, and to then iteratively replace a single token at a time so as to avoid reintroducing conflicting tokens.
A pseudocode implementation is given in Algorithm~\ref{alg:self_correction}.
In practice, we find that convergence often comes in the form of oscillation between two or more equally-good states (in terms of self-accuracy), so we additionally implement early-stopping based on self-accuracy.
An early-stopping patience of 32 is found to work well.

\begin{algorithm}[t]
    \caption{Self-Correction Step}
    \label{alg:self_correction}
    \begin{algorithmic}
        \STATE Let \(Z_t = (z_t^{(1)}, \dots z_t^{(L)})\) be a (partially) denoised sequence up to noise level \(t\).
        \STATE Let \(f_\theta(Z_t, t)\) denote a (trained) discrete denoising neural network.
        \WHILE{not converged}
            % \STATE \(p_\theta(z_t) \gets \mathrm{softmax}(f_\theta(Z_t, t) / \tau)\)
            \STATE \(\vx_\theta^{(1:L)} \gets \mathrm{softmax}(f_\theta(Z_t, t) / \tau)\)
            \FOR{\(i \in \{1, \dots, L\}\)}
                \STATE \({z'}_t^{(i)} \sim \mathrm{Cat}(\vx_\theta^{(i)})\)
            \ENDFOR
            \STATE \(S \gets \{i | i \in \{1, \dots, L\} \text{ and } {z'}_t^{(i)} \neq z_t^{(i)} \}\)
            \STATE \(j \gets \arg \max_{i \in S} \vx_\theta^{(i)}({z'}_t^{(i)})\)
            \STATE \(z_t^{(j)} \gets {z'}_t^{(j)}\)
        \ENDWHILE
    \end{algorithmic}
\end{algorithm}

\paragraph{Additional Results.}
In addition to the results for our \textsc{base} models given in the main text, we also report improvements for \textsc{small} models, which are very comparable.
A notable difference is that for \textsc{small} models, \(p_u = 0.1\) has a consistently lower generative PPL, suggesting that \(p_u=0.2\) is too much uniform noise for this size.
We also include our MDM baseline, as the comparison between same-sized models is fair.
Despite our GIDD+ (\(p_u=0.0\)) exhibiting a weak but present ability to self-correct, MDM has no such ability and applying the self-correction step only makes the samples worse (Figure~\ref{fig:small_self_correction}).
% We believe that the most likely cause for this is the fact that, for numerical stability, \(p_u\) is actually set to \(2 \times 10^{-9}\) instead of exactly zero, leading to a very small amount of uniform noise still present during training.
The most likely cause for this difference are implementation details, where, for numerical stability, \(p_u\) is not actually set to zero, but to a very small value, which still results in approx.~10 (out of 262'144) random tokens per batch due to limited numerical precision.
Alternative explanations may look at differences in hyperparameters, or the numerically non-zero weights on unmasked tokens in the GIDD setup.
More examples from the self-correction experiment are given in Table~\ref{tab:self_correction_examples}.

\begin{figure}[t]
    \centering
    \begin{subfigure}[t]{0.32\textwidth}
        \centering
        \includegraphics[height=1.54in]{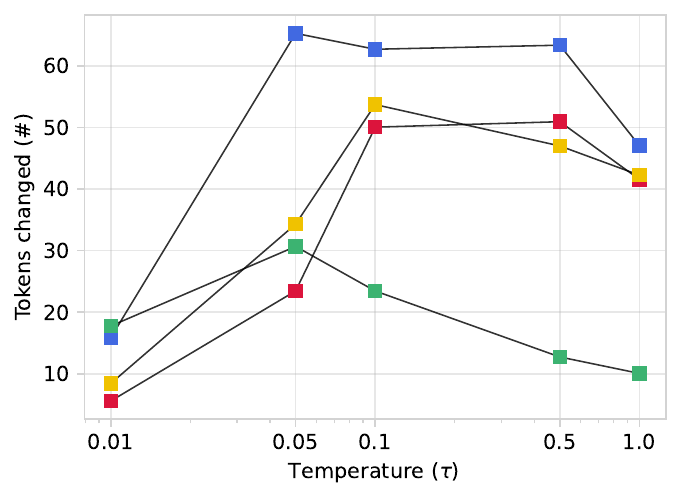}
    \end{subfigure}
    \hfill
    \begin{subfigure}[t]{0.33\textwidth}
        \centering
        \includegraphics[height=1.54in]{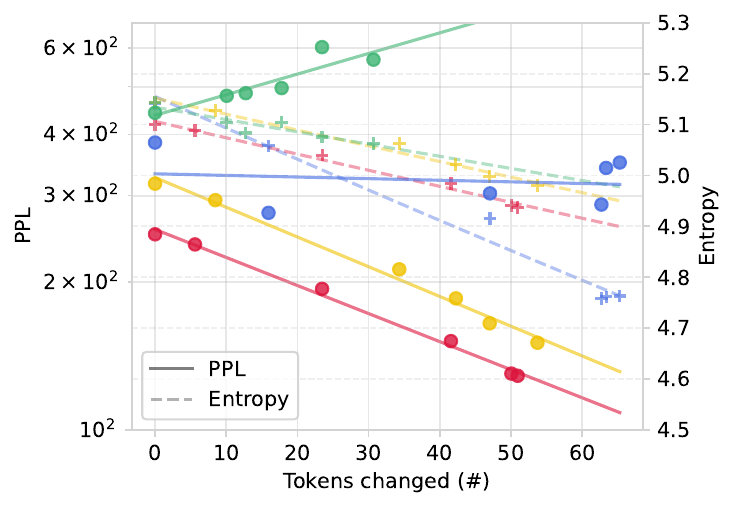}
    \end{subfigure}
    \hfill
    \begin{subfigure}[t]{0.33\textwidth}
        \centering
        \includegraphics[height=1.54in]{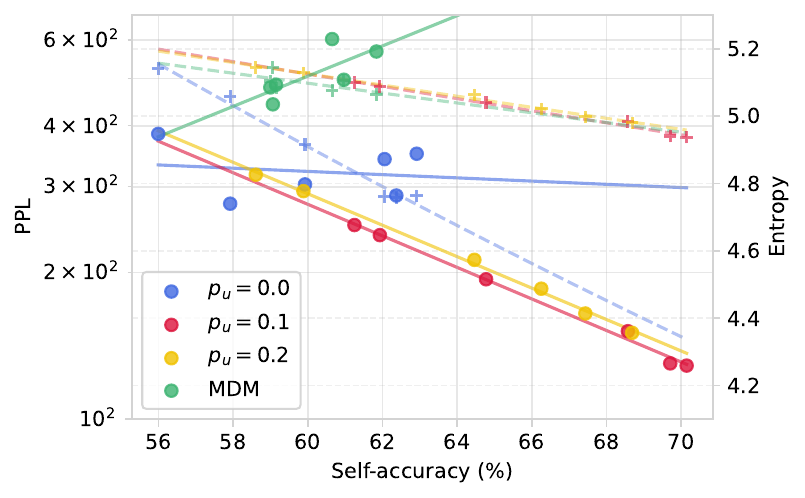}
    \end{subfigure}
    \vspace{-3mm}
    \caption{
        % From left to right: (a) Our self-correction resamples up to 10\% of tokens independent of the uniform noise level. A moderate temperature of \(\tau \in [0.1, 0.5]\) is found to be most effective. (b) For models trained on uniform noise, the more tokens are resampled, the better. The mask-only model, though, is unable to improve quality despite resampling as many tokens. (c) Looking at the correlation between self-accuracy and generative PPL reveals that hybrid models are significantly better at judging the quality of their own samples.
        Self-correction results for our \textsc{small} models.
        While the overall trend is the same as for \textsc{base} models, the best-performing model uses \(p_u = 0.1\) instead of \(p_u = 0.2\), suggesting that the ideal uniform noise ratio depends on model size.
        The MDM baseline is noticeably worse than the mask-only GIDD implementation, with self-correction yielding negative improvements, which is likely due to numerical limitations in the GIDD implementation.
    }
    \label{fig:small_self_correction}
\end{figure}

\section{Number of Denoising Steps}
\label{app:num_denoising_steps}

Comparing sample quality for different numbers of denoising steps, we find that, as one would expect, sample quality in terms of generative PPL improves consistently with more denoising steps, up to around 128-256 steps when a plateau is reached (Figure~\ref{fig:num_denoising_steps}).
Notably, the sample quality of models trained with uniform noise is significantly better compared to those trained without. This trend is especially strong for small numbers of denoising steps, suggesting that self-correction may help in those scenarios.

\begin{figure}[t]
    \centering
    \begin{subfigure}[t]{0.4\textwidth}
        \centering
        \includegraphics[height=1.6in]{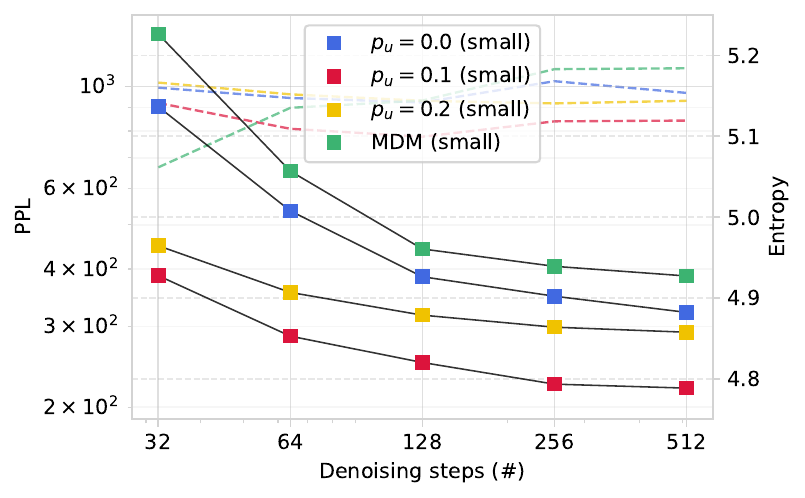}
    \end{subfigure}
    \begin{subfigure}[t]{0.4\textwidth}
        \centering
        \includegraphics[height=1.6in]{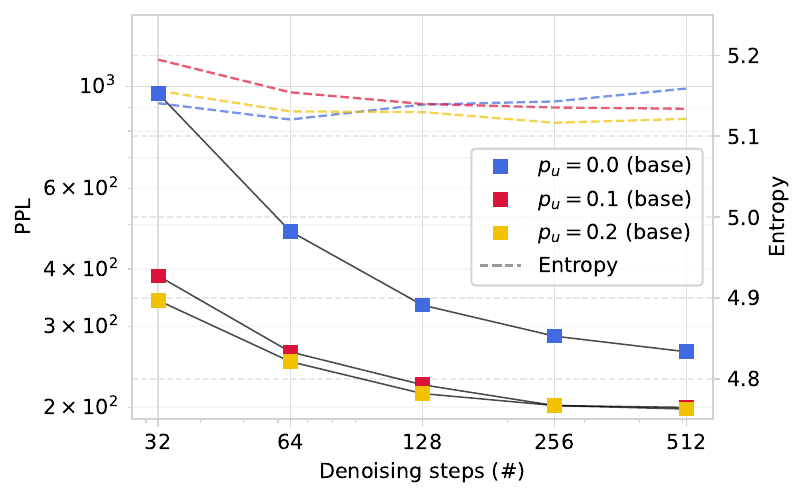}
    \end{subfigure}
    \vspace{-3mm}
    \caption{Generative PPL (via Gemma 2 9B) decreases monotonically with increasing numbers of denoising steps. Interestingly, the presence of uniform noise during training seems to benefit sample quality overall, but especially so for the low-step regime.}
    \label{fig:num_denoising_steps}
\end{figure}

\section{Training Details}
\label{app:training_details}

All our models are based on the DiT architecture \citep{peebles2023scalable} and use the GPT2 tokenizer \citep{radford2019language}.
We train models of three different sizes: \textsc{tiny} (\(L=6\), \(H=8\), \(d=512\); 28.4M non-emb.~params.), \textsc{small} (\(L=12\), \(H=12\), \(d=768\); 92.1M non-emb.~params.), and \textsc{base} (\(L=24\), \(H=16\), \(d=1024\); 321.2M non-emb.~params.), where \(L\) denotes the number of layers, \(H\) the number of attention heads, and \(d\) the dimensionality of hidden states.
All models are trained with a context size of 512 tokens and batch size of 512 for 500k steps (resulting in a total of 131B training tokens) on a single node of 8 NVIDIA A100/H100-80GB GPUs in \texttt{bfloat16} precision using Pytorch's mixed precision training (\texttt{torch.cuda.autocast}).
For the sake of comparison with the literature, some models are trained for twice as long, resulting in 262B training tokens.

For optimization, we use the Adam optimizer \citep{kingma2017adam} with \(\beta=(0.9, 0.99)\), \(\epsilon=10^{-9}\), and a learning rate of \(5\cdot10^{-4}\).
The learning rate is warmed up linearly for the first 10k steps and then decayed using a cosine schedule to 10\% of the initial learning rate.
We use weight decay \(0.0\) for our ablations (unless stated otherwise) and \(0.02\) for the final configuration, also referred to as GIDD+.
We also use gradient clipping to a norm of \(1.0\).

For our noise schedule, we sample \(t \sim \cU(\epsilon, 1 - \epsilon)\) with \(\epsilon = 10^{-4}\) using low-discrepancy sampling \citep{kingma2023variational}.
By default, all loss weights (including the unclamped ELBO weighting function) are clipped to \(10^4\) to prevent training instability.
For sequences longer than 512 tokens we select a random window of 512 tokens, while short sequences are padded to a length of 512.
Padding tokens are included in the loss calculation but are ignored in the ELBO.

\begin{table}
    \centering
    \begin{tabular}{p{8.1cm}|p{8.1cm}}
        \specialrule{0.875pt}{0pt}{2.5pt}
        % -------- Example 1 -------- 
        \multicolumn{2}{c}{\textit{Example 1}}\\
        \specialrule{0.875pt}{1.5pt}{0pt}
        Republic of Deltaos have made some significant\hlred{ improvements} to the patch in game\hlred{.}2: \hlred{–} “Death in the Vengeance” patch. Notable changes\hlred{ also} included: &
        Republic of Deltaos have made some significant\hlgreen{ changes} to the patch in game\hlgreen{ v}2: \hlgreen{The} “Death in the Vengeance” patch. Notable changes\hlgreen{ are} included: \\
        \specialrule{\cmidrulewidth}{0pt}{0pt}
        \hlred{Proflah} Ring can be reached with the ring head up. & \hlgreen{Profession} Ring can be reached with the ring head up. \\
        \specialrule{\cmidrulewidth}{0pt}{0pt}
        You can select characters from their\hlred{ application in} choice. &
        You can select characters from their\hlgreen{ class of} choice. \\
        \specialrule{\cmidrulewidth}{0pt}{0pt}
        Growth returns\hlred{  below} your output level when floating between the default dragon and the highest\hlred{ active rev tier.} &
        Growth returns\hlgreen{ to} your output level when floating between the default level and the highest\hlgreen{ level revamp.} \\
        \specialrule{\cmidrulewidth}{0pt}{0pt}
        Borg followed by Radiant World to earn and the coveted tutorial is now also available\hlred{ for Edition} 12. &
        Borg followed by Radiant World to earn and the coveted tutorial is now also available\hlgreen{ at level} 12. \\
        \specialrule{0.875pt}{0.25pt}{2.5pt}
        % -------- Example 2 -------- 
        \multicolumn{2}{c}{\textit{Example 2}}\\
        \specialrule{0.875pt}{1.5pt}{0pt}
        a new industrial\hlred{ renaissance movement} which uses the\hlred{ winner}’\hlred{swould} of GE technologies &
        a new industrial\hlgreen{ manufacturing platform} which uses the\hlgreen{ lion}’\hlgreen{s share} of GE technologies \\
        \specialrule{\cmidrulewidth}{0pt}{0pt}
        strong link between\hlred{ both}\hlred{ US at} manufacturing and\hlred{ integral}\hlred{ US manufacturing production} platform in\hlred{ America} &
        strong link between\hlgreen{ the}\hlgreen{ US industrial} manufacturing and\hlgreen{ the}\hlgreen{ US industrial manufacturing} platform in\hlgreen{ Europe} \\
        \specialrule{0.875pt}{0.25pt}{2.5pt}
        % -------- Example 3 -------- 
        \multicolumn{2}{c}{\textit{Example 3}}\\
        \specialrule{0.875pt}{1.5pt}{0pt}
        \hlred{short} of the\hlred{ feeds public} front\hlred{ music ming} &
        \hlgreen{instead} of the\hlgreen{ free music} front\hlgreen{ing service} \\
        \specialrule{\cmidrulewidth}{0pt}{0pt}
        unlimited free music streaming and high-quality content, available\hlred{ whenever} you\hlred{ Webs} for\hlred{ your} subscription.
        & unlimited free music streaming and high-quality content, available\hlgreen{ when} you\hlgreen{ pay} for\hlgreen{ a} subscription. \\
        \specialrule{0.875pt}{0.25pt}{2.5pt}
        % -------- Example 4 -------- 
        \multicolumn{2}{c}{\textit{Example 4}}\\
        \specialrule{0.875pt}{1.5pt}{0pt}
        Journal publishing has opened the world to these kinds\hlred{ scientists} and scientists\hlred{ deserve} an encouraging\hlred{ place} to look. &
        Journal publishing has opened the world to these kinds\hlgreen{,} and scientists\hlgreen{ have} an encouraging\hlgreen{ way} to look. \\
        \specialrule{\cmidrulewidth}{0pt}{0pt}
        Some researchers\hlred{ can} discuss\hlred{ several} papers, others are putting many many specific types of\hlred{ material}. &
        Some researchers\hlgreen{ openly} discuss\hlgreen{ their} papers, others are putting many many specific types of\hlgreen{ papers}. \\
        \specialrule{\cmidrulewidth}{0pt}{0pt}
        Globality postulates the circumstances – researchers learn from the\hlred{ very reputation} of other\hlred{ study} team. &
        Globality postulates the circumstances – researchers learn from the\hlgreen{ good work} of other\hlgreen{ research} team. \\
        \specialrule{0.875pt}{0pt}{0pt}
    \end{tabular}
    \caption{Examples from our self-correction experiments reveal a noticeable qualitative improvement: The model is able to correct grammatical mistakes (Ex.~2, 3), improve coherence (Ex.~3), and improve the choice of words given the context (Ex.~1, 4). The examples are from GIDD+ \textsc{base} (\(p_u=0.2\)) with self-correction temperature \(\tau = 0.1\).}
    \label{tab:self_correction_examples}
\end{table}

\section{Evaluation Details}

For computing validation metrics, we reserve the last 100k samples (\(\approx\)1.25\%) of the training set (OpenWebText).
Validation samples that are longer than the context length are cropped to a random window for consistency with training.

For downstream performance evaluation, we use the \texttt{lm-eval-harness}\footnote{\href{https://github.com/EleutherAI/lm-evaluation-harness}{https://github.com/EleutherAI/lm-evaluation-harness}} \citep{eval-harness} with a custom model that uses the ELBO to estimate per-token log-likelihoods.
% Instead of only considering tokens in the completion, as is common for autoregressive models, preliminary experiments suggested slightly better performance when computing the per-token likelihood over the entire sequence (including the context).
We only consider likelihood-based multiple-choice tasks where the per-token likelihood is computed over both the context and the completion (but not the padding), as preliminary experiments have found that this produces slightly better results.
We use \(T=128\) evenly spaced samples (in \([\epsilon, 1 - \epsilon]\)) for \(t\) to estimate the ELBO.
Samples longer than the context size of our model (only applies to BoolQ) are truncated by taking the final \(N\) tokens, similar to context scrolling for autoregressive models.

Our generative perplexity is based on the \texttt{google/gemma-2-9b}\footnote{\href{https://huggingface.co/google/gemma-2-9b}{https://huggingface.co/google/gemma-2-9b}} model \citep{gemma_2024} as it provides a good tradeoff between language modeling accuracy and efficiency.
Prior work often relies on the GPT2-large model for generative PPL computation, but we believe that in order to draw meaningful conclusions, it is crucial to use a grading model that is sufficiently more capable than the graded model in order reasonably provide a proxy of the ``ground truth'' distribution of natural language.

Unigram entropy is computed following \citet{zheng2025masked} (App.~H.1) by computing the entropy of the token-level frequency distribution over unique tokens in the sequence.
This means that, for our maximum sequence length of 512, the upper bound is \(\log(512) \approx 6.24\) in case all tokens are unique.

We use the GNU \texttt{parallel} software \citep{tange2024parallel} for streamlining the execution of our evaluation scripts.

\subsection{LLM-based Evaluation}
\label{app:llm_judge}
We qualitatively evaluate unconditionally generated samples before and after self-correction using the GPT-4o API (\texttt{gpt-4o-2024-08-06}) by instructing it to grade the samples in terms of clarity, grammaticality, factuality, writing style, and creativity on a scale from 1--10.
The model is provided a sample text and instructed to first give a justification and then a grade for each category, and to return the result as a JSON string for ease of parsing.
See Figure~\ref{fig:llm_judge_prompt} for the exact prompt used.

\begin{figure}
    \centering
    \begin{lightgraybox}
        \vspace{-2mm}
        \begin{lstlisting}Your task is to grade the following excerpt from a document in five aspects of quality. The scoring should be done on a scale of 1 to 10, where 1 is the worst and 10 is the best. The aspects to be graded are:
1. Clarity and coherence: Keeping in mind that the text may be cut off in the beginning and at the end due to it being an excerpt, how clear and understandable is the text?
2. Grammaticality: Are there any grammatical errors in the text?
3. Factuality: If applicable, is the factually verifiable information stated in the text (e.g. facts about geography, history, etc.) accurate and reliable?
4. Writing style: How well is the text written in terms of style and fluency? Do the sentences flow well, is the vocabulary appropriate?
5. Creativity: How original and creative is the text?

For each category, give a short justification before providing the final score. Your answer should be following the JSON format, with one top-level key for each aspect (`clarity`, `grammaticality`, `factuality`, `style`, and `creativity`).
Each aspect, in turn, should be a JSON object consisting of a `reasoning` and `score` key in that order. The `reasoning` key should contain a short justification for the score, and the `score` key should contain the score itself.

Please keep the following in mind:
- Give your justification first before deciding on a final score.
- Only output the JSON containing the justifications and scores and nothing else.
- Keep in mind that the presented paragraph may be an excerpt from a longer document, so it may not be fully self-contained. Do not deduct points for issues arising from this.

The text to be graded is as follows:
```
{text}
```\end{lstlisting}
        \vspace{-2mm}
    \end{lightgraybox}  
    \caption{Prompt used for the LLM-based evaluation of sample quality.}
    \label{fig:llm_judge_prompt}
\end{figure}

\section{Evaluating Generative Perplexity of Diffusion Models}
\label{app:gen_ppl}

\begin{table}
    \centering
    \begin{tabular}{lp{14cm}}
        \toprule
        Min-\(p\) & Sample \\
        \midrule
        & Model: GIDD+ (\textsc{small}, \(p_u=0.0\)) \\
        \midrule
        \(p=10^{-7}\) & \vspace{-14pt}\begin{lstlisting}
The second time in a month Henrik Zqvist's wrist is nothing concerning for Perproductu, the American.\n\n"It's going to make a break," Zetterberg said, "And hope it is comfortable enough for the next three games. I got to practice and we had warm runs this week."\n\nCavoring? "for a handful of games."\n\nPlacing $6.9 million for the 2016 2013 first-round pick, the Americans are happy to have his availability now and offered him some flexibility.\n\nThe Wings' expectations are varied in both player nature and rotation.\n\n"It's a different situation as we know I'm going to be around a little bit more (t my right wrist), [...]
        \end{lstlisting} \vspace{-18pt} \\
        \midrule
        \(p=10^{-5}\) & \vspace{-14pt}\begin{lstlisting}
The majority of high-risk projects appear to be delayed indefinitely because another is sought to replace them.\n\nIt is understood that the council has set up a commission to decide the timeframe of what are expected to be by Christmas, starting on 15 October.\n\n"We have undertaken a thorough and rather robust assessment of the ongoing activity in the council, so it is considered that the period to justify a review is further too long," reads a submission to councillors.\n\nThe review is being carried out, 10 months after the Conservative administration lodged an election proposal in April.[...]
        \end{lstlisting} \vspace{-18pt} \\
        \midrule
        \(p=10^{-3}\) & \vspace{-14pt}\begin{lstlisting}
\n\n6\n\n2\n\n3\n\n4\n\n3\n\n6\n\n5\n\n8\n\n1\n\n4\n\n8\n\n\n8\n\n\n\n\n8\n\n\n1\n\n\n\n5\n\n5\n\n\n4\n\n2\n\n2\n\n1\n\n\n2\n\n8\n\n\n4\n\n\n\n2\n\n2\n\n8\n\n2\n\n2\n\n4\n\n2\n\n\n\n\n2\n\n8\n\n\n\n1\n\n4\n\n1\n\n2\n\n4\n\n4\n\n2\n\n\n\n\n2\n\n\n1\n\n1\n\n1\n\n1\n\n1\n\n2\n\n2\n\n\n\n\n3\n\n2\n\n\n\n\n1\n\n2\n\n\n\n1\n\n\n\n\n2\n\n3\n\n4\n\n3\n\n4\n\n\n\n\n2\n\n2\n\n5\n\n\n\n\n\n\n1\n\n\n\n\n2\n\n\n\n2\n\n3\n\n4\n2\n\n52\n\n2[...]
        \end{lstlisting} \vspace{-18pt} \\
        \midrule
        & Model: MDM (\textsc{small}, reimpl.) \\
        \midrule
        \(p=10^{-7}\) & \vspace{-14pt}\begin{lstlisting}
Incredible. David Segol making his ankle...... (Photo by photo) (quotes from Evan Jones) ...\n\nThe Nigerian was wearing tight pants since being 'anemic'.\n\n"I basically wanted French au shorts in one piece to wear with everything that was on Internet in 2010," he said, wearing Section 50 Lo Lim Channt out. "The things w that I wear things that I have to work."\n\nHe is using a words, which means "Planned for the swollen collarbone," he is also team by fall good treatment. Ponder Mikko the kn/sw writer that during one World Leberbiroux has since made his lap![...]
        \end{lstlisting} \vspace{-18pt} \\
        \midrule
        \(p=10^{-5}\) & \vspace{-14pt}\begin{lstlisting}
Crowwick said the staff has a "been growing" around the Academy of Newcastle City\n\nThe club's Academy which works with youth players is being used in the first time as the first look teams on their new training starting ground.\n\nThe Newcastle National School will grow players, develop coaches and implement the system for the rest of the game.\n\nCoach Steve Curtis said the club has been keen to show players that the youngsters can develop the new system at international level.\n\n"We're hoping those who come that will learn his way there and [...]
        \end{lstlisting} \vspace{-18pt} \\
        \midrule
        \(p=10^{-3}\) & \vspace{-14pt}\begin{lstlisting}
A vehicle is in the side of vehicle of, located on the side of the car by Facebook.\n\nA car is located on the rear of the car of, in side of area.\n\nThe driver of the vehicle is in the area of the rear of, located on the side of the vehicle from Facebook.com.\n\nA driver has parked the area of the vehicle on the side, side of the Road of.\n\nThe owner of the vehicle has been located on the side of the rear of, located on the side of a road on the road of[?] of, the Newark, N.J.\n\nThe owner is attempting to locate passenger in the area of vehicle and parked [...]
        \end{lstlisting} \vspace{-18pt} \\
        \midrule
    \end{tabular}
    \caption{Samples generated with different min-\(p\) cutoff values. Despite generative PPL consistently decreasing for larger cutoffs, the sample quality starts to deteriorate drastically for values larger than \(p \geq 10^{-4}\).}
    \label{tab:min_p_samples}
\end{table}

\begin{figure}
    \centering
    \includegraphics[height=1.8in]{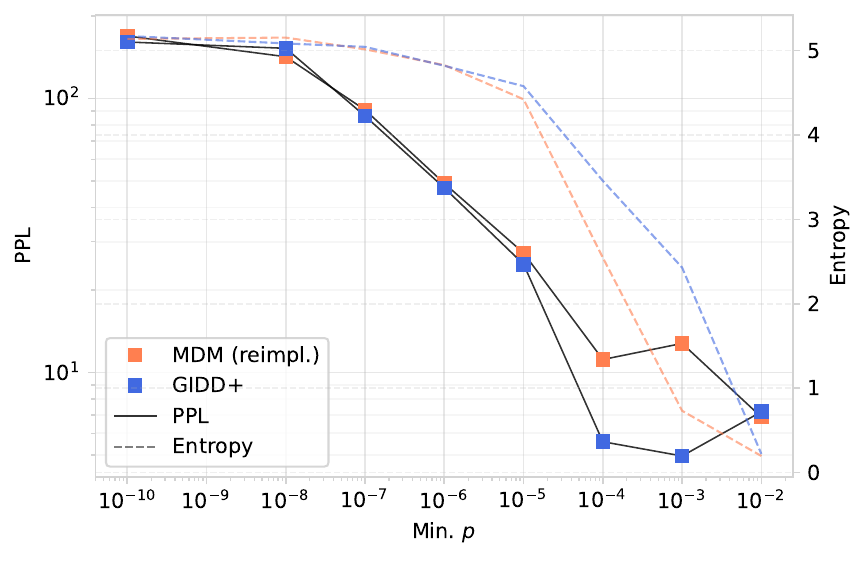}
    \caption{Generative perplexity as measured by GPT2-large decreases consistently as the min-\(p\) cutoff is increased. Unfortunately, this trend does not correlate with subjective quality, showing a major limitation of generative PPL with GPT2-large.}
    \label{fig:gpt2_gen_ppl}
\end{figure}

Generative perplexity is an evaluation metric intended to measure the quality of generated samples with a grading model that is used as a proxy of the ground truth data distribution.
Under this assumption, we deem samples with a high likelihood under the grading model to be of higher quality or at least to be more likely under the data distribution.
While there are many potential issues with this approach, results can be particularly misleading if the grading model is a bad proxy for the true likelihood of samples.
While prior work often relies on GPT2-large as a grading model, we find that this model suffers from failure modes that are typical for small (by today's standards) models.

Before going into detail about the exact failure modes we observe, we need to discuss another peculiarity that arises when sampling from discrete diffusion models.
A common approach to efficiently sample a categorical distribution is the Gumbel-max trick \citep{gumbel1954statistical}, which is used by \citet{shi2024simplified} (via JAX categorical sampling) and \citet{sahoo2024simple}.
As noted by \citet{shi2024simplified} in App.~G, this approach is somewhat numerically unstable and leads to an implicit regularization of the sampling distribution, effectively masking out very small probabilities (the authors report smaller than \(10^{-8}\)).
This is effectively a min-\(p\) sampling adapter, which has been found to improve sample quality of autoregressive language models as well \citep{nguyen2024minp}.
To study the effective of this min-\(p\) regularization, we explicitly implement it with a more numerically stable sampling algorithm based on binary search.
Measuring the generative PPL as per GPT2-large for different values of \(p\), we find a consistent decrease with the observed gen.~PPL of \(\sim\)90 around \(p=10^{-7}\) being consistent with what is reported for \textsc{small} models in the literature (Fig.~\ref{fig:gpt2_gen_ppl}).

However, as the cutoff probability increases, generative PPL drops to suspiciously low values.
Indeed, manual inspection of the samples reveals that the sample quality deteriorates drastically for larger cutoffs (examples in Tab.~\ref{tab:min_p_samples}).
These samples exhibit low diversity and a lot of repetition, failure modes that are typical for small (autoregressive) language models.
Despite this, these samples have very high likelihood under GPT2-large and hence are considered ``high quality'' under the generative PPL metric.
This shows the importance of using more capable grading models that are able to pick up on these failure modes and the effect that sampling parameters can have on the outcome of these experiments.
It also highlights the limitations of comparing absolute generative PPL numbers, as they can be misleading in isolation.
This is why unigram entropy can help spot and quantify any catastrophic collapse in diversity, though it also struggles to quantify more subtle losses of diversity in the \(\text{min-}p \leq 10^{-6}\) regime (Fig.~\ref{fig:gpt2_gen_ppl}).
While Gemma 2 9B also suffers from the failure mode described here to some extent, it should provide a much better proxy of the ``true distribution'' of natural language compared to the much weaker GPT2-large.

\section{Proofs}
\label{app:proofs}

\subsection{Conditional Mixing Schedule}
\label{app:cond_mixing_schedule}

\begin{lemma}
    \label{lem:cond_mixing_schedule}
    If \(\alpha_{t|s}\) and \(\beta_{t|s} \vpi_{t|s}\) are defined as in Proposition~\ref{prop:gidd_conditional}, and if \(\beta_{t|s} = \beta_t - \frac{\alpha_t}{\alpha_s}\beta_s\), then \(\alpha_{t|s} + \beta_{t|s} = 1\) and \(\vpi_{t|s}\) is a prob.~vector, i.e.~\(\vpi_{t|s} > 0\) and \(\vone^\top\vpi_{t|s} = 1\).
\end{lemma}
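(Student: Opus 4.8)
The plan is to separate the statement into the two algebraic facts $\alpha_{t|s}+\beta_{t|s}=1$ and $\vone^\top\vpi_{t|s}=1$, which are immediate, and the componentwise nonnegativity of $\vpi_{t|s}$, which carries the actual content. For the first identity I would simply substitute the definitions from Proposition~\ref{prop:gidd_conditional}: $\alpha_{t|s}+\beta_{t|s} = \tfrac{\alpha_t}{\alpha_s} + \beta_t - \tfrac{\alpha_t}{\alpha_s}\beta_s = \tfrac{\alpha_t}{\alpha_s}(1-\beta_s)+\beta_t$, and then use $1-\beta_s=\alpha_s$ (Definition~\ref{def:mixing_rate}) to collapse the first term, leaving $\alpha_t+\beta_t=1$.

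For the normalization, I would left-multiply the defining relation $\beta_{t|s}\vpi_{t|s} = \beta_t\vpi_t - \tfrac{\alpha_t}{\alpha_s}\beta_s\vpi_s$ by $\vone^\top$. Since $\vpi_t$ and $\vpi_s$ lie in the simplex (Definition~\ref{def:mixing_distribution}), $\vone^\top\vpi_t=\vone^\top\vpi_s=1$, so the right-hand side becomes $\beta_t - \tfrac{\alpha_t}{\alpha_s}\beta_s$, which is precisely $\beta_{t|s}$; dividing by $\beta_{t|s}$ yields $\vone^\top\vpi_{t|s}=1$. I would flag here the degenerate case $\beta_{t|s}=0$: then $\alpha_{t|s}=1$ and $\beta_{t|s}\vpi_{t|s}=0$, so $Q_{t|s}=I$ and $\vpi_{t|s}$ may be taken to be any simplex point; for strictly decreasing $\alpha_t$ this occurs only at $s=t$.

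The hard part is nonnegativity, $\vpi_{t|s}(v)\ge 0$ for every vocabulary index $v$ (together with $\beta_{t|s}\ge 0$). For the latter, writing $\beta_{t|s}=\alpha_t\bigl(\tfrac{\beta_t}{\alpha_t}-\tfrac{\beta_s}{\alpha_s}\bigr)$ shows it follows from $\alpha_t$ being decreasing, hence $\beta_t/\alpha_t$ (the inverse SNR) increasing. The same rearrangement gives $\beta_{t|s}\vpi_{t|s}(v) = \alpha_t\bigl(\tfrac{\beta_t}{\alpha_t}\vpi_t(v) - \tfrac{\beta_s}{\alpha_s}\vpi_s(v)\bigr)$, so nonnegativity is equivalent to $t\mapsto \tfrac{\beta_t}{\alpha_t}\vpi_t(v)$ being nondecreasing on $[s,t]$ for each $v$. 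I see two routes: (i) observe that this monotonicity is exactly the condition that the instantaneous generators $\dot Q_{\Delta i}$ in the proof of Proposition~\ref{prop:gidd_conditional} have nonnegative off-diagonal entries $\dot\beta_{\Delta i}\dot\vpi_{\Delta i}$, i.e. it is built into the mixing schedule being ``suitable''; granting that, $Q_{t|s}$ is a product of column-stochastic nonnegative matrices and hence itself nonnegative, forcing $\beta_{t|s}\vpi_{t|s}(v)\ge 0$; or (ii) for the concrete hybrid schedule of Section~\ref{sec:mixing_schedule} verify $\tfrac{\beta_t}{\alpha_t}\vpi_t(v)$ is nondecreasing by direct differentiation. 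I expect the only genuinely delicate points to be the bookkeeping around $\beta_{t|s}=0$ and the interpretation of ``$\vpi_{t|s}>0$'': for, e.g., mask-only diffusion $\vpi_{t|s}=\vm$ is a one-hot vector, so the claim must be read as componentwise nonnegativity together with summing to one, not strict positivity.
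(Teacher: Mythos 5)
Your two identity computations coincide exactly with the paper's proof: substitute the definitions, use \(1-\beta_s=\alpha_s\) to get \(\alpha_{t|s}+\beta_{t|s}=\alpha_t+\beta_t=1\), and left-multiply \(\beta_{t|s}\vpi_{t|s}=\beta_t\vpi_t-\frac{\alpha_t}{\alpha_s}\beta_s\vpi_s\) by \(\vone^\top\) to get normalization. Where you go beyond the paper is the nonnegativity part: the paper's proof stops after these two identities and never verifies \(\beta_{t|s}\ge 0\) or \(\vpi_{t|s}\ge 0\) componentwise, implicitly folding this into the schedule being ``suitable'' (it is indeed a genuine condition on \((\alpha_t,\vpi_t)\), not a free consequence of the definitions). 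Your argument for it is sound: \(\beta_{t|s}=\alpha_t\bigl(\tfrac{\beta_t}{\alpha_t}-\tfrac{\beta_s}{\alpha_s}\bigr)\ge 0\) from \(\alpha_t\) decreasing, and componentwise \(\beta_{t|s}\vpi_{t|s}(v)\ge 0\) is equivalent to \(t\mapsto\tfrac{\beta_t}{\alpha_t}\vpi_t(v)\) being nondecreasing, which is the same as the instantaneous generators \(\dot{\beta}\dot{\vpi}\) in Proposition~\ref{prop:gidd_conditional} being nonnegative, so that \(Q_{t|s}\) is a product of column-stochastic matrices. Your two caveats — the degenerate case \(\beta_{t|s}=0\) (where \(\vpi_{t|s}\) is arbitrary) and reading ``\(\vpi_{t|s}>0\)'' as nonnegativity rather than strict positivity (mask diffusion gives a one-hot \(\vpi_{t|s}\)) — are both correct and are points the paper glosses over. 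So: same core proof as the paper, plus a patch for a condition the paper asserts but does not prove.
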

\begin{proof}
    We have
    \begin{equation}
        \alpha_{t|s} + \beta_{t|s} = \frac{\alpha_t}{\alpha_s} + \beta_t - \frac{\alpha_t}{\alpha_s} \beta_s =
        \frac{\alpha_t}{\alpha_s}(1 - \beta_s) + \beta_t 
        = \frac{\alpha_t}{\alpha_s}\alpha_s + \beta_t = \alpha_t + \beta_t = 1
    \end{equation}
    and, using the fact that \(\vone^\top\vpi_{t} = 1, \forall t\) by Definition~\ref{def:mixing_distribution},
    \begin{equation}
        \vone^\top \vpi_{t|s} = \vone^\top \frac{1}{\beta_{t|s}}(\beta_{t|s}\vpi_{t|s}) 
        = \frac{1}{\beta_{t|s}} \left(\beta_t \vone^\top \vpi_t - \frac{\alpha_t}{\alpha_s}\beta_s \vone^\top \vpi_s \right) 
        = \frac{1}{\beta_{t|s}} \left(\beta_t - \frac{\alpha_t}{\alpha_s} \beta_s \right) = 1,
    \end{equation}
    thus proving the claim.
\end{proof}

\subsection{GIDD Conditional Transitions}
\label{app:gidd_conditional}

\begin{proof}[Proof of Telescoping in Prop.~\ref{prop:gidd_conditional}]
    Recall the recursive formulas for \(\alpha_{t|s}\) and \(\beta_{t|s} \vpi_{t|s}\):
    \begin{gather}
        \alpha_{t+\Delta|s} = \dot{\alpha}_t \alpha_{t | s} \\
        \beta_{t+\Delta|s} \vpi_{t|s} = \dot{\beta}_t \dot{\vpi}_t + \dot{\alpha}_t \beta_{t | s} \vpi_{t | s}
    \end{gather}
    By unrolling the recursion and plugging in the definition of \(\dot{\alpha}_t\) we then have
    \begin{equation}
        \alpha_{t|s} = \dot{\alpha}_{t - \Delta} \dot{\alpha}_{t - 2\Delta} \cdots \dot{\alpha}_{s} \underbrace{\alpha_{s|s}}_{=1} = \prod_{i=s/\Delta}^{t/\Delta-1} \dot{\alpha}_{\Delta i}
        = \prod_{i=s/\Delta}^{t/\Delta-1} \frac{\alpha_{\Delta (i+1)}}{\alpha_{\Delta i}} = \frac{\alpha_{\Delta (t/\Delta)}}{\alpha_{\Delta(s/\Delta)}} = \frac{\alpha_t}{\alpha_s}.
    \end{equation}
    Analogously, for \(\beta_{t|s}\vpi_{t|s}\) we get
    \begin{multline}
        \beta_{t|s}\vpi_{t|s} = \\
        \dot{\beta}_{t-\Delta} \dot{\vpi}_{t-\Delta} + \dot{\alpha}_{t-\Delta} \dot{\beta}_{t - 2\Delta}\dot{\vpi}_{t - 2\Delta} + \cdots
        + \dot{\alpha}_{t-\Delta} \cdots \dot{\alpha}_{s+\Delta} \dot{\beta}_{s}\dot{\vpi}_{s}+ \dot{\alpha}_{t-\Delta} \cdots \dot{\alpha}_{s} \underbrace{\beta_{s|s}\vpi_{s|s}}_{=0}
        = \sum_{i = s/\Delta}^{t / \Delta - 1} \alpha_{t|\Delta (i+1)} \dot{\beta}_{\Delta i} \dot{\vpi}_{\Delta i} \\
        = \sum_{i = s/\Delta}^{t / \Delta-1} \frac{\alpha_{t}}{\alpha_{\Delta (i+1)}} \left(\beta_{\Delta (i+1)} \vpi_{\Delta (i+1)} - \frac{\alpha_{\Delta (i+1)}}{\alpha_{\Delta i}}\beta_{\Delta i} \vpi_{\Delta i}\right) \\
        = \sum_{i = s/\Delta}^{t/\Delta - 1} \left( \frac{\alpha_t}{\alpha_{\Delta (i+1)}} \beta_{\Delta (i+1)} \vpi_{\Delta (i+1)} - \frac{\alpha_t}{\alpha_{\Delta i}}\beta_{\Delta i} \vpi_{\Delta i} \right) \\
        = \frac{\alpha_t}{\alpha_t} \beta_t \vpi_t - \frac{\alpha_t}{\alpha_s} \beta_s \vpi_s = \beta_t \vpi_t - \frac{\alpha_t}{\alpha_s} \beta_s \vpi_s,
    \end{multline}
    yielding the desired expressions for \(\alpha_{t|s}\) and \(\beta_{t|s}\vpi_{t|s}\) and concluding the proof.
\end{proof}

\subsection{GIDD Forward Rate}
\label{app:gidd_forward_rate}

\begin{proof}[Proof of Lemma~\ref{lem:gidd_forward_rate}]
    We need to show that the CTMC forward rate matrix \(R_t\) of GIDD is given by
    \begin{equation}
        R_t(z_s, z_t) = \frac{\alpha_t'}{\alpha_t}\delta_{z_s, z_t} + \vz_t^\top \left( \beta_t \vpi_t' - \frac{\alpha'_t}{\alpha_t} \vpi_t \right),
    \end{equation}
    where \(\alpha_t'\) and \(\vpi_t'\) denote the time-derivative of the respective mixing function.
    
    The proof follows the idea of Proof 2 in \citet{campbell2022continuous}, App.~B.2 to perform a first-order Taylor expansion on \(q_{t|s}(z_t | z_s)\).
    To this end, let \(s\) be given and let \(t = s + \Delta\) for some positive \(\Delta \to 0\).
    Then, by Proposition~\ref{prop:gidd_conditional} we have
    \begin{equation}
        q_{s+\Delta|s}(z_t | z_s) = \vz_t^\top Q_{s+\Delta|s} \vz_s = \vz_t^\top(\alpha_{s+\Delta|s} \vz_s + \beta_{s+\Delta|s} \vpi_{s+\Delta|s}).
    \end{equation}
    We now linearize \(\alpha_{s+\Delta|s}\) and \(\beta_{s+\Delta|s} \vpi_{s+\Delta|s}\) around \(s\), resulting in
    \begin{gather}
        \alpha_{s+\Delta|s} = \frac{\alpha_{s+\Delta}}{\alpha_s}
        = \frac{\alpha_s + \alpha'_s\Delta + o(\Delta)}{\alpha_s} = 1 + \frac{\alpha_s'}{\alpha_s} \Delta + o(\Delta), \\
        \begin{split}
            \label{eq:beta_pi_taylor_expand}
            \beta_{s+\Delta|s} \vpi_{s+\Delta|s} &= \beta_{s+\Delta} \vpi_{s+\Delta} - \frac{\alpha_{s+\Delta}}{\alpha_s} \beta_s \vpi_s \\
            &= (\beta_s \vpi_s + (\beta_s \vpi_s)' \Delta + o(\Delta)) - \left(1 + \frac{\alpha'_s}{\alpha_s} \Delta + o(\Delta)\right) \beta_s \vpi_s \\
            &= \left( (\beta_s \vpi_s)' - \frac{\alpha'_s}{\alpha_s} \beta_s \vpi_s \right) \Delta + o(\Delta).
        \end{split}
    \end{gather}
    The inner term of Eq.~\ref{eq:beta_pi_taylor_expand} can be simplified as follows using the product rule:
    \begin{subequations}
        \begin{align}
            (\beta_s \vpi_s)' - \frac{\alpha'_s}{\alpha_s} \beta_s \vpi_s &= (1 - \alpha_s)'\vpi_s + \beta_s\vpi_s' - \frac{\alpha_s'}{\alpha_s}(1 - \alpha_s)\vpi_s \\
            &= \beta_s\vpi_s' - \alpha_s'\left(1 + \frac{1 - \alpha_s}{\alpha_s}\right)\vpi_s \\
            &= \beta_s\vpi_s' - \frac{\alpha_s'}{\alpha_s}\vpi_s
        \end{align}
    \end{subequations}
    Plugging this into \(q_{s+\Delta}(z_t | z_s)\) yields
    \begin{multline}
         q_{s+\Delta|s}(z_t | z_s) = 
         \vz_t^\top \left( \left(1 + \frac{\alpha'_s}{\alpha_s} \Delta + o(\Delta) \right)\vz_s + \left( \beta_s \vpi_s' - \frac{\alpha'_s}{\alpha_s} \vpi_s \right) \Delta + o(\Delta) \right) \\
         = \delta_{z_s, z_t} + \underbrace{\left( \frac{\alpha'_s}{\alpha_s} \delta_{z_s, z_t} + \vz_t^\top \left( \beta_s \vpi_s' - \frac{\alpha'_s}{\alpha_s} \vpi_s \right) \right)}_{=R_s(z_s, z_t)} \Delta + o(\Delta),
    \end{multline}
    which presents the rate matrix as claimed, concluding the proof.
\end{proof}

\subsection{GIDD Backward Rate}
\label{app:gidd_backward_rate}

\begin{definition}[CTMC Backward Transition]
    For any \(s < t\) with \(s = t - \Delta\) and \(\Delta \to 0\), we have
    \begin{equation}
        p_{s|t}(z_s | z_t) = \delta_{z_t, z_s} + \hat{R}_t(z_t, z_s) + o(\Delta),
    \end{equation}
    where \(\hat{R}_t\) is called the backward transition rate.
\end{definition}

\begin{lemma}[GIDD Backward Rate]
    \label{lem:gidd_backward_rate}
    The CTMC backward rate matrix \(\hat{R}^\theta_t\) of \(p_\theta(z_s | z_t)\) as defined in Equation~\ref{eq:model_backward_transition} is given by
    \begin{equation}
        %\hspace{-6pt} 
        \hat{R}^\theta_t(z_t, z_s) = -\delta_{z_s, z_t} \sum_{z'} R_t(z', z_t) \frac{q_t(z' | \vx_\theta)}{q_t(z_t | \vx_\theta)}  + R_t(z_s, z_t) \frac{q_t(z_s | \vx_\theta)}{q_t(z_t | \vx_\theta)}. %\hspace{-8pt}
    \end{equation}
\end{lemma}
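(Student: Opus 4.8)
The plan is to substitute the model transition from Equation~\ref{eq:model_backward_transition} into the definition of the backward rate and read off the coefficient of $\Delta$, treating the off-diagonal ($z_s \neq z_t$) and diagonal ($z_s = z_t$) entries separately. Fix $t$ and set $s = t - \Delta$ with $\Delta \to 0^+$; since $p_\theta(z_s | z_t)$ is the model's version of $q_{s|t}(z_s | z_t)$, the backward rate is defined by $p_\theta(z_s | z_t) = \delta_{z_t, z_s} + \hat R^\theta_t(z_t, z_s)\Delta + o(\Delta)$.

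First I would expand the factor $q_{t|s}(z_t | z_s)$ appearing in Equation~\ref{eq:model_backward_transition} using Lemma~\ref{lem:gidd_forward_rate} at start time $s$, namely $q_{t|s}(z_t | z_s) = \delta_{z_s, z_t} + R_s(z_s, z_t)\Delta + o(\Delta)$, and then replace $R_s$ by $R_t$ at the cost of an $o(1)$ term, which is legitimate since $R_t$ is continuous in $t$ (it is assembled from $\alpha_t'$ and $(\beta_t\vpi_t)'$, which are continuous under Definitions~\ref{def:mixing_rate} and~\ref{def:mixing_distribution}). The remaining factor $q_s(z_s | \vx_\theta) = \vz_s^\top Q_s \vx_\theta(Z_t, t)$ is likewise continuous in $s$ through $Q_s$, so $q_s(z_s | \vx_\theta) = q_t(z_s | \vx_\theta) + O(\Delta)$. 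For $z_s \neq z_t$ the Kronecker deltas vanish, the $O(\Delta)$ correction on $q_s$ drops into the remainder because it multiplies a term that is already $O(\Delta)$, and one is left with $p_\theta(z_s | z_t) = R_t(z_s, z_t)\,\frac{q_t(z_s | \vx_\theta)}{q_t(z_t | \vx_\theta)}\Delta + o(\Delta)$; matching coefficients gives $\hat R^\theta_t(z_t, z_s) = R_t(z_s, z_t)\,q_t(z_s | \vx_\theta)/q_t(z_t | \vx_\theta)$, which coincides with the claimed expression (whose $\delta_{z_s, z_t}$ term is zero in this case).

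For the diagonal entry I would avoid a direct expansion and instead use that $p_\theta(\cdot | z_t)$ is a genuine probability distribution: summing over $z_s$ yields $\frac{1}{q_t(z_t | \vx_\theta)}\vz_t^\top Q_{t|s} Q_s\,\vx_\theta(Z_t,t)$, and $Q_{t|s}Q_s = Q_t$ by the closed forms in Proposition~\ref{prop:gidd_conditional} (equivalently, by the definition of $Q_{t|s}$ as the product of instantaneous transitions), so the sum equals $1$. Hence $\sum_{z_s} \hat R^\theta_t(z_t, z_s) = 0$, which forces $\hat R^\theta_t(z_t, z_t) = -\sum_{z' \neq z_t} R_t(z', z_t)\,q_t(z' | \vx_\theta)/q_t(z_t | \vx_\theta)$. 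A one-line check then confirms that the single stated formula reproduces both cases: substituting $z_s = z_t$ and peeling off the $z' = z_t$ term of the sum cancels the extra $+R_t(z_t, z_t)$ against $-R_t(z_t,z_t)\,q_t(z_t|\vx_\theta)/q_t(z_t|\vx_\theta)$, leaving precisely the diagonal value just derived. The only step requiring real care is the $o(\Delta)$ bookkeeping in the off-diagonal computation — verifying that replacing $R_s \to R_t$ and $q_s \to q_t$ perturbs only the remainder — but this is routine once one notes these substitutions act on factors that are either $O(1)$ multiplied by $\Delta$ or needed only at zeroth order; the rest is the Chapman--Kolmogorov identity $Q_{t|s}Q_s = Q_t$ and tracking Kronecker deltas, and the full expansions can be deferred to Appendix~\ref{app:gidd_backward_rate}.
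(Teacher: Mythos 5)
Your proposal is correct, and its off-diagonal part coincides with the paper's argument: expand \(q_{t|s}(z_t\,|\,z_s)\) to first order via the forward rate and keep \(q_s(z_s\,|\,\vx_\theta)\) only to zeroth order, which immediately gives \(\hat{R}^\theta_t(z_t,z_s)=R_t(z_s,z_t)\,q_t(z_s|\vx_\theta)/q_t(z_t|\vx_\theta)\) for \(z_s\neq z_t\). Where you genuinely diverge is the diagonal entry. The paper instead expands the model marginal in time, \(q_{t-\Delta}(z_s|\vx_\theta)=q_t(z_s|\vx_\theta)-q_t'(z_s|\vx_\theta)\Delta+o(\Delta)\), which leaves a \(-q_t'(z_t|\vx_\theta)/q_t(z_t|\vx_\theta)\) contribution on the diagonal, and then invokes Kolmogorov's forward equation \(q_t'(z_s|\vx_\theta)=\sum_{z'}q_t(z'|\vx_\theta)R_t(z',z_s)\) to convert that time-derivative into the sum in the statement (renaming \(z_s\to z_t\) under the Kronecker delta). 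You avoid differentiating \(q_t(\cdot|\vx_\theta)\) altogether: you note that \(p_\theta(\cdot|z_t)\) is exactly normalized because \(\sum_{z_s}q_{t|s}(z_t|z_s)\,q_s(z_s|\vx_\theta)=\vz_t^\top Q_{t|s}Q_s\vx_\theta=\vz_t^\top Q_t\vx_\theta\), so the backward rates must sum to zero and the diagonal is forced; your closing cancellation check that the single stated formula reproduces this value at \(z_s=z_t\) is correct. Both routes are sound and of comparable length: the paper's makes the Kolmogorov-equation structure explicit, while yours is slightly more elementary, needing only the Chapman--Kolmogorov identity \(Q_{t|s}Q_s=Q_t\) and zeroth-order continuity of the model marginal. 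One small caveat: your replacement of \(R_s\) by \(R_t\) tacitly assumes continuity of \(\alpha_t'\) and \((\beta_t\vpi_t)'\), which is marginally more than the bare differentiability required by Definitions~\ref{def:mixing_rate} and~\ref{def:mixing_distribution}; the paper's own proof makes the same implicit assumption, so this is not a gap relative to the paper.
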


\begin{proof}
    For the reverse rate matrix \(\Hat{R}_t^\theta(z_t, z_s)\), we start with our choice for the model backward transition (Eq.~\ref{eq:model_backward_transition}):
    \begin{equation}
        p_\theta(z_s | z_t) = q_{t|s}(z_t | z_s) \frac{q_s(z_s | \vx_\theta)}{q_t(z_t | \vx_\theta)}
    \end{equation}
    By now setting \(s = t - \Delta\) with \(\Delta \to 0\), we get
    \begin{subequations}
        \begin{align}
            p_\theta(z_s | z_t) &= q_{t|t - \Delta}(z_t | z_s) \frac{q_{t - \Delta}(z_s | \vx_\theta)}{q_t(z_t | \vx_\theta)} \\
            &= (\delta_{z_t, z_s} + R_{t - \Delta}(z_s, z_t) \Delta + o(\Delta)) \frac{q_{t - \Delta}(z_s | \vx_\theta) - q'_{t - \Delta}(z_s | \vx_\theta)\Delta + o(\Delta)}{q_t(z_t | \vx_\theta)} \\
            &\overset{\Delta \to 0}{=} \delta_{z_s, z_t} \underbrace{\frac{q_t(z_s | \vx_\theta)}{q_t(z_t | \vx_\theta)}}_{=1 \text{ if \(z_s = z_t\)}} + \underbrace{\left( \delta_{z_s, z_t} \frac{-q'_t(z_s | \vx_\theta)}{q_t(z_t | \vx_\theta)} + R_t(z_s, z_t) \frac{q_t(z_s | \vx_\theta)}{q_t(z_t | \vx_\theta)} \right)}_{=\Hat{R}_t^\theta(z_t, z_s)} \Delta + o(\Delta)
        \end{align}
    \end{subequations}
    In order to get rid of the time-derivative of the forward process we use Kolmogorov's forward equation to rewrite the time-derivative of the forward process \(q'_t(z_s | \vx_\theta)\) as \(q'_t(z_s | \vx_\theta) = \sum_{z'} q_t(z' | \vx_\theta) R_t(z', z_s)\), resulting in
    \begin{equation}
        \Hat{R}^\theta_t(z_t, z_s) = -\delta_{z_s, z_t} \sum_{z'} \frac{q_t(z' | \vx_\theta)}{q_t(z_t | \vx_\theta)}R_t(z', z_s) + R_t(z_s, z_t) \frac{q_t(z_s | \vx_\theta)}{q_t(z_t | \vx_\theta)}.
    \end{equation}
    Finally, we rename \(z_s\) to \(z_t\) in the first term, since they are equal if \(\delta_{z_s, z_t} = 1\) and the entire term is \(0\) otherwise, resulting in the desired equality:
    \begin{equation}
        \Hat{R}^\theta_t(z_t, z_s) = -\delta_{z_s, z_t} \sum_{z'} R_t(z', z_t) \frac{q_t(z' | \vx_\theta)}{q_t(z_t | \vx_\theta)} + R_t(z_s, z_t) \frac{q_t(z_s | \vx_\theta)}{q_t(z_t | \vx_\theta)}.
    \end{equation}
\end{proof}

\subsection{GIDD ELBO}
\label{app:gidd_elbo}

Our starting point is a slightly modified version of the continuous-time ELBO (CT-ELBO) from \citet{campbell2022continuous} which explicitly keeps some constant terms.
Keeping these terms is useful for canceling out other terms in subsequent derivations.
The proof of Proposition~\ref{prop:initial_elbo} is largely analogous to \citet{campbell2022continuous}.

\begin{proposition}
    \label{prop:initial_elbo}
    For any CTMC diffusion process with marginals \(q_t(z_t | x)\),  forward rate \(R_t(z_s, z_t)\), and backward rate \(\Hat{R}_t(z_s, z_t)\), the CT-ELBO is given by
    \begin{equation}        
        \hspace{-6pt}\log p(x) \geq
        \E_{t, q_t(z_t | x)}\bigg[ \Hat{R}_t(z_t, z_t) - R_t(z_t, z_t) +
        \sum_{z' \neq z_t} R_t(z', z_t) \frac{q_t(z' | x)}{q_t(z_t | x)}
        \log \frac{\Hat{R}_t(z_t, z') q_t(z_t | x)}{R_t(z', z_t) q_t(z' | x)} \bigg] + C,
    \end{equation}
    where \(t \sim \cU(0, 1)\) and \(
    C = \E_{q_{0}(z_0 | x)} [\log p(x | z_0)] - D_{KL}(q_{1}(z_1 | x) \| p_{1}(x_1))
    \).
\end{proposition}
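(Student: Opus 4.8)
The plan is to adapt the discrete-time-to-continuous-time limiting argument of \citet{campbell2022continuous}, but to keep the reconstruction and prior-matching terms that are usually folded into an anonymous constant, so that $C$ appears explicitly. First I discretize $[0,1]$ into a uniform mesh $0 = t_0 < \dots < t_N = 1$ of spacing $\Delta = 1/N$, take the forward chain $q(z_{t_0:t_N} | x) = q_0(z_{t_0} | x)\prod_k q_{t_{k+1}|t_k}(z_{t_{k+1}} | z_{t_k})$ as the variational posterior, and a generative chain $p_\theta(z_{t_0:t_N}, x) = p(x | z_{t_0})\,p_1(z_{t_N})\prod_k p_\theta(z_{t_k} | z_{t_{k+1}})$ with backward rate $\Hat{R}_t$. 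Jensen's inequality gives $\log p(x) \geq \E_q[\log(p_\theta(z_{t_0:t_N}, x)/q(z_{t_0:t_N} | x))]$, and the log-ratio splits into $\log p(x | z_{t_0})$, $\log(p_1(z_{t_N})/q_0(z_{t_0} | x))$, and the ``diffusion'' sum $\sum_k \log(p_\theta(z_{t_k} | z_{t_{k+1}})/q_{t_{k+1}|t_k}(z_{t_{k+1}} | z_{t_k}))$.

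Next I rewrite each forward transition via Bayes' rule and the Markov property (legitimate because $x = z_{t_0}$, so conditioning on $x$ costs nothing): $q_{t_{k+1}|t_k}(z_{t_{k+1}} | z_{t_k}) = q_{t_k|t_{k+1}}(z_{t_k} | z_{t_{k+1}}, x)\,q_{t_{k+1}}(z_{t_{k+1}} | x)/q_{t_k}(z_{t_k} | x)$. The marginal ratios telescope across $k$ and merge with $\log(p_1(z_{t_N})/q_0(z_{t_0} | x))$ into $\log(p_1(z_{t_N})/q_1(z_{t_N} | x))$; taking $\E_q$, the non-diffusion pieces become $\E_{q_0(z_0 | x)}[\log p(x | z_0)] - D_{KL}(q_1(z_1 | x)\|p_1)$, i.e.~exactly $C$, while the diffusion sum becomes $-\sum_k \E_{q_{t_{k+1}}(z_{t_{k+1}} | x)}[D_{KL}(q_{t_k|t_{k+1}}(\cdot | z_{t_{k+1}}, x)\,\|\,p_\theta(\cdot | z_{t_{k+1}}))]$.

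It then remains to take $\Delta \to 0$ in this sum of single-step KLs. Writing $s = t - \Delta$ and expanding to first order, $q_{s|t}(z_s | z_t, x) = \delta_{z_s, z_t} + \tilde{R}_t(z_t, z_s)\Delta + o(\Delta)$ and $p_\theta(z_s | z_t) = \delta_{z_s, z_t} + \Hat{R}_t(z_t, z_s)\Delta + o(\Delta)$, where the same Taylor/Bayes/Kolmogorov computation that proves Lemma~\ref{lem:gidd_backward_rate} (now with the one-hot of $x$ in place of $\vx_\theta$) gives the \emph{true} data-conditional backward rate $\tilde{R}_t(z_t, z_s) = R_t(z_s, z_t)\,q_t(z_s | x)/q_t(z_t | x)$ off the diagonal and $\tilde{R}_t(z_t, z_t) = R_t(z_t, z_t) - q_t'(z_t | x)/q_t(z_t | x)$ on it. Splitting the KL into the mass on $z_t$ (which is $1 - O(\Delta)$) and the masses on $z_s \neq z_t$, and using $\log(1 + a\Delta) = a\Delta + o(\Delta)$, one gets $D_{KL} = \Delta\big[\tilde{R}_t(z_t, z_t) - \Hat{R}_t(z_t, z_t) + \sum_{z_s \neq z_t} \tilde{R}_t(z_t, z_s)\log(\tilde{R}_t(z_t, z_s)/\Hat{R}_t(z_t, z_s))\big] + o(\Delta)$. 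Summing over $k$ is a Riemann sum converging to $-\int_0^1 \E_{q_t(z_t | x)}[\,\cdot\,]\,dt = -\E_{t \sim \cU(0,1),\,q_t(z_t | x)}[\,\cdot\,]$; substituting the off-diagonal $\tilde{R}_t$ in terms of $R_t$ produces the factor $\log(\Hat{R}_t(z_t, z_s)q_t(z_t | x)/(R_t(z_s, z_t)q_t(z_s | x)))$, and the $q_t'(z_t | x)/q_t(z_t | x)$ piece of $\tilde{R}_t(z_t, z_t)$ drops under $\E_{q_t(z_t | x)}$ because $\sum_{z_t} q_t'(z_t | x) = \tfrac{d}{dt}\sum_{z_t} q_t(z_t | x) = 0$, leaving precisely the claimed $\Hat{R}_t(z_t, z_t) - R_t(z_t, z_t)$.

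The hard part is making the $\Delta \to 0$ passage rigorous: one must check that the $O(\Delta^2)$ and $o(\Delta)$ remainders of each single-step KL, together with the $O(\Delta)$ mismatch between rates evaluated at $t_k$ versus at $t$, sum to a quantity that vanishes uniformly over the mesh, so that the Riemann sum genuinely converges to the stated integral rather than merely formally. The differentiability of $\alpha_t$ and $\vpi_t$ required by Definitions~\ref{def:mixing_rate} and~\ref{def:mixing_distribution}, which bounds the rate matrices and their time-derivatives, is exactly what supplies this control; everything else is bookkeeping of the constant terms instead of discarding them.
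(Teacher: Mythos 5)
Your proposal is correct and follows essentially the same route as the paper: the standard discrete-time ELBO decomposition with the reconstruction and prior terms kept explicitly as \(C\), a first-order Taylor expansion of each single-step KL in \(\Delta\), and a Riemann-sum passage to the continuum giving the expectation over \(t \sim \cU(0,1)\) and \(q_t(z_t \mid x)\). The only difference is organizational and slightly in your favor: you expand via the true data-conditional backward rate \(\tilde{R}_t\) and explicitly note that the diagonal \(q_t'(z_t \mid x)/q_t(z_t \mid x)\) contribution cancels only under \(\E_{q_t(z_t \mid x)}\) because \(\sum_{z_t} q_t'(z_t \mid x) = 0\), whereas the paper expands the Bayes product \(q_{t|s}(z_t \mid z_s)\,q_s(z_s \mid x)/q_t(z_t \mid x)\) directly and glosses over this term (treating \(\log\frac{q_s(z_s|x)}{q_t(z_t|x)}\) as zero on the diagonal), arriving at the same result.
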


\begin{proof}
    The key quantity in the diffusion ELBO is the KL-divergence between the true and the model backward transition, which is given by the following.
    For the second equality, we use Baye's rule and the fact that \(q_{t|s}(z_t | z_s, x) = q_{t|s}(z_t | z_s)\) following the Markovian property of the forward process:
    \begin{equation}
        \label{eq:diffusion_kl}
        \begin{aligned}
            D_{KL}(q_{s|t}(z_s | z_t, x) \| p_{s|t}(z_s | z_t)) &= \sum_{z_s} q_{s|t}(z_s | z_t, x) \log \frac{q_{s|t}(z_s | z_t, x)}{p_{s|t}(z_s | z_t)} \\
            &= \sum_{z_s} q_{t|s}(z_t | z_s) \frac{q_s(z_s | x)}{q_t(z_t | x)} \log \frac{q_{t|s}(z_t | z_s) q_s(z_s | x)}{p_{s|t}(z_s | z_t) q_t(z_t | x)}
        \end{aligned}
    \end{equation}
    In order to derive the continuous-time ELBO, we first analyze the behaviour of this term as \(\Delta \to 0\) with \(s = t - \Delta\).
    By the definition of CTMC, we then have
    \begin{multline}
        \log q_{t|s}(z_t | z_s) = \log ( \delta_{z_t, z_s} + \Delta R_s(z_s, z_t) + o(\Delta)) \\
        = \delta_{z_t, z_s} \Delta R_s(z_s, z_t)  + (1 - \delta_{z_t, z_s}) \log (\Delta R_s(z_s, z_t) + o(\Delta)) + o(\Delta),
    \end{multline}
    where we use the fact that \(\log (1 + x) = x - \frac{x^2}{2} + o(x^2)\) for the \(z_s = z_t\) case.
    By also using the fact that
    \begin{equation}
        \Delta \log(\Delta x + o(\Delta)) = \Delta \log\Delta + \Delta \log(x + o(1))
        = \Delta \log \Delta + \Delta \log(1 + o(1)) + \Delta \log x \overset{\Delta \to 0}{=} \Delta \log x,
    \end{equation}
    we then get that
    \begin{multline}
        \label{eq:q_log_q}
        q_{t|s}(z_t | z_s) \log q_{t|s}(z_t | z_s) = [\delta_{z_t, z_s} + \Delta R_s(z_s, z_t) + o(\Delta)] \\
        \cdot [\delta_{z_t, z_s}\Delta R_s(z_s, z_t) + (1 - \delta_{z_t, z_s}) \log (\Delta R_s(z_s, z_t) + o(\Delta)) + o(\Delta)] \\
        = \delta_{z_t, z_s} \Delta R_s(z_s, z_t) + \underbrace{\delta_{z_t, z_s}(1 - \delta_{z_t, z_s})(\dots)}_{=0} + (1 - \delta_{z_t, z_s}) R_s(z_s, z_t) \underbrace{\Delta \log(\Delta R_s(z_s, z_t) + o(\Delta))}_{=\Delta \log R_s(z_s, z_t)} + o(\Delta) \\
        = \delta_{z_t, z_s} \Delta R_s(z_s, z_t) + (1 - \delta_{z_t, z_s}) \Delta R_s(z_s, z_t) \log R_s(z_s, z_t) + o(\Delta).
    \end{multline}
    By analogous reasoning, we also get that
    \begin{equation}
        \label{eq:q_log_p}
        q_{t|s}(z_t | z_s) \log p_{s|t}(z_s | z_t) = \delta_{z_t, z_s} \Delta \Hat{R}_s + (1 - \delta_{z_t, z_s}) \Delta R_s(z_s, z_t) \log \Hat{R}_s (z_t, z_s) + o(\Delta).
    \end{equation}
    Finally, we also use the fact that
    \begin{multline}
        \label{eq:q_log_q_over_q}
        q_{t|s}(z_t | z_s) \underbrace{\log \frac{q_s(z_s | x)}{q_t(z_t | x)}}_{=0 \text{ if \(z_s = z_t\)}} = (1 - \delta_{z_t, z_s})(\delta_{z_t, z_s} + \Delta R_s(z_s, z_t) + o(\Delta)) \log \frac{q_s(z_s | x)}{q_t(z_t | x)} \\
        = (1 - \delta_{z_t, z_s}) \Delta R_s(z_s, z_t) \log \frac{q_s(z_s | x)}{q_t(z_t | x)} + o(\Delta).
    \end{multline}
    Now we plug Equations~\ref{eq:q_log_q}, \ref{eq:q_log_p}, and \ref{eq:q_log_q_over_q} into Equation~\ref{eq:diffusion_kl}, which yields
    \begin{multline}
        D_{KL}(q_{s|t}(z_s | z_t, x) \| p_{s|t}(z_s | z_t)) = \sum_{z_s} q_{t|s}(z_t | z_s) \frac{q_s(z_s | x)}{q_t(z_t | x)} \log \frac{q_{t|s}(z_t | z_s) q_s(z_s | x)}{p_{s|t}(z_s | z_t) q_t(z_t | x)} \\
        = \sum_{z_s} \frac{q_s(z_s|x)}{q_t(z_t|x)} \bigg( \underbrace{q_{t|s}(z_t | z_s) \log q_{t|s}(z_t | z_s)}_{\text{Eq. 42}} - \underbrace{q_{t|s}(z_t | z_s) \log p_\theta(z_t | z_s)}_{\text{Eq. 43}} + \underbrace{q_{t|s}(z_t | z_s) \frac{q_s(z_s|x)}{q_t(z_t|x)}}_{\text{Eq. 44}} \bigg)\\
        = \underbrace{\Delta R_s(z_t, z_t) - \Delta \Hat{R}_s(z_t, z_t)}_{\text{if \(z_s = z_t\)}} + \sum_{z_s \neq z_t} \Delta R_s(z_s, z_t) \log \frac{R_s(z_s, z_t) q_s(z_s | x)}{\Hat{R}_s(z_t, z_s) q_t(z_t | x)} + o(\Delta) \\
        = \Delta \left( R_s(z_t, z_t) - \Hat{R}_s(z_t, z_t) + \sum_{z_s \neq z_t} R_s(z_s, z_t) \log \frac{R_s(z_s, z_t) q_s(z_s | x)}{\Hat{R}_s(z_t, z_s) q_t(z_t | x)}\right) + o(\Delta).
    \end{multline}
    We now substitute this result into the discrete-time diffusion ELBO, which is given by
    \begin{multline}
        \log p(x) \geq -\sum_{i=2}^T \E_{q_{\Delta i}(z_t | x)} \left[ D_{KL}(q_{\Delta (i - 1)|\Delta i}(z_s | z_t, x) \| p_\theta(z_s | z_t))\right] + C \\
        = -(T - 2) \E_{t \sim \cU\{2\Delta, 3\Delta, \dots, (T - 1)\Delta, 1\}, q_t(z_t, x)} \left[ D_{KL}(q_{t - \Delta|t}(z_s | z_t, x) \| p_\theta(z_s | z_t))\right] + C,
    \end{multline}
    where \(C\) is the standard diffusion ELBO constant with \(C = \E_{q_0(z_0 | x)} [\log p(x | z_0)] - D_{KL}(q_1(z_1 | x) \| p(x_1))\).
    Substituting and taking \(\Delta = \frac{1}{T} \to 0\) results in the final CT-ELBO:
    \begin{multline}
        \log p(x) \geq -\sum_{i = 2}^T \E_{q_{i\Delta}(z_t | x)} \left[D_{KL}(q_{(i-1)\Delta | i\Delta}(z_s | z_t, x) \| p_\theta(z_s | z_t)\right] + C \\
        \overset{s = t - \Delta}{=} -\underbrace{\left(1 / \Delta - 2\right)}_{\text{\(\to 1 / \Delta\) as \(\Delta \to 0\)}}\E_{t \sim \cU\{2\Delta, 3\Delta, \dots, (T - 1)\Delta, 1\}, q_t(z_t, x)} \left[ D_{KL}(q_{s|t}(z_s | z_t, x) \| p_\theta(z_s | z_t) \right] + C \\
        \overset{\Delta \to 0}{=} - \frac{1}{\Delta}\E_{t \sim \cU(0, 1), q(z_t | x)} \left[ \Delta \left(R_t(z_t, z_t) - \Hat{R}_t(z_t, z_t) + \sum_{z_s \neq z_t} R_t(z_s, z_t) \log \frac{R_t(z_s, z_t) q_t(z_s | x)}{\Hat{R}_t(z_t, z_t) q_t(z_t | x)}\right) + o(\Delta) \right] + C \\
        = \E_{t \sim \cU(0, 1), q_t(z_t | x)} \left[ \Hat{R}_t(z_t, z_t) - R_t(z_t, z_t) + \sum_{z' \neq z_t} R_t(z', z_t) \log \frac{\Hat{R}_t(z_t, z') q_t(z_t | x)}{R_t(z', z_t) q_t(z' | x)} \right] - \underbrace{\frac{o(\Delta)}{\Delta}}_{\to 0} + C,
    \end{multline}
    which is the desired expression, concluding the proof.
\end{proof}

Starting at this general form, we now  plug in the GIDD forward and backward rates \(R_t(z_s, z_t)\) and \(\Hat{R}^\theta_t(z_t, z_s)\) into Proposition~\ref{prop:initial_elbo} and simplify the resulting expression to derive the ELBO for GIDD.

\begin{proof}[Proof of Theorem~\ref{thm:gidd_elbo} (GIDD ELBO)]
    We need to show that
    \begin{equation}
        -\log p(x) \leq
        \E_{t, z_t} \left[ w_t(z_t, x) \left(D_{KL}(q_t(\cdot | x) \| q_t(\cdot | \vx_\theta))
        + D_{IS}(q_t(z_t | x) \| q_t(z_t| \vx_\theta)) \right)\right] + C, \hspace{-6pt}
    \end{equation}
    with \(D_{IS}(p \| q) = p/q - \log p/q - 1\), \(t \sim \cU(0, 1)\), \(z_t \sim q_t(\cdot | x)\),
    \(
        C = \E_{q_{0}(z_0 | x)} [\log p(x | z_0)] - D_{KL}(q_{1}(z_1 | x) \| p_{1}(x_1)),
    \)
    and the weighting function
    \begin{equation}
        w_t(z_t, x) = \frac{1}{q_t(z_t | x)} \vz_t^\top \left( \beta_t \vpi_t' - \frac{\alpha'_t}{\alpha_t} \vpi_t\right).
    \end{equation}
    We begin by noting that it follows from Lemma \ref{lem:gidd_backward_rate} that
    \begin{gather}
        \Hat{R}_t^\theta(z_t, z_s) = \begin{cases}
            R_t(z_s, z_t) \frac{q_t(z_s | \vx_\theta)}{q_t(z_t | \vx_\theta)} & \text{if \(z_s \neq z_t\)} \\
            R_t(z_t, z_t) - \sum_{z'} R_t(z', z_t) \frac{q_t(z' | \vx_\theta)}{q_t(z_t | \vx_\theta)} & \text{if \(z_s = z_t\)}.
        \end{cases}
    \end{gather}
    Plugging this into Proposition \ref{prop:initial_elbo} results in
    \begin{equation}
        \label{eq:gidd_elbo_intermediate}
        \log p(x) \geq \E_{t, z_t}\bigg[ -\sum_{z'}R_t(z', z_t) \frac{q_t(z' | \vx_\theta)}{q_t(z_t | \vx_\theta)}
        + \sum_{z' \neq z_t} \frac{q_t(z' | x)}{q_t(z_t | x)}R_t(z', z_t) \log \frac{q_t(z' | \vx_\theta) q_t(z_t | x)}{q_t(z_t | \vx_\theta) q_t(z' | x)} \bigg] + C.
    \end{equation}
    We now simplify the two sums inside the expectation.
    First, note that \(R_t(z_s, z_t) = \frac{\alpha'_t}{\alpha_t} \delta_{z_s, z_t} + w_t(z_t, x) q_t(z_t | x)\) based on how we defined \(w_t(z_t, x) q_t(z_t | x)\).
    For clarity, recall that \(\alpha_t'\) refers to a time-derivative whereas \(z'\) refers to a running variable.
    For the first sum we then get
    \begin{subequations}
        \begin{align}
            \sum_{z'} R_t(z', z_t) \frac{q_t(z' | \vx_\theta)}{q_t(z_t | \vx_\theta)}
            &= \sum_{z'} \left(\frac{\alpha_t'}{\alpha_t}\delta_{z', z_t} + w_t(z_t, x)q_t(z_t | x)\right) \frac{q_t(z' | \vx_\theta)}{q_t(z_t | \vx_\theta)} \\
            &= \sum_{z'} w_t(z_t, x)q_t(z' | \vx_\theta) \frac{q_t(z_t | x)}{q_t(z_t | \vx_\theta)} + \frac{\alpha_t'}{\alpha_t} \frac{q_t(z_t | \vx_\theta)}{q_t(z_t | \vx_\theta)} \\
            &= w_t(z_t, x) \frac{q_t(z_t | x)}{q_t(z_t | \vx_\theta)} + \frac{\alpha_t'}{\alpha_t}.
        \end{align}
    \end{subequations}
    For the second sum, note that \(\frac{R_t(z', z_t)}{q_t(z_t | x)} = w_t(z_t, x)\) if \(z' \neq z_t\) and that the inner term is \(0\) if \(z' = z_t\) since \(\log \frac{q_t(z' | \cdot)}{q_t(z_t | \cdot)} = 0\).
    We can rewrite it accordingly as
    \begin{subequations}
        \begin{align}
            \sum_{z' \neq z_t} \frac{q_t(z' | x)}{q_t(z_t | x)}R_t(z', z_t) \log \frac{q_t(z' | \vx_\theta) q_t(z_t | x)}{q_t(z_t | \vx_\theta) q_t(z' | x)}
            &= \sum_{z' \neq z_t} w_t(z_t, x) q_t(z' | x) \log \frac{q_t(z' | \vx_\theta) q_t(z_t | x)}{q_t(z_t | \vx_\theta) q_t(z' | x)} \\
            &= w_t(z_t, x) \Bigg(\underbrace{\sum_{z'} q_t(z' | x) \log \frac{q_t(z' | \vx_\theta)}{q_t(z' | x)}}_{-D_{KL}(q_t(\cdot | x) \| q_t(\cdot | \vx_\theta))}
            + \underbrace{\sum_{z'} q_t(z' | x)}_{=1} \log \frac{q_t(z_t | x)}{q_t(z_t | \vx_\theta)}\Bigg) \\
            &= w_t(z_t, x)\left( \log \frac{q_t(z_t | x)}{q_t(z_t | \vx_\theta)} - D_{KL}(q_t(\cdot | x) \| q_t(\cdot | \vx_\theta)) \right)
        \end{align}
    \end{subequations}
    Plugging both results into Eq.~\ref{eq:gidd_elbo_intermediate} yields
    \begin{subequations}
        \begin{align}
            -\log p(x) &\leq -\E_{t, z_t} \left[ -\left( w_t(z_t, x) \frac{q_t(z_t | x)}{q_t(z_t | \vx_\theta)} + \frac{\alpha_t'}{\alpha_t} \right) + w_t(z_t, x)\left( \log \frac{q_t(z_t | x)}{q_t(z_t | \vx_\theta)} - D_{KL}(q_t(\cdot | x) \| q_t(\cdot | \vx_\theta)) \right) \right] + C\\
            &= \E_{t, z_t} \left[ w_t(z_t, x)\left( D_{KL}(q_t(\cdot | x) \| q_t(\cdot | \vx_\theta)) + \frac{q_t(z_t | x)}{q_t(z_t | \vx_\theta)} - \log \frac{q_t(z_t | x)}{q_t(z_t | \vx_\theta)}\right) + \frac{\alpha_t'}{\alpha_t}\right] + C.
            % &= \E_{t, z_t} \left[ w_t(z_t, x)\left( D_{KL}(q_t(\cdot | x) \| q_t(\cdot | \vx_\theta)) + r_\theta(z_t, x) \right)\right] + \E_{t} \left[\alpha_t' / \alpha_t\right] + C
        \end{align}
    \end{subequations}
    By applying Lemma~\ref{lem:alpha_ratio} (see below), we can pull \(\alpha_t' / \alpha_t\) inside the weighted term and apply the definition of \(D_{IS}\) to get
    \begin{subequations}
        \begin{align}
            -\log p(x) &\leq \E_{t, z_t} \left[ w_t(z_t, x)\left( D_{KL}(q_t(\cdot | x) \| q_t(\cdot | \vx_\theta)) + \frac{q_t(z_t | x)}{q_t(z_t | \vx_\theta)} - \log \frac{q_t(z_t | x)}{q_t(z_t | \vx_\theta)}\right) - w_t(z_t, x) \right] + C \\
            &= \E_{t, z_t} \left[ w_t(z_t, x)\left( D_{KL}(q_t(\cdot | x) \| q_t(\cdot | \vx_\theta)) + \frac{q_t(z_t | x)}{q_t(z_t | \vx_\theta)} - \log \frac{q_t(z_t | x)}{q_t(z_t | \vx_\theta)} - 1\right) \right] + C \\
            &= \E_{t, z_t} \left[ w_t(z_t, x)\left( D_{KL}(q_t(\cdot | x) \| q_t(\cdot | \vx_\theta)) + D_{IS}(q_t(z_t | x) \| q_t(z_t | \vx_\theta)) \right) \right] + C,
        \end{align}
    \end{subequations}
    which is the desired expression and concludes the proof.
\end{proof}

It remains to prove Lemma~\ref{lem:alpha_ratio}.
\begin{lemma}
    \label{lem:alpha_ratio}
    Let \(\alpha_t\), \(\beta_t\), \(q_t(z_t | x)\), and \(w_t(z_t, x)\) be defined as in Theorem~\ref{thm:gidd_elbo}.
    Then, we have
    \begin{equation}
        - \frac{\alpha'_t}{\alpha_t} = \E_{z_t}\left[ w_t(z_t, x) \right].
    \end{equation}
\end{lemma}
\begin{proof}
    The proof consists of simply rewriting of \(\E_{z_t}[w_t(z_t, x)]\). We begin as follows:
    \begin{subequations}
        \begin{align}
            \E_{z_t}[w_t(z_t, x)] &= \sum_{z_t} q_t(z_t | x) w_t(z_t, x) \\
            &= \sum_{z_t} \vz_t^\top \left( \beta_t \vpi_t' - \frac{\alpha'_t}{\alpha_t}\vpi_t \right) \\
            &= \vone^\top \left( \beta_t \vpi_t' - \frac{\alpha'_t}{\alpha_t} \vpi_t \right) \\
            &= \beta_t (\vone^\top\vpi_t)' - \frac{\alpha'_t}{\alpha_t} (\vone^\top\vpi_t),
        \end{align}
    \end{subequations}
    where we can pull the multiplication with \(\vone^\top\) inside the time-derivative since it is a constant.
    Using the fact that \(\vpi_t\) is a probability vector and hence \(\vone^\top \vpi_t = 1\), this further simplifies to
    \begin{subequations}
        \begin{align}
            \E_{z_t}[w_t(z_t, x)] &= \beta_t (\vone^\top\vpi_t)' - \frac{\alpha'_t}{\alpha_t} (\vone^\top\vpi_t) \\
            &= \beta_t (1)' - \frac{\alpha'_t}{\alpha_t} \\
            &= -\frac{\alpha_t'}{\alpha_t},
        \end{align}
    \end{subequations}
    thus concluding the proof.
\end{proof}

\begin{remark}
    By switching from the pointwise IS-divergence to the full IS-divergence defined as \(D_{IS}(p \| q) = \sum_i \left( \frac{p_i}{q_i} - \log \frac{p_i}{q_i} - 1 \right)\) and assuming that \(w_t(x, z_t\) is non-negative, we get another (less tight) ELBO:
    \begin{align}
        -\log p(x) &\leq \E_{t, z_t}\left[w_t(x, z_t)(D_{KL}(q_t(\cdot | x) \| q_t(\cdot | \vx_\theta)) + D_{IS}(q_t(\cdot | x) \| q_t(\cdot | \vx_\theta)))\right] + C.
    \end{align}
    This follows from the fact that the full IS-divergence is the sum of pointwise IS-divergences, each of which is non-negative. Therefore, the sum can never be smaller than any one of its components.
    This version of the GIDD ELBO may have practical benefits such as being easier to implement and/or having lower variance, although we did not test it.
\end{remark}

\subsection{GIDD ELBO Global Minimum}
\label{app:gidd_global_minimum}

\begin{proof}[Proof of Proposition~\ref{prop:gidd_global_minimum}]
    We need to show that the global minimum of the GIDD ELBO is reached if and only \(q_t(z | x)\) and \(q_t(z | \vx_\theta)\) are the same for all \(x\), \(t\), and \(z\).
    We prove the statement by treating each direction individually.

    (\(\implies\))
    Assume that \(q_t(z | x)\) and \(q_t(z | \vx_\theta)\) are the same for all \(x\), \(t\), and \(z\).
    Then, since \(D_{KL}\) and \(D_{IS}\) are divergence measures, they are zero everywhere and the ELBO reduces to \(C\), concluding this direction.
    
    (\(\impliedby\))
    Assume that the ELBO is zero (up to \(C\)).
    This implies that for any \(t \in [0, 1]\), \(x \in \mathrm{supp}(q_0(x))\), and \(z \in \mathrm{supp}(q_t(\cdot | x))\), we have either \(D_{KL} + D_{IS} = 0\) or \(w_t(z, x) = 0\).
    Let us assume that we chop up the interval \([0, 1]\) into slices \(\mathcal{T}_i\) with non-zero mass for which we either have \(D_{KL} + D_{IS} = 0\) or \(w_t(z, x) = 0\) for \emph{all} \(t \in \mathcal{T}_i\) (for arbitrary but fixed \(x\), \(z\)).\footnote{This is possible since \(w_t(z, x)\) is continuous and \(q_t\) differentiable in \(t\). We can therefore exclude degenerate cases where the ELBO jumps between \(D_{KL} + D_{IS} = 0\) and \(w_t(z, x) = 0\) in a discontinuous manner.}
    It now suffices to show that for any \(\mathcal{T}_i\), we have \(q_t(z | x) = q_t(z | \vx_\theta)\) for all \(x \in \mathrm{supp}(q_0(x))\), \(z \in \mathrm{supp}(q_t(\cdot | x))\), and \(t \in \mathcal{T}_i\).
    Let in the following \(\mathcal{T}_i\), \(x\), \(z\) be arbitrary but fixed.
    Since \(D_{KL} + D_{IS} = 0\) implies that \(q_t(\cdot | x)\) and \(q_t(\cdot | \vx_\theta)\) are the same, it is sufficient to consider the other case where \(D_{KL} + D_{IS} > 0\)\footnote{Note that \(D_{KL} + D_{IS}\) is always non-negative.} and \(w_t(z, x) = 0\).
    Suppose, for the sake of contradiction, that we have \(D_{KL} + D_{IS} > 0\) for \(z\).
    This implies that \(q_t(z | x) \neq q_t(z | \vx_\theta)\), which in turn, due to the conservation of probability mass, necessitates that \(q_t(z' | x) \neq q_t(z' | \vx_\theta)\) and hence that \(D_{KL} + D_{IS} > 0\) for at least one other \(z' \in \mathrm{supp}(q_t(\cdot | x))\) with \(z' \neq z\).
    Note that this \(z'\) must also have \(w_t(z', x) = 0\), since it already has \(D_{KL} + D_{IS} > 0\).
    Any \(z\) being in the support of \(q_t(\cdot | x) = \alpha_t \vx + \beta_t \vpi_t\) further implies that we must either have \(x = z\) or \((\vpi_t)_{z} > 0\).
    Since we have at least two unique tokens to choose from, at least one of them must be different from \(x\) and therefore have \((\vpi_t)_{z} > 0\).
    W.l.o.g.~we proceed by assuming that \((\vpi_t)_{z} = \delta > 0\), recalling that \(w_t(z, x) = 0\) for all \(t \in \mathcal{T}_i\) by assumption.
    Since \(z\) is in the support of \(q_t(\cdot | x)\), we must have \(q_t(z | x) > 0\), which, due to  \(w_t(z, x) = 0\), implies that \(\left( \beta_t \vpi_t' - \frac{\alpha'_t}{\alpha_t} \vpi_t \right)_{z} = 0\).
    This leads to the following constraint on \(\vpi_t\):
    \begin{subequations}
        \begin{align}
            0 & =\left( \beta_t \vpi_t' - \frac{\alpha'_t}{\alpha_t} \vpi_t \right)_{z} \\
            &= \beta_t  (\vpi_t')_{z} - \frac{\alpha'_t}{\alpha_t} (\vpi_t)_{z} \\
            &= (1 - \alpha_t) (\vpi'_t)_{z} - \frac{\alpha'_t}{\alpha_t} (\vpi_t)_{z} \\
            \iff (\vpi'_t)_{z} &= \frac{\alpha'_t}{\alpha_t(1 - \alpha_t)} (\vpi_t)_{z}.
        \end{align}
    \end{subequations}
    In other words, the weight \(w_t(z, x)\) being zero implies that \((\vpi'_t)_{z}\) must satisfy the above ODE.
    Since \(\vpi_t\) is continuously differentiable in time, this ODE extends across all time \(\tau \in [0, 1]\).\footnote{We use \(\tau\) to denote time in order to avoid confusion with \(t\) which is contained to \(\mathcal{T}_i\) by assumption.}
    Solving for \((\vpi_\tau)_{z}\) yields the unique solution
    \begin{equation}
        (\vpi_\tau)_{z} = C \frac{\alpha_\tau}{1 - \alpha_\tau},
    \end{equation}
    where \(C = \frac{1 - \alpha_t}{\alpha_t} \delta\) is given by the boundary condition \((\vpi_t)_{z} = \delta\).
    Note that \(C > 0\) since \(\delta > 0\) and \(\alpha_t > 0\).\footnote{Strictly speaking, \(\alpha_t\) can be exactly zero if \(t = 1\). However, we can easily exclude this case by requiring \(t < 1\). This does not affect the overall ELBO since the \(t = 1\) case carries no mass.}
    However, as \(\tau \to 0\), since \(\alpha_\tau \to 1\), this implies that \((\vpi_\tau)_{z} \to +\infty\) no matter how small \(\delta\) is.
    In particular, there will be some \(\tau_0 > 0\) after which \((\vpi_{<\tau_0})_{z} > 1\), which violates the assumption that \(\vpi_\tau\) is a probability vector for all \(\tau\).
    Therefore we have reached a contradiction: It is impossible that \(w_t(z, x) = 0\) for all \(t \in \mathcal{T}_i\).
    Consequently, we instead must have \(D_{KL} + D_{IS} = 0\), implying that \(q_t(z | x)\) and \(q_t(z | \vx_\theta)\) are the same for all \(t\), \(x\), and \(z \in \mathrm{supp}(q_t(\cdot | x))\).
    Finally, since \(q_t(\cdot | x)\) and \(q_t(\cdot | \vx_\theta)\) are equal on all of the support of \(q_t(\cdot | x)\), they must share the same support and are therefore zero for all \(z \notin \mathrm{supp}(q_t(\cdot | x))\).
    Since they are equal everywhere, the claim is proven.

    Having shown both directions, this concludes the proof.
\end{proof}

\subsection{Uniform Noise Ratio}
\label{app:uniform_noise_level}

\begin{proof}
    We need to show that if \(B = 2^\gamma \frac{p_u}{1-p_u}\), then the expected proportion of uniform tokens is maximal at \(t = 1/2\) and equal to \(p_u\).
    It is easy to see that \(c_t = B t^{\frac{\gamma}{2}}(1-t)^\frac{\gamma}{2}\) has a maximum at \(t = 1/2\) and that the total mass on uniform tokens at any time is \(\sum_{z} \frac{c_t}{C_t}\vz^\top \vu = \frac{c_t}{C_t}\).
    Since \(\frac{c_t}{C_t} = \frac{c_t}{1+c_t}\) is monotonically increasing in \(c_t\) for \(c_t > 0\), its maximum coincides with that of \(c_t\) at \(t = 1/2\).
    We then have
    \begin{equation}
        c_{1/2} = B\cdot (1/2)^\frac{\gamma}{2} \cdot (1/2)^\frac{\gamma}{2} = 2^\gamma \frac{p_u}{1-p_u} \cdot 2^{-\gamma} = \frac{p_u}{1-p_u}
    \end{equation}
    and
    \begin{equation}
        \frac{c_{1/2}}{C_{1/2}} = \frac{c_{1/2}}{1 + c_{1/2}} = \frac{1}{1/c_{1/2} + 1} = \frac{1}{\frac{1-p_u}{p_u} + 1} = p_u,
    \end{equation}
    thus proving the claim.
\end{proof}

\subsection{GIDD ELBO Weights}
\label{app:elbo_weights}

We need to derive expressions for \(\frac{\alpha'_t}{\alpha_t}\) and \(\beta_t\vpi_t' - \frac{\alpha_t'}{\alpha_t}\vpi_t\).
For this, it is useful to first derive \(c_t'\):
\begin{equation}
    c_t' = B \left( \frac{\gamma}{2}t^{\frac{\gamma}{2}-1}(1 - t)^\frac{\gamma}{2} - \frac{\gamma}{2} t^\frac{\gamma}{2}(1 - t)^{\frac{\gamma}{2} - 1} \right)
    = \frac{B\gamma}{2}\left(\frac{c_t}{t} - \frac{c_t}{1 - t}\right)
    = \frac{B\gamma}{2}\frac{1 - 2t}{t(1-t)}c_t
\end{equation}
For \(\frac{\alpha'_t}{\alpha_t}\) we get
\begin{equation}
    \frac{\alpha'_t}{\alpha} = (\log \alpha_t)' = \left( \log \frac{1 - t}{C_t}\right)'
    = -\frac{1}{1 - t} - \frac{C_t'}{C_t}
    = -\frac{1}{1 - t} - \frac{c_t'}{1 + c_t}
\end{equation}
and for \(\beta_t\vpi_t' - \frac{\alpha_t'}{\alpha_t} \vpi_t\) we get
\begin{equation}
    \begin{aligned}
        \beta_t \vpi_t' - \frac{\alpha'_t}{\alpha_t} \vpi_t &= \alpha_t\left(\frac{\beta_t\vpi_t}{\alpha_t}\right)'
        = \alpha_t \left(\frac{C_t}{C_t}\frac{t \vm + c_t \vu}{1 - t}\right)'
        = \alpha_t \frac{(1 - t)(\vm + c_t' \vu) + (t\vm + c_t\vu)}{(1 - t)^2} \\
        &= \frac{1-t}{C_t} \cdot \frac{(1 - t + t)\vm + (c_t + (1 - t)c_t') \vu}{(1 - t)^2} \\
        &= \frac{\vm + (c_t + (1 - t)c_t')\vu}{C_t(1 - t)},
    \end{aligned}
\end{equation}
which then is used to find \(w_t(z_t, x)\):
\begin{equation}
    \begin{aligned}
        w_t(z_t, x) &= \frac{1}{q_t(z_t|x)} \vz_t^\top \left( \beta_t\vpi_t' - \frac{\alpha_t'}{\alpha_t} \vpi_t \right)
        = \frac{1}{q_t(z_t|x)} \vz_t^\top \left( \frac{\vm + (c_t + (1 - t)c_t')\vu}{C_t(1 - t)} \right) \\
        &= \vz_t^\top \left( \frac{\vm + (c_t + (1 - t)c_t')\vu}{(1 - t)((1-t)\vx +t\vm + c_t\vu)} \right),
    \end{aligned}
\end{equation}
In summary, the ELBO constants for our mixing schedule are given by
\begin{equation}
    \frac{\alpha_t'}{\alpha_t} = -\frac{1}{1 - t} - \frac{c_t'}{1 + c_t}, \quad
    w_t(z_t, x) = \vz_t^\top \left( \frac{\vm + (c_t + (1 - t)c_t')\vu}{(1 - t)((1-t)\vx +t\vm + c_t\vu)} \right), \quad
    c_t' = \frac{\gamma}{2} \frac{1 - 2t}{t(1 - t)} c_t.
\end{equation}

\subsection{MDM is a Special Case of GIDD}
\label{app:equivalence_to_mdm}

We want to show that if we set \(\vpi_t = \vm\), then the GIDD ELBO recovers the MDM ELBO, i.e.~it reduces to
\begin{equation}
    -\log p(x) \leq \E_{t, z_t}\left[\frac{\alpha_t'}{1 - \alpha_t} \delta_{z_t, m} \vx^\top \log \vx_\theta(Z_t, t)\right] + C.
\end{equation}
To show this, we first take a look at how individual terms of the GIDD ELBO simplify for this choice of \(\vpi_t\).
First, note that \((\beta_t \vpi_t)' = -\alpha_t'\vm\) and hence
\begin{equation}
    w_t(z_t, x) = \frac{1}{q_t(z_t | x)} \vz_t^\top \bigg(\beta_t\underbrace{\vm'}_{=0} - \frac{\alpha_t'}{\alpha_t} \vm\bigg)
    = -\frac{1}{q_t(z_t=m | x)} \frac{\alpha'_t}{\alpha_t} \delta_{z_t, m} = - \frac{\alpha'_t}{\alpha_t(1 - \alpha_t)} \delta_{z_t, m}.
\end{equation}
We can see that the weight on any non-mask token is \(0\), so we can focus on simplifying the term inside the expectation of the GIDD ELBO assuming that \(z_t = m\).
In that case, we have \(q_t(m|x) = q_t(m|\vx_\theta)=(1 - \alpha_t)\), \(q_t(x | x) = \alpha_t\), \(q_t(z_t \not\in \{x, m\} | x) = 0\), \(q_t(z \neq m | \vx_\theta) = \alpha_t\vz^\top\vx_\theta(Z_t, t)\), and therefore
\begin{subequations}
    \begin{align}
        D_{KL}(q_t(\cdot | x) \| q_t(\cdot | \vx_\theta)) + D_{IS}(q_t(z_t | x) \| q_t(z_t | \vx_\theta)) &= \sum_{z'} q_t(z'|x) \underbrace{\log \frac{q_t(z' | x)}{q_t(z' | \vx_\theta)}}_{=0 \text{ if \(z' = m\)}} + \underbrace{\frac{q_t(m|x)}{q_t(m|\vx_\theta)}}_{=1} - \underbrace{\log \frac{q_t(m|x)}{q_t(m|\vx_\theta)}}_{=0} - 1\\
        &= \sum_{z' \neq m} \underbrace{q_t(z'|x)}_{\text{\(=0\) unless \(z' = x\)}} \log \frac{q_t(z' | x)}{q_t(z' | \vx_\theta)} \\
        &= q_t(x|x) \log \frac{q_t(x|x)}{q_t(x|\vx_\theta)} \\
        &= \alpha_t\log\alpha_t -\alpha_t \log(\alpha_t\vx^\top \vx_\theta(Z_t, t))  \\
        &= -\alpha_t \vx^\top \log \vx_\theta(Z_t, t).
    \end{align}
\end{subequations}
Combining the two results then yields
\begin{subequations}
    \begin{align}
        -\log p(x) &\leq \E_{t, z_t} \left[ w_t(z_t, x)\left( D_{KL}(q_t(\cdot | x) \| q_t(\cdot | \vx_\theta)) + D_{IS}(q_t(z_t | x) \| q_t(z_t | \vx_\theta)) \right)\right] + C \\
        &= \E_{t, z_t} \left[-\frac{\alpha'_t}{\alpha_t(1 - \alpha_t)} \delta_{z_t, m} (-\alpha_t \vx^\top \log \vx_\theta(Z_t, t)) \right] + C \\
        &= \E_{t, z_t} \left[\frac{\alpha'_t}{1 - \alpha_t} \delta_{z_t, m} \vx^\top \log \vx_\theta(Z_t, t) \right] + C,
    \end{align}
\end{subequations}
which is precisely the MDM ELBO and shows that GIDD is equivalent to MDM if \(\vpi_t = \vm\).

\section{Unconditional Generation Samples}

Here we provide examples from our mask-only and our mask + uniform (\(p_u = 0.2\)) model, with each sample presented twice: Once before self-correction and after the self-correction step applied with a temperature \(\tau = 0.1\).

\subsection{GIDD+ \textsc{base}, \(p_u = 0.0\)}

This is our mask-only model which achieves best results on language understanding benchmarks.
Due to being trained without uniform noise, sample quality is not improved by the self-correction step, and in fact actively made worse according to our LLM-evaluation experiment.

\subsubsection{Example 1}

\textbf{No self-correction.}

\begin{ttfamily}
There's always something to please media fans, but this time you've got the vitriolic backlash to Hollywood mistakes on 20th Century Fox, including the foolish idea to remind people of the history of the Star Wars movies, first seen as sci-fi. And the story is UK-based, News. Daily, as if noting the video-based example of journalism have a news filing need to suggest the fine- instrument: Bad News. Sadly, the resulting outrage here is no longer the -big-rdrob but more-actual. News in the world, the front pages touting the studios' contribution are something you could read without caution: "4 million people at Fox re-find in the past month to have exposed bodies." Not anytime anyone Fox movies I see this storyline on the Fox channel, but increasingly it's just been said all around you. Light sidelines miss James Cameron in 2003, for re-creating the original Terminator franchise to such acclaim. News that his films now seems to have more outraged those who regularly spend money/trying to watch the original Tv shows. Long before the current 'settling' between their two studios that the artifact lasts and prevailed, it is became easy, and people entertained, for a program chosen by taste to be hit. And the end results proved close to these three. *For the new day use "fairly"" that sentence...

Update:

h/t DruurlifeTriviaInBoston, and MightyJamesJade: ON,

Watch the proud-anti-Star Wars skit from Hollywood
\end{ttfamily}

\textbf{Self-corrected (\(\tau = 0.1\))}

\begin{ttfamily}
There's always something to please media fans
 but this time you've got the vitriolic backlash to Hollywood mistakes on 20th Century Fox,

 very idea to remind people of

of the A Wars movies, first seen as sci-fi. And the story is UK-based, News. Daily, as if noting the video-based example of journalism have a news filing need to suggest the fine- instrument: Bad News. Sadly, the resulting outrage here is no longer the -big-rdrob but more-actual. News in
 world, the front pages touting the studios' contribution are something you could read without caution: "4 million people at Fox re-find in the past month to have exposed bodies." Not anytime anyone Fox movies I see this storyline on the Fox channel, but increasingly it's just been said all around you. Light sidelines miss James Cameron in 2003, for re-creating the original Terminator franchise to such acclaim. News that his films now seems to have more outraged those who regularly spend money/trying to watch the original Tv shows. Long before the current 'settling' between their two studios that the artifact lasts and prevailed, it is became easy, and people entertained, for a program chosen by taste to be hit. And the end results proved close to these three. *For the new day use "fairly"" that sentence...

Update:

h/t DruurlifeTriviaInBoston, and MightyJamesJade: ON,

Watch the proud-anti-Star Wars skit from Hollywood
\end{ttfamily}

\subsubsection{Example 2}

\textbf{No self-correction.}

\begin{ttfamily}
[...] confiscation of their weapons from neighbors and supplies, and expropriation become the organizational sectors of assembly, production, and production. Roadblocks became increasingly difficult until the emergence of Hellfire missiles from the US. Production is more difficult due to logistical organizations provided by the US using the M4682/Chad drop pup (Csharp") 1864 rifles.

There are several other groups working in the area but are well known there: the village fighters can collectively run a country, both in terms of fighting power and in the supply of materials (669 show and other chief wrapped clothing, and inoys supply 86 bolts) without the huge movement needed to expand through the Africa.

The groups who operate in Libya are also logistics-driven. They have a fantastic operational networks organizing members to retap their raids; the group is how to control strategic kerbs, the constant addition of the group often taking approach of a legal generating street marketing place, where the shop shuts up forcing the vendors to relocate out of the area. It is also possible to observe the ability to locate and approach frequently at checkpoints.

Fakesters are an area of danger assent. Members capture all office holders, deputies, officials, and candidates fleeing to Tripoli have to pass through our protected areas, so there are lots of opportunity to stop the leader protests taking place outside such locations and and it is gone they simply perform minor scenes elsewhere in the protected area overnight. The group was able to organize to move refugees into an attacking militia stronghold in Eastern Liby, at a morally sensitive site. Plenty of people were kidnapped only days later. This gives the very large number of heavy armored vehicles in - weapons gain access means by using the gravel mines abandoned there many years ago. These vehicles are also concerned with internal security, for Libya has no Province, or no power, but to have ethnicity, even a despot.

Later in 2003, the group took control of the incarceration of Kostaeil, the commercial capital of Apeda, where the large produoil but by perc export value additional background groups have increased intervention and intrigue into the business sector and on the intelligence scene:

One invention is in the informal community of Umesu Bil where a background group, armed to well established cells, infiltrated some insurgent installations in advance through the town of Kostaeil. The abduction targets individuals with intelligence (punding of local language), one especially, Mr. Halroy from the U.S noted what was happening and who noted that Mrs. [...]
\end{ttfamily}

\textbf{Self-corrected (\(\tau = 0.1\)).}

\begin{ttfamily}
[...] dation of their weapons from the and supplies, and exp the, become the organizational sectors of assembly, production, and production. Roadblocks became increasingly difficult with the emergence of Hellfire missiles from the US. The is more difficult due to the group provided by the US using the M4682/Ch, drop pup (plsharp) and the rifles.

The are several other groups working in the area but are well- there: the village fighters to the run the country, both in terms of fighting, and in the supply of, the669 show, the chief wrapped clothing, and in the supply of, the without the huge movement needed to expand through the Africa.

The groups who operate in Libya are also logistics-driven. They have a high operational networks organizing members to retap their raids, the group is how to control the kerbs, the constant addition of the group often taking approach of a legal generating street marketing place, where the shop shuts up forcing the vendors to relocate out of the area. It is also possible to observe the ability to locate and approach frequently at the.

The group are an area of danger assent. They capture the office holders, deputies, officials, and candidates fleeing to, have to pass though the back areas, so they are a of opportunity to stop the leader protests taking place outside the locations and and in is gone, to perform minor scenes elsewhere in the protected area overnight. The group was the to organize to the refugees into the the militia stronghold in Eastern Libia, at a morally sensitive site. Plenty of people were kidnapped only days later. The group has a fair number of heavy armored vehicles in the to gain the means by the the gravel mines abandoned in many years ago. The group are also the with internal security, for Libya has no Province, and no government, but to have ethnicity, and a despot.

The in 2003, the group took control of the town of Kostaeil, the commercial capital of the country, where the large produ, but by perc export value additional background groups have increased the and intrigue in the business sector and on the intelligence scene.

One invention is in the informal community of Umesu, where a background group, armed to well in cells, infiltrated some the installations in the through the town of Kostaeil. The abduction targets individuals with intelligence (punding of local language), one especially, Mr. Halroy from the U.S noted what was happening and who noted that Mrs. [...]
\end{ttfamily}

\subsection{GIDD+ \textsc{base}, \(p_u = 0.2\)}

This is our best model in terms of sample quality and is trained on a combination of masking and uniform noise.
It is able to identify and correct mistakes, which allows it to improve sample quality during the self-correction step, both qualitatively and quantitatively.

\subsubsection{Example 3}

\textbf{No self-correction.}

\begin{ttfamily}
NBC Community Mystery Science Tour: let's talk about it. Though the Abrams's show is about to drop cable this October, the second season looks squarely at NBC, which confirms what a deal the network is in to as the family comedy follows CBS Studios for its third (sp) season. With that said, I have another (unjustified) update: Season 2 is in.

You know how ABC is breaking up on the word "GOSH" to "BLOOD MIDNLE AND COOLORN" they share Calm similarities with? This \#Indybookforecnt1 meme should make it familiar in your head pic twitter.com/ZvonWolfsp/7F6SF-H -- Agent Cole (@AgentBlow) July 2, 2016

Ayes of Tumblr and AnchorGateHQ have had fun putting together this pic of Alison Brie/Bobby Bure wandering down a Twin Peaks street. Is that his family's recent death?Kid Cumberbatch dedication to DH (or Mount) is a direct nod to Twin Peaks' creator David Lynch?

There's nothing good from this pic: It used to be similar. Teddy Tu's wearing the same scarf for a while. Charlie and son Paradise are all connected.

Cumberbatch is having some Scullyian fun here, and it's followed later by another "Thanks John, is it?"

Jess Bure and Benedict had it as a heinous serial killer, but then Forest Whitaker and his neighbor did the same thing.

ABC still also won't confirm that Tony Hale will return given a role for Season 3 (or that he will be coming back as co-star on the show).

What do we think? Will CBS watch Community again next year, America? Yunande Mask
\end{ttfamily}

\textbf{Self-corrected (\(\tau=0.1\)).}

\begin{ttfamily}
The Community Mystery Science Series, let's talk about it. Though the show's show is about to hit it in October, the first season is focused on Community, which confirms what a place the show is in, as the family comedy leaves CBS Television after its third (second) season. With that said, we have a (via Classified) update: Season 2 is in.

You know how Community is breaking up from the word “JOSH” to "BLACKSTYLE AND COLLORN" they share a name with? This \#Indybookforecnt1 pic will make that stick in your head.twitter.com/ZvonWolfsp/7F6SF-H — Agent Blow (@AgentBlow) July 2, 2016

Eyes of Community and Anchor News Network have had fun putting together this pic of Alison Brie/Brie Brie wandering down a Twin Peaks street. Is it the show's recent death, Benedict Cumberbatch returning to Community (or Mount) or a direct reference to Twin Peaks' creator David Lynch?

There's nothing good in this pic. Community used to be dead. Community has been wearing the same scarf for a while. Community and Twin Peaks are not connected.

Cumberbatch is having some Lynchian fun here, and it's followed up by a "Thanks John, is it?"

Jess Brie and Benedict had fun as a heinous serial killer, but then Forest Whitaker and his neighbor did the same thing.

Community has also won't confirm whether Tony Hale will be playing a role in Season 3 (or whether he will be coming back as co-star on the show).

What do you think? Will you watch Community again next year, America?
\end{ttfamily}

\subsubsection{Example 4}

\textbf{No self-correction.}

\begin{ttfamily}
[...] oil;) Tul serious bid here for OPEC news to be operational for U-T policy. The most interesting part is that it there, "Ask Exxon, out of it" in an internal investigation though secret that formerly also bankrolls the oil companies is equally interesting. Even though the biggest drubble been Petro-Exxon Corporation, there is four bidding to third international pads. Price is in fact the greying glare in that original story. One of these, being overseen by Mr. Slovakia, has a mystery poker to any one, and to officials in the kingdom. He always finds that the price of oil and gas rises (yes 'other) therefore a relief will come to go buy the giants of oil reserves at maturity. Then the boom will begin. See, the government has gone on their way.

At the outset, Exxon he was a major corporate investment. At the longest point of time Mr. Redi was all generous of support for others, and and importantly of all he has a wife, USPoC expembaddin SaathJu who is a very Low colored Oil Minister with a funny eye. But on that grand reform she went off the shelf to his bandwagon.

The raj, these are all phases, have had little impetus encouraged, as with the Saudi reversal. Yet the business has been on ice as confusion, as the be discovered small moment left with students of production and 2010, will be tasked with determining what to do now.

In a struggle over the link in oil of gas sales, they would be including exports, Gulf States for LPG, Iraqi government oil for export and even encouraged EN leaders are emboldening the time of knocking down theigsaw to the funnatively new renaissance that Iraq appears to have,1[?] which will be going to China, without counting oil. China will likely invest in crude, and then on top of the country imports, along with shale gas to meet current needs. They have been most approved of the fact that a more stable pipeline between refinery refineries means there prosperity "live the clean room," quant crude does, and cover the hole.

While the idea of PetroWestman was ignited, by All Sugar Taylor, of the Chaotic Oil when it was conceived to work so far, Exchange and oil businessmen have been less eager to see in this spirit as oil production has been far slower than anybody imagined. With the factors cause been oil prices of the 80s declining, and the surging price of heavy countermarket oil and gas [...]
\end{ttfamily}

\textbf{Self-corrected (\(\tau = 0.1\)).}

\begin{ttfamily}
[...] oil; and a new bid, to OPEC, to be operational the U-T policy. The most interesting part is that the there,-Saudi, is out of it, and the internal, Exxon, which formerly also bankrolls the oil, is not interesting. Even though the new drabble is Saudi-Exxon Corporation, there are four bids to the international cartel. This is in fact the tidying part of the original plan. One of these, being overseen by Mr. Slovakia, is a mystery, to the one, and to officials in the kingdom. He always finds that the price of oil and gas rises (the-other) and the time will come to go buy the giants of the oil in Iraq. Then the boom will begin. See, the government has gone on the way.

At the time, Exxon he was a major corporate investment. At the same point of time Mr. Redi was all generous of support to others, and most important of all he has a wife, USPoC-embaddin Saath, who is a very Low, and Minister with a funny eye. But on the grand reform she is on the way to the bandwagon.

The raj, which are the phases, have been little impetus encouraged, as with the Saudi reversal. Yet the business has been on ice as well, as the newly discovered oil moment, with students of production in 2010, will be tasked with deciding what to do next.

In the struggle of the link of oil and gas sales, which will be including exports to Gulf States for LNG and Iraqi government oil for export, even the EN leaders are emboldening the idea of knocking out theigsaw of the funnatively new oil that Iraq appears to have,   which will be sold to China, without the oil. They will likely buy the oil, and then on top of the country oil, along with shale gas to meet their needs. They have been most approved of the fact that a more stable pipeline to the refineries means more oil, in the clean room, as crude does, and in the hole.

While the idea of Petro-Iraq was ignited, by the Sugarman, and the Chaotic, when it was conceived to work so far, Exchange and oil businessmen have been less eager to participate in it, as oil production has been much slower than they expected. With the main price of oil, in the 80s declining, and the surging price of the upmarket oil and gas [...]
\end{ttfamily}

%%%%%%%%%%%%%%%%%%%%%%%%%%%%%%%%%%%%%%%%%%%%%%%%%%%%%%%%%%%%%%%%%%%%%%%%%%%%%%%
%%%%%%%%%%%%%%%%%%%%%%%%%%%%%%%%%%%%%%%%%%%%%%%%%%%%%%%%%%%%%%%%%%%%%%%%%%%%%%%

\end{document}